%
%

\documentclass[chap]{thesis}

\captionsetup{format=plain,justification=centering}

               
\begin{document}

%
    
%
\thesistitle{\bf Computational Analysis of Deformable Manifolds: from Geometric Modeling \\ to Deep Learning}        
\author{Stefan C. Schonsheck}        
\degree{Doctor of Philosophy}        
\department{Mathematical Sciences} 
     
\signaturelines{4}     
\thadviser{Rongjie Lai}
\memberone{Gregor Kovacic}        
\membertwo{John E. Mitchell}        
\memberthree{Birsen Yazici}

\submitdate{[August 2020]\\ Submitted June 2020}        
\copyrightyear{2020}   

%
\titlepage     
\copyrightpage         
\tableofcontents        
\listoftables          
\listoffigures         

 
\specialhead{ACKNOWLEDGMENT}
 
There are more people who deserve credit and thanks for supporting and inspiring me than I can list. I could not have possibly done this with you. Thank you all.

I would be remiss not to mention several individuals who have been especially important in my intellectual development. Special thanks to: My parents, Drs. Janet Coy and Jonathan Schonsheck, who've always taught me the importance of education and supported all of my endeavors. To my brother, who's sibling rivalry and companionship has always inspired me. And to our family friend and mentor Varda Holland-Witter, who taught me thinking diffidently isn't always thinking wrongly. 

My high-school physics teacher Ms. Liutzi proved to me that being nerdy was cool and that math had uses beyond math classes. At Skidmore College Dr. David Vella encouraged a play-full approach to problem-solving in mathematics that I've tried to emulate ever since. There also, Dr. Rachel Roe-Dale introduced me to applied mathematics with her wonderful lectures in both differential equations and numerical algorithms. I am forever grateful. 

Every single professor in the mathematics department that I've interacted with at RPI were wonderful. This goes doubly for all of the members of my dissertation committee: Dr. Gregor Kovacic, Dr. John Mitchel, and Dr. Birsen Yazici who have been great teachers, mentors, and supervisors at different times during my time in Troy.  My frequent chats with the department securities, Ms. Dawnmarie Robens and Ms. Erin Lynch, have always been great pick-me-ups on though days. I have been extremely privileged to know you all. 

The Institute for Pure and Applied Math (IPAM) has been extremely generous in supporting my attendance at their programs and gave me a chance to interact with some truly amazing researchers. In particular, conversations with Dr. Michael Bronstein, Dr. Hongkai Zhao, and Dr. Pablo Su\`{a}rez-Serrato have guided and inspired me. Early sections of this work could not have been completed without the NSF, specifically in part by NSF DMS--1522645 and later sections with support from the IMB AI-Horizons program and in particular Dr. Jie Chen who's advice on math and writing I deeply value. 

Finally, I'd like to thank my advisor, Dr. Rongjie Lai whose been one of the best mentors I've ever had. His patience and encouragement have been invaluable and I could never have hoped to complete this thesis without him. I hope this work makes him proud.

 
\specialhead{ABSTRACT}
 
Leo Tolstoy opened his monumental novel Anna Karenina with the now famous words: 
\begin{quote}
    Happy families are all alike; every unhappy family is unhappy in its own way 
\end{quote}
    
A similar notion also applies to mathematical spaces: Every flat space is alike; every unflat space is unflat in its own way. However, rather than being a source of unhappiness, we will show that the diversity of non-flat spaces provides a rich area of study. 

The genesis of the so-called 'big data era' and the proliferation of social and scientific databases of increasing size has led to a need for algorithms that can efficiently process, analyze and, even generate high dimensional data. However, the \emph{curse of dimensionality} leads to the fact that many classical approaches do not scale well with respect to the size of these problems. One technique to avoid some of these ill-effects is to exploit the geometric structure of coherent data. In this thesis, we will explore geometric methods for shape processing and data analysis.

More specifically, we will study techniques for representing manifolds and signals supported on them through a variety of mathematical tools including, but not limited to, computational differential geometry, variational PDE modeling and deep learning. First, we will explore non-isometric shape matching through variational modeling. Next, we will use ideas from parallel transport on manifolds to generalize convolution and convolutional neural networks to deformable manifolds. Finally, we conclude by proposing a novel auto-regressive model for capturing the intrinsic geometry and topology of data. Throughout this work, we will use the idea of computing correspondences as a though-line to both motivate our work and analyze our results

One of the advantages of working in this manner is that questions which arise from very specific problems will have far reaching consequences. There are many deep connections between concise models, harmonic analysis, geometry and learning that have only started to emerge in the past few years, and the consequences will continue to shape these fields for many years to come. Our goal in this work is to explore these connections and develop some useful tools for shape analysis, signal processing and representation learning.    

\chapter{INTRODUCTION}

\section{Motivation}

Both recent and long term advances in data acquisition and storage technology have led to the genesis of the so-called 'big data era'. The proliferation of social and scientific databases of increasing size has lead to a need for algorithms that can efficiently process, analyze and, even generate this data. However, due to the large number of observations (volume) of data, and the number of variables observed (dimension), many classical approaches from traditional signal processing and statistics are not computationally feasible in this regime. The field of Geometric Data Processing has been developed as a way to exploit inherent coherence in data to design new algorithms based on motivation from both differential and discrete geometry. These techniques can be put into two broad classes: those which seek to generalize existing Euclidean methods for application on manifolds, and those which seek to incorporate geometric structure of data into standard problems. Generally speaking, methods in the first class model problems on a single (possibly deforming) manifold, while in the second class each data point is modeled as a single point drawn from a high-dimensional probability distribution which is supported only on some low dimensional structure. In this thesis we will explore several methods in each of these genres.

\section{The Curse of Dimensionality and Manifold Hypothesis}\label{sec:Curse}

The \emph{Curse of Dimensionalality}~\cite{bellman1956dynamic} is an umbrella term for a set of related phenomena in high-dimensional mathematics and data science in which there is some undesirable scaling with respect to the dimension of the data. This scaling may be in the time or memory complexity of certain algorithms or in the number of observations needed to approximate a given quantity. Generally, this is caused by the fact that the volume of an $n$-dimensional manifold increases exponentially with the dimension $n$. For a concrete example if we would like to uniformly sample a unit cube in one dimension, with a resolution of $.01$ we need $100$ points, to do so on a 5-dimensional cube we would need $1,000,000,000$ points (in general we need $10^{2d}$ points on a $d$-dimensional cube to have a resolution of .01). 

Similarly the Hughes phenomenon \cite{hughes1968mean} (or peaking paradox) observes that in pattern recognition increasing the detail at which a measurement is made often leads to poorer results. For example, increasing the resolution of the camera in a photo-recognition system may lead to the system making more errors. These types of errors can often be related to the Vapnik-Chervonenkis dimension (VC-dimension) of the problem \cite{vapnik2015uniform}; by increasing the number of pixels in the representation the space of function that the model must deal with will become more complex, even though the conceptual idea which the system has to predict (or learn) has not changed.

One way to overcome this challenge is to use the coherence of the data to reduce the complexity of the problem. A commonly held belief in data science, known as the \emph{manifold hypothesis}~\cite{demers1993non, roweis2000nonlinear,tenenbaum2000global,Belkin2003, zomorodian2005computing, fefferman2016testing}, states that real-life data often lies on, or at least near, some low-dimensional manifold embedded in a high-dimensional ambient space. This motivates us to develop algorithms that exploit this structure. Doing so allow us to reduce the dependence of our methods on the embedded dimension of the data in favor of algorithms that depend on the intrinsic dimension instead. This is useful not only because of the reduction of the size of the problem, but also because it will allow us to develop methods that are invariant to certain transformations which are common in real world applications.
    
\section{Manifold Structured Data}\label{sec:ManfioldStructuredData}

The advent of modern imaging technologies such as 3D cameras, CT, and MRI scanners as well as 3D animation and computer graphics has lead to the creation of many 'shape'-based data sets. In this context each shape is often referred to as its own \textit{data set}, with each point on the shape being called a \textit{data point}. 
Frequently, these data-sets are modeled and stored as triangulated meshes, but point clouds, level sets, and voxelized representations are also common in practice. In any case, there are several fundamental tasks that are necessary for more complex analysis. These primary tasks are:
\begin{itemize}
    \item Recognition: Determining what type of object a given data set represents
    \item De-noising: Recovery of the underlying shape from errors made in the observation or generation of the data
    \item Segmentation: Separation of points within the data into meaning classes  
    \item In-paining: Creation of new data points in areas that are not observed in the initial data acquisition
\end{itemize}
In section \eqref{sec:MotivatingProblem} we will show that each of these tasks can be accomplished by solving an even more fundamental problem: correspondence. In essence, if we can find a geometrically meaningful map between an unknown shape and a given reference shape then we only need to solve the primary problem once on the reference shape, then use the mapping to solve it on a new shape. 
 
\section{Data Manifolds}

As mentioned in section \eqref{sec:Curse} the manifold hypothesis says that most real life data lies on or near some underlying manifold which has a much lower dimension that of the full observation space. To motivate this, we use the example of natural images. For a fixed pixel resolution, there are many configuration which 'look like' images which might be captured by a camera in the real world. However, most possible combinations of pixels do not 'look like' anything more than static or noise. This suggests that the set of natural images is a low dimensional subset of the entire pixel space. Given a natural image, there many nearby examples which also look like natural images. For example, two frames from a movie of a dog walking in a park are very close when measured in pixel difference. However, not all images within this distance will look like real images. Again, most directions will look like noise. This coherence between nearby pictures and limited direction of 'valid' movement motives the manifold structure of data in the manifold hypothesis. 

These observations motivate us to exploit structures and techniques from the study of manifolds to study entire classes of data in which the exact structure of the underlying manifold is unknown, but each observation can be thought of as being a point on the unknown manifold. When working in this context, we refer to each complete observation as a \textit{data-point} (i.e. am entire image or mesh is a data-point, not a pixel or point within it) and then the entire set of data-points form a \textit{data-manifold}.

\section{Geometric Methods for Data Processing}

The study of non-Euclidean geometry has a long and rich history that dates back at least as far as Gauss, but was not applied in earnest to data science until the end of the twentieth century. Important predecessors to modern geometric processing methods include: \emph{level-set methods}, \emph{harmonic analysis}, \emph{multi-resolution analysis} and \emph{graph methods}. Level-sets methods \cite{adalsteinsson1994fast, malladi1995shape} conceptualized moving front problems as level-sets of higher dimensional geometric objects. Harmonic \cite{molchanov1995harmonic} and  multi-resolution \cite{Daubechies:1992ten, deng2008harmonic} analysis for homogeneous spaces extended classical concepts in signal analysis to apply to more abstract domains by using the underlying geometry of spaces do create bases for efficient processing and representation. Finally, graph processing methods for data \cite{ding2001min} sought to exploit sparely connected data structures to reduce computational overhead. Each of these fields contributed tremendously to the creation of the field of geometric processing and continue to provide inspiration for modern research. 

One of the first works to truly integrate differential geometric ideas and data processing was the idea of differential geometry into data science was that of Laplacian Eigenmaps \cite{Belkin2003}. In this work, the authors proposed a method for non-linear dimensional reduction based on solving a discretized version of the Laplace-Beltrami (LB) operator on a general data set. Following this inspiration the LB operator became, and continuous to be an essential tool in the field \cite{reuter2006laplace,levy2006laplace,Rustamov:2007,sun2009concise,Bronstein:2010CVPR,lai2010metric,Raviv2011CVPR,aubry2011wave,lai2011automated,shi2013cortical,shi2014metric,shtern2016fast} An important component of all of these works is separating \emph{intrinsic} and \emph{extrinsic} information. We will define these terms more rigorously in the next chapter, but essentially the underlying idea is to solve some underlying differential equation on a manifold (or data set) rather than in the ambient space in which the data set is embedded. 

Data-driven techniques, and more specifically machine learning, became popular in the field of geometric shape processing much more recently. Important early advances were made by researchers working on graph analysis problems \cite{gori2005new} as well as others working in computer graphics \cite{masci2015geodesic}. Early efforts in both of these lines of research sought to generalize convolutions, which had become very importing to the deep learning community,  to non-euclidean domains. This problem remains fundamental and will be the central focus of a chapter later in this work.  See \cite{bronstein2017geometric} for a more thorough review of geometric deep learning.

\section{Outline of This Thesis}

The rest of this thesis is structured as follows: In chapter \ref{Chap:Background}, we briefly review some concepts from differential geometry, optimization, and machine learning which will be useful in our further discussions. Next, in chapter \ref{Chap:LBBP} we develop a variational model for computing correspondences between non-isomorphic shapes. In chapter \ref{Chap:PTC}, we develop a generalization of convolution to apply to non-Euclidean spaces. This allows us to develop CNN-like neural networks on 3D shapes, which we can use for a broad class of problems, including correspondence, recognition, and data generation. Chapter \ref{Chap:CAE} deals with exploiting the manifold structure of general data sets to create more powerful models for auto-encoding and synthetic data generation. Finally, in chapter \ref{Chap:Conclusion}, we add some concluding remarks and discuss future avenues of research.


\chapter{MATHEMATICAL BACKGROUND}\label{Chap:Background}
In this chapter we briefly review some topics and standard definitions from differential geometry, optimization, and machine learning which will be useful in later chapters. After some introductory discussion each section concludes with some recommendations of texts for interested readers. In the final section, \ref{sec:MotivatingProblem}, we describe the problem of shape correspondence, which will be the central focus of chapter \ref{Chap:LBBP}, but will reappear thought, and will serve as a motivating example for the rest of this work. 


\section{Continuous Representation of Manifolds}\label{sec:ContBackground}

Colloquially speaking, the manifolds we are interested are spaces that looks flat when viewed from a close enough perspective and as a result behaves approximately like Euclidean space when operating in a sufficiently small neighborhood. Formally, around any point $x$ on the manifold $\M$ there is some neighborhood $\N(x)$ which can be homeomorphically mapped to some Euclidean space. The dimension of this Euclidean space, $d$, is reffed to as the \textit{intrinsic dimension} of the manifold, or in some contexts, just the dimension of it in which case $\M$ is called a $d$-manifold. This map $\psi: \N(x) \rightarrow \RR^d$ called a chart and is usually denoted a tuple $(U,\psi)$ where $U$ is the domain of $\psi$, an open subset of $\M$. Any manifold can be parameterized by a collection (maybe infinite) of overlapping charts parameterization called an atlas. Given two charts, $\psi_i$ and $\psi_j$, whose domains overlap (their intersection is non-empty) we define the \textit{chart transition function} $\tau_{ij}$ as $\phi_j \circ \phi_i^{-1}: \phi_i(U_i \cap U_j) \rightarrow \phi_j(U_i \cap U_j)$. Figure \ref{fig:Manifold} illustrates these concepts. Intuitively, this map allows us to change the description of a point induced by $\phi_i$ to another description induced by $\phi_j$. In Chapter \ref{Chap:CAE} we take advantage of this local description of manifolds and transition conditions to more efficiently parameters auto-regressive and generative machine learning models. 

\begin{figure}[ht]
  \centering
   \includegraphics[width=.65\linewidth]{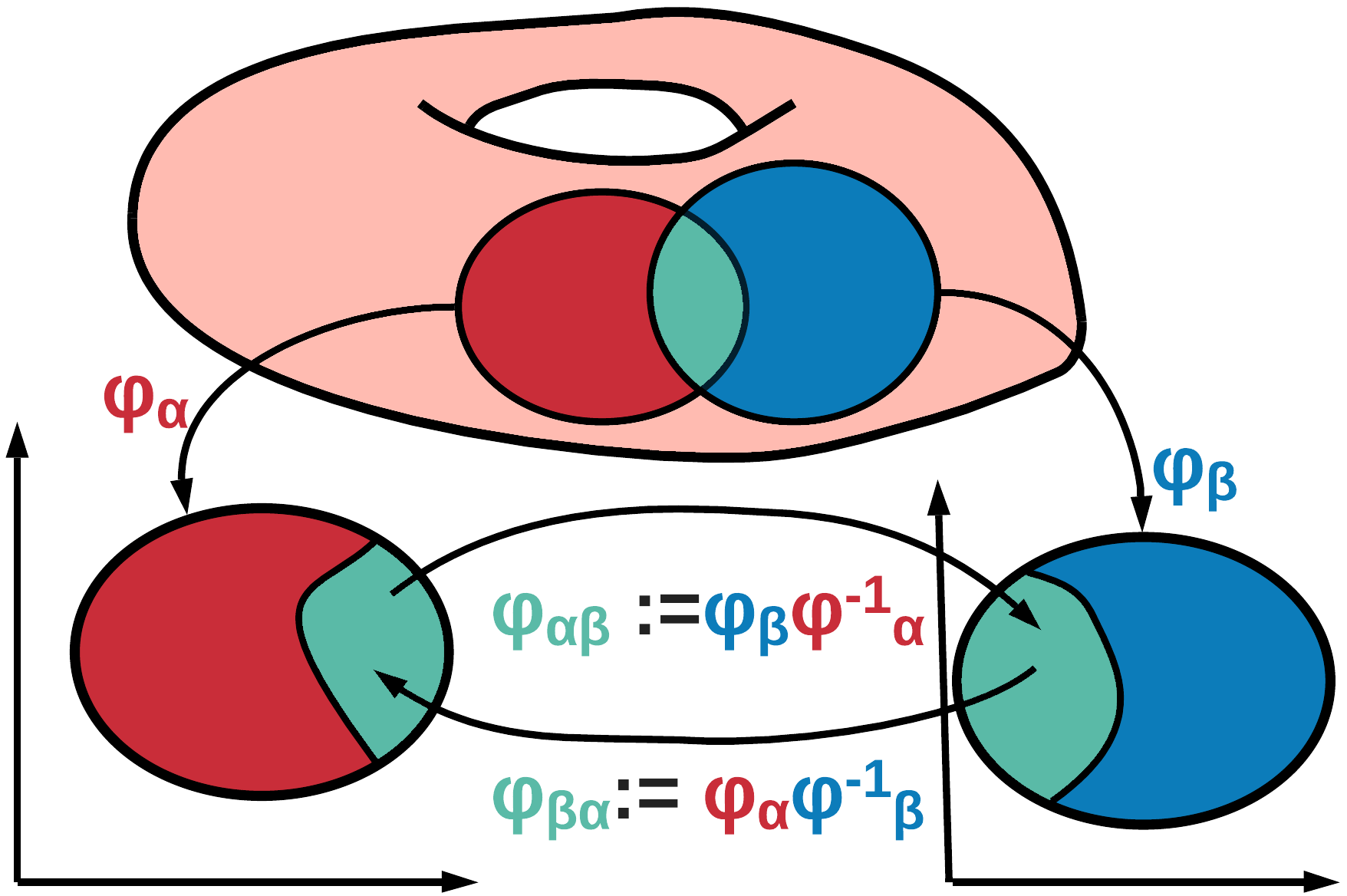}
   \lrpicaption{Illustration of a Manifold and Two Charts.}
   \label{fig:Manifold}
\end{figure}

Many important properties of the manifold can be studied by examining the properties of atlas. Importantly, a notion of smoothness for manifold can be defined as the smoothness of a set of compatible charts. For example, two charts are said to be $C^{\infty}$-compatible if the transition functions between them are $C^{\infty}$ in the usual sense. If this property holds for all charts in the atlas the manifold is said to be $C^{\infty}$ continuous. Similarly, a \textit{compact manifold} is one which can be covered with a countable number of charts, following from the usual definition of a compact topological space being one for which every open cover has a finite sub-cover. 

A \textit{metric space} is a set of points equip with a metric that measures distances between them. Unusually denoted as a tuple $(M,g)$, metric spaces satisfy the following conditions:
\begin{eqnarray} 
\left\{ 
\begin{aligned} \label{metirc}
&g(x,y) \geq 0 \quad \forall x,y \in M \\
&g(x,y) = 0 \quad \text{iff} \quad  x=y \\
&g(x,y) = g(y,x)  \\
&g(x,z) \leq g(x,y) + g(y,z) \quad \forall x,y,z \in M
\end{aligned}
\right.
\end{eqnarray}
By equipping a manifold with a metric we are able to further study properties of chart and manifolds.

Very frequently it is more convenient to work with with a manifolds embedded in some Euclidean space $\RR^D$ (with $D > d$ and often $D >> d$) rather than with the manifold alone. In this case the manifold is a subset of points taken from the space $\M \subset \RR^D$. An important result by Nash \cite{nash1954c1} shows that any smooth manifold can be be embedded in a space of dimension $2d$. With an embedding it is easy to informally describe the idea of tangent planes (although it is possible to formally define without embedding the manifold and we will do so shortly). For a $d$-dimensional manifold the tangent plane at $x$, denoted $T_x \M$, can be though of as a $d$-dimensional flat plane which kisses the manifold at $x$. That is, the tangent plane $T_x \M$ intersects $\M$ at $x$ and has the same 

Given a curve $\gamma$ on $\M$ ($\gamma:t \in [-1,1] \rightarrow  \M$) with $\gamma(0) = x$ we can define a tangent vector at $x$ to be the derivative of the compassion of $\gamma$ with a chart function. That is $d\psi_x(\gamma'(0)) = \frac{d}{dt}[(\psi \circ \gamma)(t)]\big|_{t=0}$ where $\frac{d}{dt}$ is the standard derivative.The collection of the tangent vectors for all possible curves passing through $x$ is called the tangent plane. More formally, let $\M$ be a $C^{\infty}$ manifold, $\psi$ be $C^{\infty}$ chart and $f$ be some real valued function mapping form $\M$ to $\RR^d$. A \textit{derivation} at $x$ is a linear map that satisfies the chain rule: $D(fg)(x) = (D(f) g + f D(g))(x)$ for any $f,g \in C^{\infty}:\M \rightarrow \RR$. Then by assigning linear addition and scalar multiplication operators:
\begin{equation}
(\lambda * D) (f)  = \lambda D(f)
\end{equation}
\begin{equation}
(D_1 + D_2)f = D_1(f) + D_2(f)
\end{equation}
we form a linear space. This space is defined to be the tangent plane at $x$: $T_x \M$. The collection of tangent spaces for all points on the manifold is called the \textit{tangent bundle} and denoted $T \M$.

A \textit{Riemann Manifold} is a smooth manifold additionally equipped with an inner-product $g_x$ on the tangent space $T_x\M$ at each point $x\in \M$. If the manifold is embedded in Euclidean space, then the standard inner product in $\RR^D$ can be used, and is called the \textit{induced metric}. More generally, any positive definite matrix can be used to define an inner product, and since the metric measures distance, by changing it we can change the shape of a manifold without explicitly recomputing its embedding. In chapter \ref{Chap:LBBP} we will take advantage of this.

Since Riemannian manifolds are locally Euclidean, we are able to transfer information from one point (and it's tangent space) to nearby points (and tangent spaces) though connections. Again it is easiest to motivate this concept with a manifold embedded in a real space, but we can define these concepts formally without an embedding. The tangent space $T_x \M$ can be parameters by a set of $d$ orthogonal vectors $\{d\psi_i(x)\}_{i=1}^d$ where each $d\psi_i(x)$ is the derivative of $\psi$ in the $i^{th}$ direction of the canonical Euclidean basis. Then any vector $v \in T_x \M$ can be written as a weighted sum of these basis vectors. At a nearby point $x'$ we can define a basis in the same way, but if the manifold is not flat then $d\psi_i(x) \neq d\psi_i(x')$ for some $i$. However, since these bases are coming from the same chart function it is trivial to find the rotation needed to associate the bases. By choosing the same coefficients in the weighted sum we can find a vector $v'$ which is \textit{parallel} to $v$. To do this over longer distance we will need to incorporate chart transition functions to compare points described by different points. We will pick up this discussion of connections in Chapter \ref{Chap:PTC} where we will define them more formally and use them to define convolution operators on manifolds. 

For further reading on these topics we suggest some standard texts: \cite{baker1991introduction}, \cite{do2016differential} and \cite{jost2008riemannian}.


\section{Discrete Differential Geometry}

In many applications, it is not possible to have descriptions of manifolds which are as precise as introduced in the previous section. In general, the computational representation of surfaces that we have access to is a collection of points sampled from them. Often times, especially for 2-manifolds embedded in $\RR^3$, which are of particular interest since they represent shape we encounter in the 'real world', we also have an additional simplicial structure, most frequently in the form of a triangle mesh. 

A \textit{triangle mesh} is discrete representation of a surface defined by the tuple $\T=(V,E,T)$. Where the set $V$ is an indexed set of the points (or \textit{vertices}), $E$ is a set of \textit{edges} which connect the points in $V$ into \textit{triangular elements} $T$. Some groups prefer working with quadrilateral other polygon elements, but the underlying philosophy and maths are similar. For simplicity, we stipulate that all meshes we are interested contain only triangular elements, and obey the Delaunay condition \cite{delaunay1934sphere}: no point $x\in V$ is inside the circumcircle of any triangle. Not every set of points is guaranteed to have such a triangulation, for example if all the points are sampled form a line, but these pathological cases are rare in practice. 

Given such a mesh, we can easily compute many important geometric and differential properties efficiently by looping through either $V$ or $T$. For example, for a 2-manifold in $\RR^3$ we define the normal at any vertex $x_i$ as the average of the face normals in the first ring structure. That is $n(x_i) = \sum_{T \in i} \frac{1}{Area(T)} n(T)$. This also gives us a definition of the tangent plane at any given vertex: it is simply the plane normal to $n_i$. We can compute integrals of vertex valued functions by constructing a \textit{mass matrix} of local area elements. This can be done in the standard finite element fashion: by construing pyramid functions for each element and computing their inner products or by simply assigning each vertex an area corresponding to a third of the area of its first ring structure. Similarly, we can define differential operators such as the Laplacian (which we refer to as the Laplace-Beltrami operator on surfaces) thought standard finite element methods \cite{dziuk2013finite}.

In cases where no such simplicial structure is readily available and it is too costly to explicit compute one, we work with \textit{point clouds}: list of coordinates embedded in some space. Although this data has minimal structure if the data is sampled sufficiently densely, we can approximate triangle meshes through the use of local methods as in \cite{lai2013local}. Here, we pick up small neighborhoods around each point, and compute a locally valid mesh which we can use to perform finite element operations. In chapter \ref{Chap:CAE} we propose a method for computing parameterization of data sets based on learning local approximations in cases where the both the dimension of the manifold and its embedding are very high and in chapter \ref{Chap:Approx} we show some theoretical results for approximating such data with high dimensional simplicial structures. 

For further background on these subject we suggest \cite{crane2018discrete} and \cite{bobenko2008discrete} for discussion specifically on discrete differential geometry and \cite{dziuk2013finite} for discussion on finite element theory for surfaces.


\section{Optimization}

Although not the primary focus of this work, we would be remiss not to include a brief section on optimization. Many of the problems we will encounter can be broadly put into the same frame work: "Find a object in a space which minimizes some condition". In traditional setting, we call the condition we are trying to minimize the \textit{objective function} and the space of possible objects with which to minimize the objective the \textit{search space}. The deep learning community often replaces the term objective function with \textit{loss} and splits the search space into two parts a \textit{model space} of possible configuration of operations and a \textit{parameter space} of values to use in the operations (more on this in \ref{sec:DL}). In upcoming chapters we will use the field appropriate terminology.

The objective functions we are concerned with are almost always either differentiable or at least sub-differentialbe and are almost always amenable to first order methods which rely only on gradients (or sub-gradients). Higher order methods (which require the computation of the Hessian or possibly even higher orders partial derivatives) are common in some fields, but the memory requirements make them unsuitable for our tasks. Importantly, in this work, we are always able to find (although in practice it may be very expensive) a \textit{ direction of descent}: given an objective function $L$ of variable $x$ at any point $x_k$ we can find a direction $p_k$ such that $\langle p_k , \nabla f (x_k) \rangle < 0$. Then by Taylor's theorem we know that for some sufficiently small step size $\tau$, we can update $x_{k+1} = x_k + \tau p_k$ and guarantee that $f(x_{k+1}) < f(x)$. Two question naturally arise: how do we choose find $p_k$?; how do we choose $\tau?$. 

In many settings the exact gradient can be computed, and in these cases the negative of the gradient can be used as a direction of descent. However, in many other cases, especially those coming from machine learning or other 'big data' regimes, evaluation of the entire gradient is too computationally costly to be feasible. For example when training a neural network to classify a large training set it is prohibitively expensive to compute the objective function value and gradient with respect to every single training example. Here, we have to rely on stochastic approximations of the gradient. In general, these methods rely on an direction of descent conditions which hold only in expectation. Fortunately there have been many studies, both theoretical and empirical, which show that we can quickly compute stochastic gradient estimates which are highly likely to be directions of descent and display good convergence properties. We recommend \cite{bottou2010large} for an overview of these results in big data contexts.

Computing optimal step sizes $\tau$ is a field of study in itself, and is beyond the scope of this work, but we make a few remarks on the topic here. The simplest method to choose $\tau$ is to simply choose a fixed step size, and if the algorithm fails to converge, reduce the size. Adaptive step sizes algorithms are a much more powerful and common alternative. In general these methods work by evaluating the objective function at several step lengths to adaptive choose a step size. These methods are very powerful for problems in which evaluation of the objective function is computationally simple and we will utilize one in chapter \ref{Chap:LBBP}. However, in many of the big data regimes we are concerned with, since evaluation of the objective function is very costly and stochastic approximations do not provide good enough estimations, line search methods are not feasible in practice. In these cases another alternative are momentum methods (sometimes called accelerated gradient methods). These methods choose the step size by measuring some properties of the descent direction (such as the norm of the gradient) and can also incorporate past evaluations of the gradient, or previous step sizes in the computation of $\tau_{k+1}$. These methods will be extremely valuable to us when training neural networks in chapter \ref{Chap:PTC} and chapter \ref{Chap:CAE}.

The issue of convexity, or more specifically non-convexity, will come up many times in this work. A function $f:\RR^n \rightarrow \RR$ is said to be convex if:
\begin{equation}
    f(r x+(1-r)y) \leq r f(x) + (1-r)f(y) \quad \forall x,y \in \text{domain}(f), r \in [0,1]
\end{equation}
Convex functions are nice to work with in optimization because first-order methods, with appropriate step sizes, will always converge to the global minimum. Unfortunately, nearly all of the problems we will encounter in this work are non-convex and because of this we can only guarantee convergence to local minimums. However, this is not as large of a problem as it may at first appear. One source of non-convexity, which will crop up in chapter \ref{Chap:LBBP}, will be the result of symmetries and each of the local minimums encountered being equivalent. Similarly, deep neural networks, like those we will study in chapter \ref{Chap:PTC} and chapter \ref{Chap:CAE} are famously non-convex, but in practice we find that many different initialization and stochastic seeds lead to very similar results. This is likely due to the \textit{egg crate hypothesis}, which postulates that the loss surfaces of these networks look like an egg crate mattress, with many local minimums all of which are equally low. Figure \ref{fig:egg} shows an example of such a loss surface. 

\begin{figure}
    \centering
    \includegraphics[width=.95\linewidth]{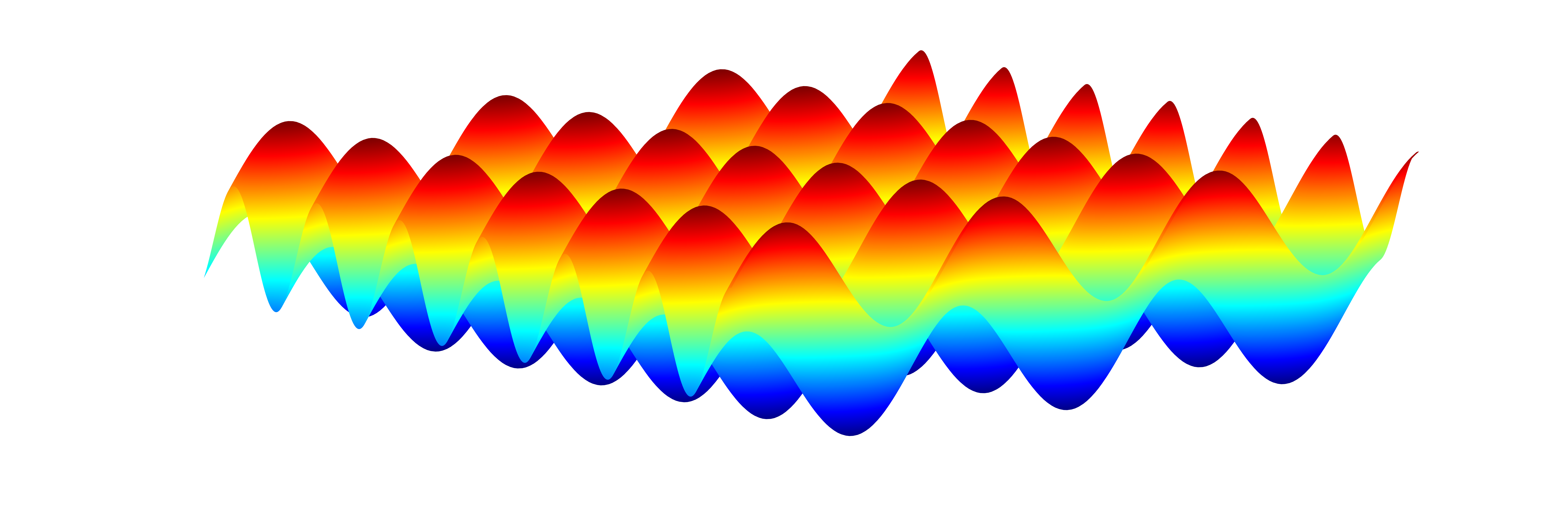}
    \lrpicaption{Egg crate-shaped loss function; all local minimums are equivalent.}
    \label{fig:egg}
\end{figure}

For more information on these topics we recommend \cite{rockafellar1970convex}, \cite{nocedal2006numerical} and \cite{bazaraa2013nonlinear} as classical references, and recommend our colleague's new book \cite{kupferschmid2017intro} for an specifically computational overview of these issues.


\section{Deep Neural Networks}\label{sec:DL}

The term \textit{deep learning} has come to encompass many related techniques in the broader field of data processing. For now we limit our discussion to feed-forward neural networks. A \textit{feed forward neural network} is a mathematical model in which cascade of linear (or more specifically affine) and non-linear operations transform some input data into some output data. Given some input data $x$ we can write a feed-forward neural networks as:
\begin{equation}
y = \sigma_k (W_k \sigma_{k-1} (...\sigma_2 (W_2 \sigma_1 (W_1 x + b_1 ) + b_2) ... )+ b_{k-1} ) 
\end{equation}
where $W_i$ are matrices referred to as the \textit{weights} and $b_i$ are vectors referred to as the \textit{biases} and collectively referred to as the \textit{parameters} of the networks. In general the internal non-linearities, $\{\sigma_i]\}_{i=1}^k$, are chosen to be simple fixed functions such as the relu functions $(\sigma(x) = max(x,0))$ or sigmoid function $(\sigma(x) = \frac{e^x}{e^x+1})$ and the last non-linearity is chosen specifically for the task at hand. These models are extremely general and have been shown to be able to approximate functions arbitrarily well in a wide variety of settings. Each pair of linear and non-linear (sometimes called activation) function form a \textit{layer} of the network and the \textit{width} of a layer is defined as the number of rows in the matrix associated with that layer. The \textit{depth} of a network refers to how many layers it contains. Generally speaking, the \textit{hyper parameters} of a network are the number of layers, their size, and the types of non-linearities employed. These are frequently chosen experimentally, although there are many ongoing research projects, all beyond the scope of this work, into more rigorous ways to choose them.  
 
The values of parameters of a network are determined through some \textit{training} procedure, most commonly optimizing some loss function over some set of \textit{training data}. This is most commonly done through a process known as \textit{back propagation} in which the derivative of the loss is propagated through the network layer by layer to update each weight. From a mathematical point of view this is simply the chain rule in action, and allows for memory-efficient gradient methods. Since networks are general approximators they are very prone to over-fitting. That is, they may perform very well on the training data, but fail to perform well on data not included in the training set. To overcome this most loss functions are augmented with regularization terms which depend on the parameters of the network. As in inverse problems when working with an underdetermined problem, a regularization terms work to make the model less sensitive to changes in the input (training) data and encourage some coherence (continuity, smoothness or flatness) of the network. The simplest and most common regulation is an $l_2$ penalty on the norm of the weight matrices. 

In recent years, convolutional neural networks have become extremely popular for processing one and two-dimensional data. In these networks the linear layer is specified to be convolutional operators (note that convolution is a linear operation and that discrete convulsion can be implemented as a sparse matrix multiplication). These layers have several advantages: they can be implemented extremely efficiently, they create a contraction of information at each layer (since the filters have compact support), and they exhibit desirable equivariance properties since the same filter is applied multiple times across different location in the domain. In chapter \ref{Chap:PTC} we will generalize these networks to apply on non-flat domains, and in chapter \ref{Chap:CAE} we will use convolutional layers in a more traditional setting to analyze data embedded in image space. 

The book \textit{Deep Learning} \cite{goodfellow2016deep} has quickly become the standard reference in the field, but we also recommend \cite{deisenroth2020mathematics} for a more mathematical treatment and \cite{abu2012learning} for a broader overview of the subject of machine learning. 


\section{A Motivating Problem: Correspondence}\label{sec:MotivatingProblem}

We finish this chapter with a motivational problem which will be fundamental in chapter \ref{Chap:LBBP}, and will reappear through this work: correspondence. Put colloquially: given two similar shapes, find a geometrically meaningful map between them. 

To give a concrete example, which we will return to again in the next chapter, suppose we want to compute a correspondence between a horse and an elephant. Each has four legs, two eyes, and a tail, but elephants have large ears and trunk much large than any horses nose. Our task, then, is to find a map between the two which preserves geometrically meaningful information: elephant feet should map to horse hooves, tail should map to tail, and head to head. Once we have this map many downstream tasks become easy. We can characterize the deformation by studying the map. We can classify new shapes based on the properties of the map. We use the map to transfer information (such as labels from segmentation data) from one surface to the other, and we can combine maps from different shapes to create networks that we can also analyze. 

For a more abstract example, instead suppose that we have two data sets: one of handwritten digits and one of photographs of house numbers. Conceptually, the information between them is similar: the number 0 through 9, however, the way that this information is represented is very different. If these data sets are very large, then computing a point-to-point mapping between them is unfeasible. However, if we take the point of view that these data sets are simply two different embeddings of samples taken from the same manifold, then another correspondence appears: compute the map between the sampled embedding, and an intrinsic representation of the data. Nearly all previously established methods for understanding this \textit{latent} representation try to do so by mapping the data to some linear, or normally disturbed, space. However, if our data displays more complex geometry (or topology) this kind of mapping will destroy the geometric information we are interested in. In chapter \ref{Chap:CAE} and chapter \ref{Chap:Approx} we overcome this by returning to fundamentals; using the language of charts and local embeddings to create latent models where computing correspondences are once again geometrically meaningful. 
 
\chapter{LAPLACE-BELTRAMI BASIS PURSUIT }\label{Chap:LBBP}


Surface registration is one of the most fundamental problems in geometry processing. Many approaches have been developed to tackle this problem in cases where the surfaces are nearly isometric. However, it is much more challenging  to compute correspondence between surfaces which are intrinsically less similar. In this paper, we propose a variational model to align the Laplace-Beltrami (LB) eigensytems of two non-isometric genus zero shapes via conformal deformations. This method enables us compute to geometricly meaningful point-to-point maps between non-isometric shapes. Our model is based on a novel basis pursuit scheme whereby we simultaneously compute a conformal deformation of a 'target shape' and its deformed LB eigensystem. We solve the model using an proximal alternating minimization algorithm hybridized with the augmented Lagrangian method which produces accurate correspondences given only a few landmark points. We also propose a re-initialization scheme to overcome some of the difficulties caused by the non-convexity of the variational problem. Intensive numerical experiments illustrate the effectiveness and robustness of the proposed method to handle non-isometric surfaces with large deformation with respect to both noise on the underlying manifolds and errors within the given landmarks or feature functions. 

\blfootnote{Portions of this chapter previously appeared as: S. C. SCHONSHECK, M. BRONSTEIN AND R. LAI, \textit{Noniometirc Surface Registration via Conformal Laplace-Beltrami Basis Pursuit},  arXiv preprint, arXiv:1809.07399, 2018. \newline
Portions of this chapter have been submitted as S. C. SCHONSHECK, M. BRONSTEIN AND R. LAI  \textit{Noniometirc surface registration via conformal Laplace-Beltrami Basis Pursuit}, J. Sci. Comput (2020).}

\section{Introduction to Nonisometric Correspondence}\label{sec:IntroLBBP}

The computation of meaningful point-to-point mappings between pairs of manifolds lies at the heart of many shape analysis tasks. It is crucial to have robust methods to compute dense correspondences between two or more shapes in different applications including shape matching, label transfer, animation and recognition~\cite{siddiqi1998area,kraevoy2004cross,reuter2006laplace,heimann2009statistical,van2011survey,ovsjanikov2012functional}. In cases where shapes are very similar (isometric or nearly isometric), there are many approaches for computing such correspondences \cite{elad2003bending,gu2004genus,bronstein2006efficient,aubry2011wave,kim2011blended,kovnatsky2013coupled,lai2017multi,ovsjanikov2012functional,shtern2014iterative,shtern2016fast}. However, it is still challenging to compute accurate correspondences when the deformation between the shapes are far away from near isometry.

One of the key challenges in largely deformed non-isometric shape matching is that the intrinsic features of the two shapes are not similar enough for standard techniques to recognize their similarity. For example, when computing the correspondence between human faces, it is not particularly difficult to geometrically characterize the structure of a `nose'. However, similar techniques can not work well to compute a map between a horse and an elephant face since these two surfaces have many largely deformed local structures including the drastic difference between the trunk of the elephant and the nose of the horse. Because of this, it is crucial to develop new methods to adaptively characterize large deformations on surfaces. 

The LB eigensystem is a ubiquitous tool for 3D shape analysis (see \cite{berard1994embedding,reuter2006laplace,levy2006laplace,Rustamov:2007,sun2009concise,Bronstein:2010CVPR,lai2010metric,Raviv2011CVPR,aubry2011wave,lai2011automated,shi2013cortical,shi2014metric,raviv2015affine,shtern2016fast} and references therein). It is invariant under isometric transformations and intrinsically characterizes the local and global geometry of manifolds through its eigensystem up to an isometry. In principle, the LB eigensystem reduces infinite-dimensional nonlinear isomorphism ambiguities between two isometric shapes to a linear transformation group between two LB eigensystems. This linear transform is necessary due to the possible sign or sub-eigenspace (geometric multiplicity) ambiguity of LB eigensystems~\cite{lai2017multi}. Additionally, similar shapes often have similar eigensystems which allows for joint analysis of similar shapes their spectral properties~\cite{ovsjanikov2012functional}. However, when the deformation between two shapes is far from an isometry, the large dissimilarity between LB eigensystems of two shapes is the major bottleneck to adapt the existing spectral geometry approach to conduct registration. 

A natural idea to extend spectral geometry methods to register non-isometric surfaces is to deform the metric of a "target surface" into the metric of a "source surface" so that two surfaces share similar LB eigensystem after deformation. However, directly computing this deformation often requires specific knowledge about corresponding regions of the shapes. In this work, we propose a method to simultaneously compute such a deformation while learning features which can be used for registration. Mathematically, one way to characterize this type of deformation is through measuring its conformal factor--the local scaling induced by a conformal deformation. It is well known that there exists a conformal mapping between any two genus-zero surfaces~\cite{jost2008riemannian}. Rather than reconstruct the conformally deformed surfaces and/or exact conformal map, we exploit a fundamental link between the conformal factor and the LB eigensystem by manipulating the conformally deformed LB eigensystem. This allows us to compute a new basis on the target surface to align the naturally defined LB eigensystem on the source surface. This leads to a variational method for non-isometric shape matching which enables us to overcome the natural ambiguities of the LB eigensystem and align the bases of non-isometric shapes while avoiding the direct computation of conformal maps. 

Numerically, we solve our model using a proximal alternating minimization (PAM) method~\cite{attouch2010proximal} hybridized with the augmented Lagrangian method~\cite{glowinski1989augmented}. The method is iteratively composed of a curvilinear search method on orthogonality constrained manifold~\cite{wen2013feasible} in one direction to compute the conformally deformed LB eigenfunctions and  the BFGS~\cite{bazaraa2013nonlinear} method for the other direction to compute the conformal factor. Theoretically, we guarantee the local convergence of the proposed algorithm since the objective function and constraints satisfy the necessary Kurdyka-Lojasiewicz (KL) condition~\cite{attouch2010proximal}. Numerical results on largely deformed test problems, including horse-to-elephant and Faust benchmark database~\cite{bogo2014faust}, validate the effectiveness and robustness of our method.

\paragraph{Related Works.}
 A large number of 3D nonrigid shape matching approaches are based on analysis of the LB eigensystem~(see \cite{reuter2005laplace,reuter2006laplace,levy2006laplace,Rustamov:2007,Bronstein:2010CVPR,ovsjanikov2012functional,kovnatsky2013coupled,Raviv2011CVPR,shtern2014iterative,lai2017multi} and reference therein). The LB eigensystem is intrinsic and invariant to isomorphism, and also characterizes the local and global geometry of a manifold. This makes it ideal for many shape processing tasks and many early works in the field involve directly comparing the LB spectrum of the shapes to determine how alike shapes are \cite{reuter2005laplace,reuter2006laplace,levy2006laplace}. More recently, the general concept of functional maps \cite{ovsjanikov2012functional} has played a central role in many new methods that have allowed for the formulation of accurate correspondence maps. This technique essentially reduces the non-linear transform between two shapes to a linear transform between their eigensystems. In general, these techniques work for well for isometric and near isometric cases, but can not produce satisfactory results when the LB eigensystems of shapes are very dissimilar. This occurs when the deformation between shapes is far from an isometry. To overcome this, the concept of coupled bases (also known as joint-diagonalization) was introduced for shape processing tasks in \cite{kovnatsky2013coupled}. In this work the authors propose a variational model to define a shared basis for a pair of shapes which is `nearly harmonic' on one shape and 'similar' to the natural LB basis on the other. This joint optimization allows for much more accurate correspondence maps, but does not characterize the underlying deformations which lie at the heart of the non-isometric shape matching problem.

Conformal maps have been widely applied to various shape processing tasks in order to characterize these deformations \cite{Hurdal:NeuroImage2000,Haker:TVCG2000,gu2004genus,springborn2008conformal}.  
In one of the first works to combine spectral and deformation based approaches, \cite{shi2011conformal} presents a scheme to find optimal conformal deformation to align two shapes in the embedded LB Space. Additionally, the authors present a general framework for computing LB eigensystems of conformally deformed surfaces as well as several other imported related quantities. Continuing on this line of work in \cite{kao2017maximization}, the authors use the LB eigenvalues as a tool to guide conformal deformations. Using derivatives of the LB eigenvalues, they compute optimal conformal metrics which approximate conformal and topological eigenvalues. In our work, we use the spectral coefficients of known features to guide the deformation, so rather than align the eigenvalues we align the eigenfunctions. This allows us to avoid the subspace ambiguity of the LB eigensystem and computational errors in calculating high-frequency eigenvalues. 

\paragraph{Major Contributions.}
We introduce a novel variational basis pursuit model for computing non-isometric shape correspondences via conformal deformation of the LB eigensystem. This model enhances spectral approaches from handling nearly isometric surface registration to tackling surfaces with large deformed metrics. It naturally combines the conformal deformation to the LB eigensystem and simultaneously computes surface deformations and LB eigenbasis which also automatically overcomes the ambiguities of LB eigensystems in surface registration. 
We also propose a numerical scheme to solve the variational model with a local convergence guarantee. Additionally, we introduce a reinitializaiton scheme to help tackle local minima and improve the quality of the computed bases. This algorithm successfully handles non-isomorphic shape correspondence problems given only a few landmarks and is shown to be robust to noise and perturbations of landmarks.

The rest of this chapter is organized as follows: In section \ref{sec:LBBPBackgroud}, we review the theoretical background of conformal deformation of LB eigensystem and functional maps. After that, we propose the variational basis pursuit model for conformal deformation of the LB eigensystem in section \ref{sec:LBBasisPursuit}. In section \ref{sec:Algs}, we discretize the model and develop an optimization scheme based on PAM to solve the variational problem. Section \ref{sec:Discussion} is further devoted to discuss a few details of the model and a reinitialization scheme to improve our numerical solver. In section \ref{sec:Results}, numerical results on several data sets are presented to show that the model accurately produces point-to-point mappings on non-isometric manifolds with large deformation given only a few landmark points. We also show that our approach is robust to both noise in the underlying manifolds and inaccuracies in the initial landmarks. Furthermore, we test the model to a benchmark data based to show its effectiveness. Lastly, we conclude our discussions of this project in Section~\ref{sec:LBBPconclusion}.

\section{Mathematical Background of LBBP}\label{sec:LBBPBackgroud}
In this section, we discuss the mathematical background of the proposed method. We first review a few key properties of the LB eigensystem of a Riemannian surface and discuss its conformal deformations with respect to deformations of the Riemannian surface metric \cite{chavel1984eigenvalues,jost2008riemannian}. After this, we review the functional maps framework in \cite{ovsjanikov2012functional} which will be closely related to our work.


%
\subsection{Conformal Deformation of LB Eeigensystem on Riemannian Surfaces}
\label{subsec:LB}

Given a closed Riemannian surface $(\M,g)$, its LB operator in a given local coordinate system, $\{x_i\}_{i=1,2}$,  is defined as \cite{chavel1984eigenvalues,jost2008riemannian}:
\begin{equation}\label{def:LB}
\Delta_{g}\phi=\frac{1}{\sqrt{G}}\sum_{i=1}^{2}\frac{\partial}{\partial
x_i}(\sqrt{G}\sum_{j=1}^{2}g^{ij}\frac{\partial \phi}{\partial x_j})
\end{equation}
where $(g^{ij})$ is the inverse of the metric matrix $g = (g_{ij})$
and $G=\det(g_{ij})$. The LB operator is self-adjoint and elliptic, therefore it has a discrete spectrum. We denote the eigenvalues of $-\Delta_{g}$ as $0=\lambda_0 < \lambda_1 \leq \lambda_2 \leq\cdots$ with the corresponding eigenfunctions $\phi_0, \phi_1,\phi_2,\cdots$ satisfying:
\begin{equation} 
-\Delta_g(x) \phi_i(x) =\lambda_i\phi_i(x), \quad \text{and} \quad 
\int_{\M}\phi_i(x) \phi_j(x) ~\mathrm{d}vol_{g}(x)= \delta_{ij}, \quad i,j = 0,1,2,\cdots 
\end{equation}
where $\mathrm{d}vol_{g}(x)$ is the area element on $\M$ with respect to $g$. It is well-known that $\Phi = \{\phi_n~|~ n=0, 1, 2, \cdots\}$ forms an orthonormal basis for the real-valued, smooth function space $\Fun (\M,\RR)$ on the manifold $(\M,g)$. This basis can be viewed as a generalization of the Fourier basis from flat space to a differentiable manifold. 
The LB eigensystem is invariant under both rigid and nonrigid isometric transformations,and it uniquely determines a manifold up to isometry \cite{berard1994embedding}.

In differential geometry, a conformal map is one which preserves angles locally. Formally, a conformal map preserves the first fundamental form up to a positive scaling factor. Given two manifolds $(\M_1,g_1)$ and $(\M_2,g_2)$, a map $F:(\M_1,g_1) \rightarrow (\M_2,g_2)$ is conformal if and only if the pullback $F^*(g_2) = w^2 g_1$ with a positive function $w^2$ (written this way to emphasize positivity). A {\it conformal deformation} of a surface is a transformation which changes the local metric by a positive scaling factor. A well-known result in conformal geometry is that there exists a conformal map between any two genus-zero surfaces \cite{jost2008riemannian}. 

Given a closed surface $(\M,g)$ with conformal deformation $w^2$, the LB eigensystem of the deformed manifold $(\M, w^2 g)$ can be viewed as a weighted LB eigensystem on the original surface $(\M,g)$. This simple fact intrinsically links the LB eigensystem of the deformed manifold to a weighed LB eigensystem on the original manifold. It allows us to compute the LB eigensystem of the conformally deformed manifold without explicitly reconstructing its embedding or coordinates. This also relates information about the local deformation and global eigensystem and later becomes the cornerstone of our approach. Formally, we have: 
\begin{proposition}
\label{prop:LBconformal}
Let $\{\phi_n^{w^2},\lambda^{w^2}_n\}_{n=1}^\infty$ be a LB eigensystem of a conformally deformed surface $(\M,w^2 g)$, then $\{\phi_n^{w^2},\lambda^{w^2}_n\}_{n=1}^\infty$ is equivalent to the following weighted LB eigensystem on $(\M,g)$:
\begin{eqnarray}\label{eqn:ConformalLBeigs}
-\Delta_{g}\phi_i(x) = \lambda  w^2(x)\phi_i(x), \qquad
\int_{\M}\phi_i(x)\phi_j(x) w^2(x)~\mathrm{d}vol_{g}(x)= \delta_{ij},
\end{eqnarray}
\end{proposition}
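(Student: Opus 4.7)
The plan is to prove both claims of the proposition by a direct computation in a local chart, showing that a single clean transformation rule for $\Delta_g$ under the conformal rescaling $g \mapsto w^2 g$, together with the corresponding rule for the volume element, reduces the deformed eigenproblem on $(\M, w^2 g)$ to the weighted eigenproblem on $(\M, g)$ stated in (\ref{eqn:ConformalLBeigs}).

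First I would fix a local coordinate chart and record how the ingredients of the LB operator in (\ref{def:LB}) transform under $g \mapsto \tilde g := w^2 g$. Since $\tilde g_{ij} = w^2 g_{ij}$, the inverse metric satisfies $\tilde g^{ij} = w^{-2} g^{ij}$, and because $\M$ is a two-dimensional surface, $\det(\tilde g_{ij}) = w^{4}\,G$ so that $\sqrt{\det(\tilde g)} = w^2 \sqrt{G}$. Substituting these three relations into the coordinate formula $\Delta_{\tilde g}\phi = (\det\tilde g)^{-1/2}\,\partial_i\bigl(\sqrt{\det\tilde g}\,\tilde g^{ij}\,\partial_j\phi\bigr)$, the $w^{2}$ coming from $\sqrt{\det\tilde g}$ and the $w^{-2}$ coming from $\tilde g^{ij}$ combine to $1$ \emph{before} the outer derivative $\partial_i$ is applied, so no derivative of $w$ is generated; pulling the remaining outer $w^{-2}$ out front yields the single clean identity $\Delta_{w^2 g}\phi = w^{-2}\,\Delta_g \phi$.

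Given this identity, the eigenvalue equation $-\Delta_{w^2 g}\phi_i^{w^2} = \lambda_i^{w^2}\phi_i^{w^2}$ rearranges immediately into $-\Delta_g \phi_i^{w^2} = \lambda_i^{w^2}\, w^2\, \phi_i^{w^2}$, which is the first assertion of (\ref{eqn:ConformalLBeigs}). For the orthonormality, I would use the companion transformation rule for the volume form, $\mathrm{d}\mathrm{vol}_{w^2 g} = w^2\,\mathrm{d}\mathrm{vol}_g$, so that the standard orthonormality $\int_{\M}\phi_i^{w^2}\phi_j^{w^2}\,\mathrm{d}\mathrm{vol}_{w^2 g} = \delta_{ij}$ for the eigensystem of $-\Delta_{w^2 g}$ is exactly the weighted $L^2$ orthonormality claimed in (\ref{eqn:ConformalLBeigs}).

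The main obstacle is essentially bookkeeping: tracking how many powers of $w$ are contributed by the inverse metric versus by the volume factor and verifying that they combine \emph{inside} the derivative in (\ref{def:LB}) rather than producing spurious $\nabla w$ terms. It is worth highlighting that the clean identity $\Delta_{w^2 g} = w^{-2}\Delta_g$ is specific to the surface case $d = 2$; in higher dimensions $\sqrt{\det\tilde g} = w^{d}\sqrt{G}$, so a factor $w^{d-2}$ survives inside $\partial_i$ and generates additional first-order terms in $\nabla w$, which is a structural reason the proposition, and indeed the whole conformal-deformation approach of the paper, is developed for surfaces.
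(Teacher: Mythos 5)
Your proposal is correct and follows essentially the same route as the paper's own proof: both compute $\Delta_{w^2 g}\phi = w^{-2}\Delta_g\phi$ in local coordinates by observing that the $w^2$ from $\sqrt{\det(w^2 g)}$ and the $w^{-2}$ from the inverse metric cancel inside the derivative, and both then use $\mathrm{d}\mathrm{vol}_{w^2 g} = w^2\,\mathrm{d}\mathrm{vol}_g$ for the orthonormality. Your closing remark that this clean identity is special to $d=2$ is accurate and a worthwhile observation, though the paper does not state it explicitly.
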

\begin{proof} This is because: 
\begin{align*}
\Delta_{w^2g} \phi
= \frac{1}{w^2\sqrt{G}}\sum_{i=1}^{2}\frac{\partial}{\partial
x_i}(w^2\sqrt{G}\sum_{j=1}^{2}w^{-2}g^{ij}\frac{\partial \phi}{\partial x_j}) = w^{-2}\Delta_{g} \phi
\end{align*}
Hence the eigen problem: $-\Delta_{w^2g} \phi = \lambda \phi$ is equivalent to $-\Delta_{g}\phi = \lambda  w^2\phi$. Additionally, it is clear that: $\mathrm{d}vol_{w^2 g} = w^2\ \mathrm{d}vol_{g}$, since changing the local metric is equivalent to rescaling the local area element.
\end{proof}

The problem of finding the LB eigensystem of a Riemannian manifold is equivalent to finding an orthonormal set of functions $\Phi = \{\phi_i\}$ which have minimal harmonic energy on the surface. From the above proposition, the LB eigensystem of a conformally deformed manifold $(\M,w^2 g)$ can be formulated as the following variational problem: 
\begin{equation}\label{defeig}
\arg\min_{\Phi=\{\phi_i\}} \sum_i  \int_{\M} ||\nabla_{\M} \phi_i(x) ||^2~\mathrm{d}vol_{g}(x),  \quad \text{s.t.}\quad \int_M \phi_i(x) \phi_j(x) w^2(x) ~\mathrm{d}vol_{g}(x) = \delta_{ij}
\end{equation}


\subsection{Functional Maps}
\label{funmaps}
Functional maps were introduced in~\cite{ovsjanikov2012functional} for isometric and nearly isometric shape correspondence. This method has been shown a very effective tool for various shape processing tasks \cite{ovsjanikov2012functional,kovnatsky2013coupled,rodola2015point}. Here we provide a basic overview of their framework.
Consider Riemannian surfaces $(\M_1,g_1)$ and $(\M_2,g_2)$, a smooth bijection $F:\M_1 \rightarrow \M_2$ induces a linear transformation between functional spaces of these two manifolds as:
\begin{equation}
F_T: \Fun(\M_1,\RR) \rightarrow  \Fun(\M_2,\RR),\quad 
f \mapsto f\circ F^{-1}
\end{equation}
Instead of computing surface map $F$, the crucial idea of functional map is to  compute the linear map $F_T$ between these two functional spaces. After that, the desired surface map can be encoded by considering images of indicator functions under $F_T$.

Finding a functional map, $F_T$, associated with a map $F$ is equivalent to finding the matrix representation of $F_T$ under a fixed orthonormal basis $\{\phi_i\}$ of $\Fun(\M_1,\RR)$ and a fixed orthonormal basis $\{\psi_i\}$ of $\Fun(\M_2,\RR)$, respectively.  Namely, if we write $F_T(\phi_i) = \sum_{j}c_{ji} \psi_j$, then any two given corresponding functions $ f =  \sum_i f_i \phi_i$ and $g = \sum_j g_j \psi_j$ under $F_T$ can be represented using $C = (c_{ij})$ as: 
\begin{equation} \begin{split}
F_T(f) = g \Leftrightarrow F_T\Big(\sum_i f_i \phi_i\Big) 
=\sum_i f_i F_T ( \phi_i) =
\\ \sum_i f_i \sum_j c_{ji} \psi_j = \sum_j g_j \psi_j 
\Leftrightarrow \sum_i c_{ji} f_i = g_j. 
\end{split}
\end{equation}
Each entry of the matrix $c_{ij}$ can be found by finding the $j^{th}$ coefficient of $F_T(\phi_i)$ expressed in the $\{\psi_i\}$ coordinate system, i.e. $c_{ji} = \langle F_T(\phi_i), \psi_j \rangle_{g_2}$. In practice, one can use two finite sets of orthonormal functions to approximate $\Fun(\M_1,\RR)$ and $\Fun(\M_2,\RR)$, thus the functional map can be approximated by a finite dimensional matrix. For instance, the first $N$ eigenfunctions of the LB eigensystem is one common choice of such a basis. Then, the problem of finding the transformation $F_T$ can be approximated by the problem of seeking a finite dimension matrix $C$. As long as $C$ is computed, the desired map $F$ can be computed through $C$ operating on indicator functions. 

\section{Conformal LB Basis Pursuit for Nonisometric Surface Registration}
\label{sec:LBBasisPursuit}
In this section, we propose a LB basis pursuit model for non-isometric surface registration. On the target surface $\M_2$, the model simultaneously finds a conformal deformation and a conformally deformed LB eigensystem so that the coefficients of the corresponding feature functions expressed on the deformed LB eigensystem of $\M_2$ are the same as the coefficients on the fixed source surface $\M_1$. 


\subsection{Variational PDE Model}


Given two non-isometric genus-zero closed Riemannian surfaces $(\M_1,g_1)$ and $(\M_2,g_2)$, we aim at finding a geometrically meaningful correspondence between these two surfaces. In the case that $\M_1$ and $\M_2$ are nearly isometric, there are many successful methods to constructing maps between $\M_1$ and $\M_2$ by comparing their isometric invariant features. Using spectral descriptors from solutions of the LB eigensystem on manifolds is a common way of constructing such descriptors~\cite{reuter2006laplace,levy2006laplace,vallet2008CGF,Shi:08a,Bronstein:2010CVPR}. As extensions, some other descriptors such as Heat kernel signature~\cite{sun2009concise}, wave kernel signature~\cite{aubry2011wave} and optimal spectral descriptors~\cite{litman2014learning} have also been proposed in the literature. However, most of the existing methods consider the construction of descriptors for nearly isometric manifolds. Registration methods based on the existing LB spectral descriptors can not provide satisfactory results for constructing correspondence between two non-isometric surfaces as their eigensystems are possibly quite far apart. 

We propose to overcome the limitation of the LB spectral descriptors for largely deformed non-isometric shape registration by considering a continuous deformation of the LB spectral descriptors. Intuitively, given two non-isometric shapes $(\M_1,g_1)$ and $(\M_2,g_2)$, our idea is to deform the metric of $(\M_2,g_2)$ such that the deformed surface is isometrically the same as $(\M_1,g_1)$. Then the LB spectral descriptors can be applied as in isometric shape matching. However, it is challenging to find an appropriate deformation as the accurate amount of deformation on each local region of $\M_2$ depends exactly on an accurate correspondence which is precisely the problem we would like to solve.

To handle this challenge, we propose to simultaneously find an optimal correspondence and an optimal deformation. More specifically, by fixing the LB eigensystem $\{\Phi, \Lambda\}$ of $(\M_1, g_1)$, we seek a map $T:\M_1 \rightarrow \M_2$ and a conformal factor $w^2:\M_2 \rightarrow \RR^{+}$ such that the LB eigensystem $\{\Phi, \Lambda\}$ of $(\M_1,g_1)$ can be aligned to the LB eigensystem $\{\Psi, \Theta\}$ of $(\M_2, w^2 g_2)$ via $T$. 
This problem can be written as the following variational PDE problem:
\begin{equation}\begin{split}
\label{eqn:map1}
(T^*,w^*, \Psi^*) &=
\underset{T, w,\Psi = \{\psi_i\}_{i=1}^N }{\argmin} ~ \sum_{i=1}^{N} \int_{\M_1} \|\phi_i - \psi_i\circ T\|^2 ~\mathrm{d}M_1  \\
&+ \frac{1}{2} \sum_{i=1}^N \int_{\M_2} \|\nabla_{\M_2} \psi_i \|^2 ~\mathrm{d}\M_2,  \\
 & \text{s.t.}\quad  \int_{\M_2} \psi_i \psi_j \ w^2 ~\mathrm{d}\M_2  = \delta_{ij}
\end{split}
\end{equation}
where $\mathrm{d}\M_1 =dvol_{g_1}, \mathrm{d}\M_2 = dvol_{g_2} $ and $w^2 \mathrm{d} \M_2 = dvol_{w^2 g_2}$.The first term measures the alignment of two bases as the correct correspondence should map one LB eigensystem to another one, and the second term solves the first $N$ LB eigenfunctions $\{\psi_i\}$ for the deformed manifold $(\M_2,w^2 g_2)$ due to the variational problem~\eqref{defeig}. 
Existence of a solution to this variational problem \eqref{eqn:map1} is guaranteed as any two genus-0 surfaces are conformally equivalent and the LB operator is invariant under isometric transformations.


Computationally, the numerical search for $T$ in the mapping space is usually very time-consuming. Inspired by the idea of functional maps \cite{ovsjanikov2012functional} and the coupled quasi-harmonic bases \cite{kovnatsky2013coupled}, we choose to represent $T$ in the functional space. Instead of finding $T$ directly, we look for a basis $\Psi = \psi_i\circ T = F_T(\psi_i)$ which is nearly harmonic on $(\M_2,w^2 g_\M)$ and represents the corresponding features with the same coefficients as $\Phi$ does.  
More precisely, given a set of corresponding features $F=\{f_1,\cdots, f_k\} $ on $\M_1$ and $G = \{g_1, \cdots, g_k\}$ on $\M_2$, such that $f_i(x) = g_i(y)$ if $x$ and $y$ are corresponding points on $\M_1$ and $\M_2$, we can replace the direct measurement of the basis alignment term with a coefficient matching term. 
That is, instead of measuring the alignment of $\Psi$ and $\Phi$ via $T$, we measure how closely the coefficients for $G$ in the computed basis $\Psi$ match the coefficients for $F$ in the fixed LB basis $\Phi$. Formally, we measure the coefficient alignment by constructing a matrix of the coefficients in for $F$ in $\Phi$ and for $G$ in $\Psi$ so that the $ij^{th}$ term represents the coefficient for the $i^{th}$ corresponding function in the $j^{th}$ basis and computing their difference under the Frobenius norm. With this in mind, we propose the following model:
\begin{equation}
\label{eqn:map2}\begin{split}
(w^*,\Psi^*) &= \underset{ w,\Psi}{\argmin} 
\frac{r_1}{2}\| \langle F, \Phi \rangle_{g_1} -  \langle G, \Psi \rangle_{w^2 g_2} \|_F^2 + \frac{r_2}{2} \sum_{i=1}^N \int_{\M_2} \|\nabla_{\M_2} \psi_i \|^2 \mathrm{d}\M_2, \\
&\hspace{3cm} \text{s.t.}\quad \int_{\M_2} \psi_i \psi_j \ w^2 \mathrm{d}\M_2  = \delta_{ij}
\end{split}
\end{equation}
where: 
\begin{equation} 
\langle F, \Phi \rangle_{g_1} = \Big(\int_{\M_1} f_i \phi_j \ \mathrm{d}\M_1 \Big)_{i,j = 1,2,\dots,k} \quad 
\end{equation}
and 
\begin{equation}
\langle G, \Psi \rangle_{w^2 g_2} = \Big(\int_{\M_2} g_i \psi_j\  w^2 \mathrm{d}\M_2 \Big)_{i,j = 1,2,\dots,k}.
\end{equation}
In practice we use indicator functions for $F$ and $G$, but heat signatures~\cite{sun2009concise}, wave kernel signatures~\cite{aubry2011wave}, or any other corresponding functions will also work.  Once $\Psi^* = \{\psi^*_1, \cdots, \psi^*_{M_2} \}$ is obtained, we can easily compute the functional map as 
\begin{equation}
\label{eqn:funmap}
F_T : C^\infty(\M_1) \rightarrow C^\infty(\M_2), \qquad F_T(h) = \sum_{i=1}  \Big(\int_{\M_1} h \phi_i \ \mathrm{d}vol_{g_1}\Big)~  \psi^T_i.
\end{equation} 

The main advantage of this model over previous existing methods for shape correspondence is that we are able to employ much more of the information encoded in the differential structures of $\M_1$ and $\M_2$ in our algorithm by combining the spectral descriptors and local deformations. This additional flexibility enables us to compute correspondences between largely deformed shapes. Information about the conformal deformation of the metric allows us to find a harmonic basis on the deformed shape, meanwhile information about the alignment of the functional spaces guides our calculation of the conformal deformation. Furthermore the additional constraint of the feature alignment overcomes ambiguity casued by the fact that there is no unique conformal deformation between any two genus zero surfaces.
To the best of our knowledge, the link between the conformal factor and deformed LB basis has not been exploited in such a way. Previous works have used only the conformal factor \cite{gu2004genus,kim2011blended} or only the functional space \cite{ovsjanikov2012functional,kovnatsky2013coupled} as stand alone tools rather than in concert as we present here. 


\subsection{Regularization and Area Constraint}

We add harmonic energy term to smooth the conformal deformation and regularize the problem. This can both increase the speed of the algorithm and improve the quality of the map, both in terms of the geodesic errors of the final correspondence, and the accuracy of the resulting conformal factor. This is particularly helpful to handle  deformations between the shapes which are far from isometry and to reduce the required number of features. Rather than smooth the conformal factor $w^2$ directly, we instead add the harmonic energy of $w$ to our objective function. Using $w$ instead of $w^2$ allows for easier analytic computation of the derivatives and a more efficient algorithm. In cases where the deformations are likely to be highly localized, this term may be omitted.


Lastly, we add an area preservation constraint to our model. That is, we would like the final deformed shape to be of the same size as the one we are matching it to. To enforce this, we mandate that the deformed manifold have the same surface area as the original manifold. This eliminates any scaling ambiguity. Then the final version of our model can be stated as:
\begin{equation} \label{eqn:finalmodel}
\begin{split}
(w^*,\Psi^*) =& \underset{w,\Psi=\{\psi_i\}_{i=1}^N }{\argmin}
\frac{r_1}{2}\| \langle F, \Phi \rangle_{g_1} -  \langle G, \Psi \rangle_{w^2 g_2} \|_F^2 \\
&+ \frac{r_2}{2} \sum_{i=1}^N \int_{M_2}  \|\nabla_{\M_2} \psi_i \|^2 \mathrm{d}\M_2
+\frac{r_3}{2} \int_{\M_2} ||\nabla_{\M_2} w ||^2 \mathrm{d}\M_2, \\ 
&\text{s.t.}\quad \int_{\M_2} \psi_i \psi_j \ w^2 \mathrm{d}\M_2  =\delta_{ij} \quad 
\text{and} \quad Area(\M_1)_{g_1} = Area(\M_2)_{w^2 g_2}  
\end{split}
\end{equation} 
where $Area(\M_1)_{g_1} = \int_{\M_1} 1 \mathrm{d}\M_1$ and $Area(\M_2)_{g_2} = \int_{\M_2} 1 w^2 \mathrm{d} \M_2$

\section{Discretization and Numerical Algorithms}
\label{sec:Algs}
In this section, we describe a discretization of the proposed variational model \eqref{eqn:finalmodel} using on triangular representation of surfaces. After that, we design a numerical algorithm to solve the proposed model based on proximal alternating minimization method. \footnote{We remark that this approach also works for point cloud representations of the manifolds, as the algorithm only relies on the mass and stiffness matrices which can be computed for point clouds as discussed in \cite{lai2013local}}


\subsection{Discretization of the LBBP Model}

The main method we use to discretize surfaces and differential operators is based on a finite element scheme similar to that developed in~\cite{reuter2006laplace,sun2009concise,dziuk2013finite}. Let $\{p_i\}_{i = 1}^{n}$ be a set of vertices sampled on the manifold $\M$. A surface can be discretized as a triple $\{P,E,T\}$ made of vertices ($P$), connected by edges ($E$) which form triangular faces $(T)$. We define the first ring of $p_i$, the set of all triangles which contain $p_i$ as $N(p_i)$. For each edge $E_{ij}$ connecting points $p_i$ and $p_j$, we define the angles opposite $E_{ij}$ as angles $\alpha_{ij}$ and $\beta_{ij}$.

We define a diagonal mass matrix, $\mathbf{M}$, a $n \times n$ positive definite matrix with entries given by:
\begin{equation}
    \mathbf{M}_{ii} = \frac{1}{3} \sum_{\tau \in N(p_i)}  Area(\tau)
\end{equation}
We use this simplified version, rather than the standard finite element discretization, for convince in order to avoid expensive factorizations later in our algorithm. We remark that the standard version can also be used in our algorithm at the cost of speed. The surface area can be approximated as $Area(\M) \approx \sum_{i=1}^n \textbf{M}_{ii}$. Similarly, a function $f:\M \rightarrow \RR$ with discretization $F:P \rightarrow \RR$, then we have the approximation $\int_{\M} f(x) \ d\M \approx 1^T\textbf{M}F = \sum_{i=1}^n f_i M_{ii}$. 
The stiffness matrix, $\textbf{S}$, is a $n \times n$ symmetric positive semidefinite matrix given by:
\begin{equation}
   {S}_{ij} = \sum_T \int_T \nabla_T e_i \cdot \nabla_T e_j = - \frac{1}{2}[\cot \alpha_{ij} (p_i) + \cot \beta_{ij} (p_i)] 
\end{equation}
where $e_i$ is a linear pyramid function which is $1$ at $p_i$ and zero elsewhere. These mass and stiffness matrices can be used to approximate the LB eigenvalue problem as: $\mathbf{S} f  =  \lambda \mathbf{M} f$ \cite{meyer2002discrete}. 

We remark that one can also work with point clouds representation instead of triangulated meshes. These definitions for the stiffness and mass matrices can be approximated by the point clouds method discussed in \cite{lai2013local}. The only change we would need to make is to use only the diagonal entries of the version of the mass matrix $\mathbf{M}$ proposed in their paper to populate the strictly diagonal version we employ here.

Suppose two surfaces $(\M_1,g_1),( \M_2,g_2) $ are represented by triangular meshes with the same number of points\footnote{In fact, we do not need to require that the surfaces have the same number of points, but doing so for now will allow for more convenient notation.}. We denote $\mathbf{M}^1, \mathbf{S}^1 \in \RR^{n\times n}$ as the mass and stiffness matrices of $\M_1$ and let $\Phi\in\RR^{n\times k}$ be the first $k$ LB eigenfunctions of $\M_1$, and $F \in\RR^{n\times \ell}$ be $\ell$ feature functions. 
Similarly, we write $\mathbf{M}^2, \mathbf{S}^2$ as the mass and stiffness matrices of  $\M_2$, $\Psi$ as the first $k$ LB eigenfunctions of $\M_2$ (under $g_2$), and $G$ as $\ell$ corresponding feature functions, ordered the same as in $F$. We also write $w^2$ as the discretized conformal factor on $\M_2$ and $\mathrm{diag}(w)$ as a diagonal matrix. 

Therefore, the discretized optimization model \eqref{eqn:map2} can be written as: 
\begin{equation}
\label{eqn:map2_discretization1}
\begin{split}
(w^*, \Psi^*) &= \arg\min_{w, \Psi} 
\frac{r_1}{2} \| F^T \mathbf{M}^1 \Phi - G^T  \mathrm{diag}(w) \mathbf{M}^2 \mathrm{diag}(w) \Psi \|_F^2 \\
& + \frac{r_2}{2} \tr (\Psi^T \mathbf{S}^2 \Psi) 
+ \frac{r_3}{2}w^T \textbf{S}_2 w, \quad 
\\ 
&\text{s.t.} \quad \Psi^T   \mathrm{diag}(w) \mathbf{M}^2 \mathrm{diag}(w) \Psi = \textbf{I}_k, 
\quad \text{and} \quad w^T \textbf{M}_2 w = A 
\end{split}
\end{equation}
Here $\textbf{I}_k$ is the $k \times k$ identity matrix and $A = \sum_{i=1}^n \textbf{M}_{1}(i,i) $. Since $\textbf{M}_2$ is symmetric positive definite and diagonal, we can easily calculate the matrix decomposition $\textbf{M}_2 = \textbf{L}^T \textbf{L}$. If we also substitute $\bar\Psi = \textbf{L}~ \mathrm{diag}(w) \Psi $, then \eqref{eqn:map2_discretization1} can be written as:
\begin{equation}
\label{FinalDisc}
\begin{split}
(w^*, \bar \Psi^*)& = \arg\min_{w, \bar \Psi}  \E(w, \bar \Psi ) = 
\frac{r_1}{2} \| F^T \textbf{M}^1 \Phi - G^T  \mathrm{diag}(w) \textbf{L}^T \bar\Psi \|_F^2 \\
& + \frac{r_2}{2} \tr (\bar\Psi^T \bar{\textbf{S}}^2(w) \bar\Psi)
 + \frac{r_3}{2} w^T \textbf{S}^2 w, \quad
\\
&  \text{s.t.} \quad \bar \Psi^T \bar \Psi = \textbf{I}_k 
\quad   \text{and} \quad
 w^T \textbf{M}^2 w = A 
\end{split} 
\end{equation}
where $ \bar{\textbf{S}}^2(w) = (\textbf{L}^T)^{-1}\mathrm{diag}(w)^{-1} \textbf{S}_2 \mathrm{diag}(w)^{-1} \textbf{L}^{-1}.$ Note that this parameterization of the problem moves the conformal factor $w$ out of the orthogonality constraint (and into $\bar{\textbf{S}}$). We will soon see that, for any fixed $\bar \Psi$, this will make the problem for $w$ easier to solve. 



\subsection{Numerical Optimization of LBBP Model}
\label{subsec:optimization}
The two variables $w$ and $\bar\Psi$ in \eqref{FinalDisc} make the optimization problem different from orthogonality constrained problems solved by nonconvex alternating direction method of multipliers (ADMM) methods considered in \cite{lai2014splitting,chenAugmented,wang2015global,kovnatsky2016madmm}. Rather than solve this problem directly for $\bar \Psi$ and $w$ simultaneously by directly minimizing \eqref{FinalDisc}, we employ a method based on the framework of proximal alternating minimization (PAM) method~\cite{attouch2010proximal}. 

Let $\S = \{\bar \Psi \in\RR^{n\times k} ~|~ \bar \Psi^T \bar \Psi = \textbf{I}_k \}$ and $\mathcal{W} = \{w\in\RR^n~|~w^T \textbf{M}_2 w = A \}$. We also define indicator functions:
\begin{equation}
\delta_\S(x) = \left\{\begin{array}{cc}0,& \text{if } x\in\S\\ +\infty, &\text{otherwise}\end{array}\right.,\qquad  \delta_\W(x) = \left\{\begin{array}{cc}0,& \text{if } x\in\W\\ +\infty, & \text{otherwise}\end{array}\right. 
\end{equation}
Then it is clear that $\delta_\S$ and $\delta_\W$ are semi-algebraic functions as $\S$ and $\W$ are zero sets of polynomial functions~\cite{attouch2013convergence}. Therefore, we write an equivalent form of \eqref{FinalDisc} as
\begin{equation}
\label{eqn:FinalDisc_PAM}
(w^*, \bar \Psi^*) = \arg\min_{w, \bar \Psi} \E(w, \bar \Psi ) + \delta_\S(\bar\Psi) + \delta_\W(w).
\end{equation}
Using the PAM method, we have the following iterative scheme
\begin{eqnarray} 
\label{optstep}
\left\{
\begin{aligned}
\bar \Psi^{j+1} &= \arg\min_{\bar \Psi}\E(w^j, \bar \Psi )  + \frac{1}{2\eta} ||\bar \Psi - \bar \Psi^j ||^2,  \quad \text{s.t.} \quad \bar \Psi^T \bar \Psi = \textbf{I}_k \\ 
w^{j+1} &= \arg\min_{w} \E(w, \bar \Psi^{j+1} )  + \frac{1}{2\eta} ||w - w^j ||^2,  \quad \text{s.t.} \quad w^T \textbf{M}_2 w = A 
\end{aligned}\right.
\end{eqnarray}
Here $\eta$ is a step size parameter. 
These proximal terms penalizes large step sizes in and prevents the algorithm from ``jumping" between multiple local minimums. The addition of these proximity terms allows us to analyze our proposed method in the framework of the PAM algorithm~\cite{attouch2010proximal}. It has been shown in \cite{attouch2010proximal, attouch2013convergence,bolte2014proximal} that such proximal terms can guarantee 
the solutions generated at each step converge to a critical point of the objective function. Formally, we have the following convergence theorem in accordance with Theorem 9 in~\cite{attouch2010proximal}.
\begin{theorem}\label{thm:convergence}
Let $\{w^j,\bar\Psi^j\}$ be the sequence  produced by \eqref{optstep}, then the following statements hold:
\begin{enumerate}
\item $\displaystyle \E(w^{j+1},\bar\Psi^{j+1}) + \frac{1}{2\eta} ||\bar \Psi^{j+1} - \bar \Psi^{j} ||^2 +  \frac{1}{2\eta} ||w^{j+1} - w^{j} ||^2 \leq \E(w^{j},\bar\Psi^{j}), ~\forall j\geq 0$.
\item $\displaystyle \sum_{j=1}^{\infty} (\|w^{j} - w^{j-1}\|^2 + \|\bar\Psi^{j} - \bar\Psi^{j-1}\|^2) < \infty$.
\item $\{w^j,\bar\Psi^j\}$ converges to a critical point of $\E(w,\bar\Psi)$.
\end{enumerate}
\end{theorem}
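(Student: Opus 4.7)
The plan is to invoke Theorem~9 of Attouch, Bolte, Redont and Soyeb~\cite{attouch2010proximal} essentially as a black box, so the work reduces to checking that the hypotheses of that result apply to our setting. Those hypotheses are: (i) the objective $\E(w,\bar\Psi) + \delta_\S(\bar\Psi) + \delta_\W(w)$ is a proper lower semicontinuous function that is bounded below on the feasible set; (ii) the smooth part $\E$ has a locally Lipschitz gradient on a neighborhood of the iterates; and (iii) the composite function satisfies the Kurdyka--\L ojasiewicz (KL) inequality at each of its critical points. Once (i)--(iii) are in hand, the three conclusions of the theorem follow directly from the abstract PAM analysis.

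For (i) and (ii), I would first observe that the constraint sets $\S$ and $\W$ are compact (the Stiefel manifold and an ellipsoid respectively), so minima in each proximal subproblem exist; the additional proximal terms $\tfrac{1}{2\eta}\|\cdot - \cdot\|^2$ make the subproblems strongly convex in their arguments modulo the constraint, which is what I would use to establish the descent inequality in item (1): writing down optimality of $\bar\Psi^{j+1}$ in the first subproblem gives $\E(w^j,\bar\Psi^{j+1}) + \tfrac{1}{2\eta}\|\bar\Psi^{j+1}-\bar\Psi^j\|^2 \le \E(w^j,\bar\Psi^j)$, and the analogous inequality for $w^{j+1}$ chains on; adding them yields (1). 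Summing (1) over $j=0,\dots,N$ telescopes the $\E$ values, and since $\E$ is bounded below on $\S\times\W$, sending $N\to\infty$ gives item (2). The delicate point for (ii) is that $\E$ contains the factor $\mathrm{diag}(w)^{-1}$ inside $\bar{\mathbf{S}}^2(w)$, so smoothness requires staying in the open orthant $\{w>0\}$; I would argue (using the regularization term $\tfrac{r_3}{2}w^T\mathbf{S}^2 w$ together with a warm start by a strictly positive conformal factor) that the iterates remain in a compact subset of this orthant, on which $\nabla\E$ is Lipschitz.

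For (iii), the KL property is the crux and is what turns the summability of successive differences into genuine convergence of the full sequence. I would establish it by semi-algebraicity: $\E$ is a rational function in $(w,\bar\Psi)$, and the constraint sets $\S=\{\bar\Psi:\bar\Psi^T\bar\Psi=\mathbf{I}_k\}$ and $\W=\{w:w^T\mathbf{M}^2 w=A\}$ are real-algebraic varieties, so their indicator functions $\delta_\S$ and $\delta_\W$ are semi-algebraic. Since sums of semi-algebraic functions are semi-algebraic and every semi-algebraic proper lsc function satisfies the KL inequality (Bolte--Daniilidis--Lewis), the composite objective in \eqref{eqn:FinalDisc_PAM} automatically enjoys the KL property at all points. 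Combining this with items (1) and (2), the standard KL argument from \cite{attouch2010proximal,attouch2013convergence,bolte2014proximal} shows the tail of the sequence is Cauchy, hence convergent to some $(w^*,\bar\Psi^*)$, and first-order optimality in each subproblem (pushing $j\to\infty$ using continuity of $\nabla\E$ and closedness of the limiting subdifferentials of $\delta_\S,\delta_\W$) identifies the limit as a critical point, giving item (3). The main obstacle I expect is the smoothness issue: ensuring $w^j$ stays uniformly bounded away from zero so that $\nabla\E$ remains Lipschitz; outside of that technicality the result follows from the established PAM machinery.
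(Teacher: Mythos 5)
Your proposal follows essentially the same route as the paper's proof: both reduce the theorem to verifying the hypotheses of Theorem~9 in Attouch et al.\ (boundedness below and lower semicontinuity of the separable terms, local Lipschitz continuity of $\nabla\E$, and the KL property obtained from semi-algebraicity of $\E$, $\delta_\S$, and $\delta_\W$), and then cite the abstract PAM convergence result. You are in fact somewhat more careful than the paper, which declares the first two conditions ``immediately clear'' while your proposal correctly flags that the factor $\mathrm{diag}(w)^{-1}$ inside $\bar{\textbf{S}}^2(w)$ means local Lipschitz continuity of $\nabla\E$ only holds on a region where the iterates $w^j$ stay uniformly bounded away from zero, a point worth making explicit.
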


\begin{proof}
To prove this, we show that our model obeys the conditions required for local convergence of PAM in \cite{attouch2010proximal}. To do so, we need:

(1) Terms which contain only one primal variable are bounded below and lower semicontinous.

(2) Terms which contain both variables are $C^1$ and have a locally Lipschitz continuous gradients.

(3) The entire objective satisfies the Kurdyka-Lojasiewicz (KL) property. \\
It is immediately clear that that the first two properties are satisfied by our objective. Furthermore, it is known that all semi-algebraic functions have KL property \cite{attouch2010proximal,attouch2013convergence, chenAugmented}. Our objective is semi-algebraic so we can guarantee local convergence of the proposed optimization method. 
\end{proof}

We use the augmented Lagrangian method to solve the constrained sub-optimization problem for $w$ in \eqref{optstep}. For convenience, let's write
\begin{equation}\label{Lap}
\mathcal{L}(\bar \Psi,w;b) = 
\E(w, \bar \Psi )
+ \frac{r_4}{2} \Big(  w^T \textbf{M}_2 w - A   + b\Big )^2
\end{equation}
Overall, we solve \eqref{FinalDisc} in the following way by hybridizing PAM with the augmented Lagrangian method.
\begin{eqnarray} \label{optsteps}
\left\{
\begin{aligned}
 \bar \Psi^{j+1} &= \arg\min_{\bar \Psi} \mathcal{E} (w^j, \bar \Psi) + \frac{1}{2\eta} ||\bar \Psi - \bar \Psi^j ||^2  \quad \text{s.t.} \quad \bar \Psi^T \bar \Psi = \textbf{I}_k\\ 
w^{j+1} & \leftarrow  \begin{cases}
 \displaystyle w^{j+1,s+1} = \arg \min_{w}  \mathcal{L} (w, \bar \Psi^{j+1};b^{j+1,s}) + \frac{1}{2\eta} ||w - w^{j} ||^2  \\ 
 \displaystyle  b^{j+1,s+1} =  b^{j+1,s} +  (\w^{j+1,s+1})^T \textbf{M}_2 w^{j+1,s+1} -  A  .\\
\end{cases}
\end{aligned}
\right.
\end{eqnarray}

The subproblems for minimizing $\bar\Psi$ require a some special consideration. The main challenge this first sub-optimization problem is the nonconvex orthogonality constraints. Recently, several approaches have been developed to solve orthogonally constrained problems in feasible or infeasible ways~\cite{wen2013feasible,lai2014splitting,chenAugmented,wang2015global,kovnatsky2016madmm}. For our implementation, we have chosen the feasible approach developed in \cite{wen2013feasible} which uses a curvilinear method based on the Cayley transform together with Barzilai-Bowein step size line search. This method updates variables along a geodesic curve on the Stiefel manifold, a geometric description of the orthogonality. It preserves the orthogonality constraints and guarantees convergence to critical points in our scenario. More precisely, given a feasible starting point $\bar \Psi^s$  and the coordinate gradient $Y^s$ at this point, the update scheme is as follows:
\begin{eqnarray}
\label{eqn:curvilnear}
\left\{
\begin{aligned}
D^s &= Y^s (\bar \Psi^s)^T - \bar\Psi^s(Y^s)^T \\ 
Q^s &= (I+\frac{dt}{2}D^s)^{-1}(I-\frac{dt}{2} D^s) \\ 
\bar \Psi^{s+1} &= Q^s \bar \Psi^s 
\end{aligned}
\right.
\end{eqnarray}
Here $dt$ is a step size parameter chosen by the Barzilai-Bowein criteria developed in \cite{barzilai1988two}.  Although convergence to a global minimum is not guaranteed, this method has proven effective for our purposes and only requires the computation of the objective function and its coordinate gradient $Y^s$ with respect to $\bar\Psi$ at each step provided by:
%
\begin{eqnarray} \label{grad Epsi}
\begin{aligned}
\nabla_{\bar\Psi} &\left(\E(w,\bar\Psi) +\frac{1}{2\eta} ||\bar \Psi - \bar \Psi^j ||^2 \right) = 
\\& - r_1 G^T \mathrm{diag}(w) \textbf{L}^T \Big(F^T \textbf{M}_1\Phi - G^T \mathrm{diag}(w) \textbf{L}^T\bar \Psi\Big)\\
& + r_2 \bar{\textbf{S}}^2 \bar \Psi + \frac{1}{\eta}(\Psi - \bar\Psi^j)
\end{aligned}
\end{eqnarray}

The subproblem for $w$ (as written in \eqref{optsteps}), on the other hand is smooth and unconstrained.
For our implementation, we use the well known quasi-Newton BFGS algorithm \cite{bazaraa2013nonlinear}. The gradient of objective function with respect to $w$ can be written as:
\begin{eqnarray} \label{grad Ew}
\begin{aligned}
\nabla_{w}\left(\mathcal{L}(w,\bar\Psi;b) + \frac{1}{2\eta} ||w - w^{j} ||^2 \right) &= r_1 \ diag \Big(G^T(F^T\textbf{M}_1 \Phi-G w \textbf{L}^T \bar\Psi)) \bar\Psi^T \textbf{L} \Big) \\
& + r_2 \ diag \Big(\Psi \Psi^T \textbf{S} w^{-1} \Big)\odot w^{-2} \\
 & + r_3 \textbf{S}_2 w 
 + r_4 \Big( w^T \textbf{M}_2 w -  A + b \Big)  \textbf{M}_2 w + \frac{1}{\eta}(w - w^{j})
\end{aligned}
\end{eqnarray}
where $diag \big(\cdot \big)$ denotes the diagonal of the matrix, $\odot$ signifies element-wise Hadamard product and $w^{-2}$ is the inverse of diagonal matrix $w$ multiplied with itself. 


\subsection{Computation of Point-to-Point Map}

One naive way to compute a point-to-point map is to find the functional map by using the final deformed manifold and its LB eigensystem with respect to the deforamtion. However, this may not work well because of the ambiguity of LB eigensystem. Additional effort is needed to handle possible ambiguity of LB eigensystem such as the method discussed in~\cite{lai2017multi}. As an advantage of the proposed method, the resulting basis generated by the proposed algorithm (recovered as $\Psi^* = A^{-1} w \bar \Psi$) to will naturally correct ambiguities of LB eigensystem. This is similar to the method discussed in~\cite{kovnatsky2013coupled}.  Thus, we can compute the functional map as $F_T(h) = \sum_{i=1}^k  (\int_{\M_1} h \phi_i \ \mathrm{d}\M_1)  \psi^T_i = \Psi \Phi^T \textbf{M}_1h$. However, this method is still quite inefficient and may be sensitive to small errors in the resulting basis. 

Instead, after we recover the final basis from our method, we can compute the point-to-point map between the two surfaces by comparing the values of each of the basis functions. This is essentially the same scheme presented in \cite{ovsjanikov2012functional}, but applied to our new basis. We use a KNN search (with $K = 1$) to match rows of $\Phi$ and $\Psi$. This requires a search of $n$ points in $k$ dimension, but is much more efficient and accurate than using the delta function approach described in the previous paragraph. Other methods used to refine functional maps such as \cite{rodola2015point} can be applied in this setting without changes. We summarize our numerical method for nonisometric surface registration as Algorithm \ref{alg:LBBasisPursuit1}.

\begin{algorithm2e}[ht]
\lrpicaption{LB Basis Pursuit (LBBP) Algorithm.}
\label{alg:LBBasisPursuit1}
\SetKwInOut{Input}{input}\SetKwInOut{Output}{output}
\SetKwComment{Comment}{}{}
\KwIn{Triangulated surfaces $\M_1$ and $\M_2$ and list of known corresponding functions $F$ and $G$.}
\KwOut{$\Psi^*$, $w$, point-to-point map}
Compute stiffness and mass matrices for each surface: $\textbf{M}_1, \textbf{M}_2, \textbf{S}_1, \textbf{S}_2$\;
Use stiffness and mass to calculate LBO eigensystems: $\textbf{M}_1 \Phi = \lambda \textbf{S}_1 \Phi$\;
Initialize: Let $\Psi^0$ be the LB eigenfunctions of target surface: $\textbf{M}_2 \Psi = \lambda \textbf{S}_2 \Psi$\;
Compute $\bar\Psi^0 = \textbf{L} w \Psi$\;
\While{not converged}{
Update $\displaystyle \bar\Psi^{j+1}= \arg\min_{\bar\Psi}\mathcal{E} (w^j, \bar \Psi) + \frac{1}{2\eta} ||\bar \Psi - \bar \Psi^j ||^2$ using the curvilinear search algorithm \eqref{eqn:curvilnear}\;
\While{ $s \leq \ell$ }{
Update $\displaystyle w^{j+1,s} = \arg \min_{w} \mathcal{L} (w, \bar \Psi^{j+1};b^{j+1,s}) + \frac{1}{2\eta} ||w - w^{j} ||^2$ using BFGS\;
$\displaystyle b^{j+1,s+1} =  b^{j+1,s} + (w^{j+1})^T \textbf{M}_2  w^{j+1} - A$\;
}
$w^{j+1} = w^{j+1,s}$\;
 }
Recover $\Psi^* = w \textbf{L}^{-1} \bar \Psi $\;
Compute correspondence map with KNN-search of coefficient space 
\end{algorithm2e}


\section{Discussion}
\label{sec:Discussion}
In this section, we discus our choice of feature functions, as well as ways to overcome problems which may arise from the non-convexity of the proposed optimization problem. In addition, we present a novel way to jointly measure the quality of the correspondence and alignment of the bases without any prior knowledge about the ground truth of the point-to-point map.   


\subsection{Choice of Feature Functions}
The simplest, and in many applications, most natural features to choose for $F$ and $G$ are indicator functions for known landmarks. Let  $\{\chi^{1}_i\}_{i=1}^k$ be a set of points on $\M_1$ and $\{\chi^{2}_i \}_{i=1}^k$  be a corresponding set on  $\M_2$. We can view each $f_i$ and $g_i$ as a $\delta$-function on $\M_1$ and $\M_2$ respectively to indicate these landmarks.

Another option is to use heat diffusion functions. Given a corresponding pair of points we can use delta functions to define an initial condition and solve the heat diffusion problem $\frac{\partial u}{\partial t}(\x) = \Delta u(\x,t)$ using the Crank-Nicholson scheme 
$ \Big( \textbf{M}+\frac{dt}{2} \textbf{S} \Big)u^{i+1} = \Big( \textbf{M} -\frac{dt}{2}\textbf{S} \Big) u^{i}$
where $dt$ is a step size parameter.
By taking ``snap shots'' (solutions of the equation for various $t$ values) of $u$ at different time values we can generate multiple functions from a single corresponding pair. This choice allows for a multi-scale selection of features and often results in better correspondences, but is computationally more expensive. Also, since the heat diffusion is sensitive to local geometry, it is often necessary to recompute the diffusion with respect to the conformal factor. This can be included as a step in the reinitialization scheme which will be discussed in the next section. 

The wave kernel signature (WKS) has also been used for characterizing points on non-rigid three dimensional shapes \cite{aubry2011wave}. These functions are defined as the solutions to the Schrodinger equation: $\frac{\partial u}{\partial t}(\x) = i \Delta u(\x,t)$  at different points on the surface. Given two corresponding points we can solve the equation at each point and use these as our corresponding functions. However, the solutions to these equations are highly dependent on both local and global geometries of the manifold. Because of this, they are only suitable for shape correspondence when the shapes are very similar and, in general, do not work well for non-nearly-isometric problems. The same problem exists for heat diffusion features, however, in general heat diffusion tends to be much more stable with respect to local deformations.%

SHOT features \cite{tombari2010unique} are also a popular choice of feature functions for shape processing tasks. For nearly isometric shapes these descriptors work well, but since they are not intrinsically defined they do not work well with the re-initialization scheme detailed in the next section. Updating these features with respect to a conformal deformation requires computing the deformed embedding, which the rest of our method explicitly avoids.
 

\subsection{Reinitialization Schemes}\label{sec:ReInit}
Although we have shown that the proposed PAM based optimization algorithm converges to a critical point of the objective function, it is still challenging to achieve a global optimum as the problem is non-convex. In practice, we have found that the numerical results can often be improved in terms of both accuracy and speed of computation by adding a simple reinitialization scheme to our algorithm. The motivation for the scheme comes from an observation that if we know the exact conformal deformation $w^2$ and the source surface has a simple eigensystem (no repeated eigenvalues), then the LB eigensystem of $(\M_1, g_1)$ is the same as the LB eigensystem of $(\M_2,w^2 g_2)$ up to a change in sign. 
With this in mind, we propose to reinitialize the $\Psi$ problem by resetting $\Psi$ to be the solution to weighed eigenproblem $\textbf{S}_2 \Psi = \Lambda  \mathrm{diag}(w^2) \textbf{M}_2 \Psi$ . We remark that this reinitialization method to achieve an optimizer closer to the global one is empirical, although it is based on the geometric intuition. 

Computationally, to avoid introducing ambiguities of LB eigensystem by calling a standard eigen-solvers, we solve a discrete counterpart to \eqref{defeig} as
$\displaystyle \min_{\Psi} ~ \tr (\bar\Psi^T \bar{\textbf{S}}^2(w) \bar\Psi) , 
~ \text{s.t.} ~ \bar \Psi^T  \bar \Psi = \textbf{I}$
based on the curvilinear search method discussed in Section \ref{subsec:optimization} and using the current eigensystem, $\bar \Psi^{j+1}$, as an initial guess for this problem. By using $\bar \Psi^{j+1}$ as warm start for the eigenproblem we can avoid re-introducing sign or multiplicity ambiguities into the problem which our algorithm has already resolved. 

When using heat diffusion, wavelet kernel signatures, or any other functions which are defined based on local geometry as the  input feature functions, then we also need to recalculate these functions with respect to the conformally deformed metric. For example, if we are using heat diffusions, we can recompute the heat diffusion functions on the deformed manifold $(\M_2,w^2 g_{2})$ by multiplying the mass matrix by $w^2$ in the Crank-Nicholson scheme:
$ \Big(\textbf{M}_2 \mathrm{diag}(w^2)+\frac{dt}{2}\textbf{S}_2 \Big)u^{i+1}
 = \Big(\textbf{M}_2 \mathrm{diag}(w^2)-\frac{dt}{2}\textbf{S}_2\Big)u^{i}$,  
A similar re-computation technique can be applied to wave kernel signatures, or any other features which are computed using finite element-like operators.


\subsection{Sub-Sampling Scheme}\label{sec:subscheme}

The most computationally demanding step of our algorithm the update of $\bar \Psi$ and as a result the time complexity of our algorithm depends on the number of points in the discretization of $\M_2$. However, the overall geometry of the shape can often be closely estimated by a relatively small subset of the points contained in a triangulated mesh or point cloud. Inspired by this observation, we propose a warm start method in which we solve a smaller problem on a subset of the full mesh and use it as a warm start for the full problem. One way to do this would be to sub-sample the mesh and compute a new (local) triangulation \cite{lai2013local}. However, the re-meshing process can be computationally expensive. Therefore, we instead seek a method to approximate $\bar Psi$ on the entire mesh, using only the sub-sampled points.

Given a mesh $M$ with $n$ points, we first compute a sub-sample of points $\bar M$ with $\bar n < n$ points which most articulately represents the original mesh. To do so, we begin with a random seed point and compute the point on the mesh which has the greatest (geodesic) distance from it and include this point in $\bar M$. Then we iteratively add points to $\bar M$ by finding the point on $M$ which has the greatest minimal distance to any point already included in $\bar M$.

To approximate a function $f$ defined on $\M$ with only $\bar n$ variables we define linear projection and reconstruction operations to down-sample the problem. One naive idea would be to restrict the values of $f$ to $\bar M$ and use linear interpolation in the other direction. However, this fails to capture many of the details of functions in the projection step, and doesn't respect the local geometry in the reconstruction step. Instead, we define a new approximate basis with elements, $u_{i,t}(x), i \in \bar M, x \in M$, created by diffusing a delta function, centered at $i$ for a fixed time $t$. The resulting basis contains $\bar n$ elements. We define a projection operation and reconstruction operations as 
\begin{eqnarray}
\left\{
\begin{aligned}
Proj(f)&:= (u^T \textbf{M} u)^{-1} u^T \textbf{M} f = \bar f  \\
Recon(\bar f) &:= \bar f u
\end{aligned}
\right.
\end{eqnarray}
We can then use this new approximate basis we to the dimension of the optimization problem and solve the simplified problem very quickly. We define $\bar \Psi_u$ and $\w_u$ to be the projection of $\Psi$ and $w$ onto the $u_{i}$ set which can be represented as the coefficients $C_{i} = \langle \bar \Psi, u_{i} \rangle$ and $D_{i} = \langle w, u_{ij} \rangle$. Plugging these into our model we get:
\begin{equation}
\label{eqn:submodel}
\begin{split}
(D^*, C^*)& = \arg\min_{D, C}  \E(D, C) = 
\frac{r_1}{2} \| F^T \textbf{M}_{u2} D - G^T D \textbf{L}_u^T \bar\Psi \|_F^2  \\
 & \quad + \frac{r_2}{2} \tr (C \bar{\textbf{S}_{u2}}^2(w) C)
 + \frac{r_3}{2} w^T \textbf{S}_{u2} D, \quad
\\
& \text{s.t.} \quad C^T u^T u C = \textbf{I}_n
\quad   \text{and} \quad
 w^T \textbf{M}_{u2} w = A 
\end{split} 
\end{equation}
Where $\textbf{M}_{u2} = u^T M u$, $\textbf{L}_u = \textbf{L} u$ and $\textbf{S}_{u2} - u^T \textbf{S} u$ can all be precomputed. Note that if $\{u_i\}_{i=1}^{\bar n}$ is, in fact, a tight frame then \eqref{eqn:submodel} is the same as \eqref{eqn:finalmodel}. This problem can be solved with algorithm \eqref{alg:LBBasisPursuit1}, but has significantly fewer variables then \eqref{eqn:finalmodel}. By using using the elongation of the solution to \eqref{eqn:submodel} as an initial guess for $\bar \Psi$ and $w$ we can significantly decrease the time needed to solve the full model. 

With this warm start (\ref{alg:sub}) and the re-initialization procedure described in Section \ref{sec:ReInit}, we propose a modified version of our numerical solver as Algorithm~\ref{alg:LBBasisPursuit2}. 

\begin{algorithm2e}
\lrpicaption{Sub-sampling and Warm Start Algorithm.}
\label{alg:sub}
\KwIn{Set of vertices and faces of source ($\M_1$) and target ($\M_2$) manifolds, number of subsample points $\bar n$, list of known corresponding functions $F$ and $G$, Stiffness and Mass Matrices $S_1, S_2, M_1, M_2$ }
\KwOut{$\Psi^*$, $w^*$, }
Initialize: Let $\Psi^0$ be the LBO eigenfunctions of target surface: $\textbf{M}_2 \Psi = \lambda \textbf{S}_2\Psi$\;
Compute down-sampled points to represents $\M_1$\;
Compute down-sampled bases and representation of $F$\;
Use Algorithm \ref{alg:LBBasisPursuit1} to solve \eqref{eqn:submodel} for $D^*,C^*$\;
Compute $\bar{\Psi} = \sum_{i=1}^{\bar{n}} C_i \u_i$  and $w = \sum_{i=1}^{\bar{n}} D_i \u_i$ \;
\end{algorithm2e}

\begin{algorithm2e}[ht]
\lrpicaption{LB Basis Pursuit Algorithm with warm start and reinitialization.}
\label{alg:LBBasisPursuit2}
\KwIn{Set of vertices and faces of source ($\M_1$) and target ($\M_2$) manifolds and list of known corresponding functions $F$ and $G$}
\KwOut{$\Psi^*$, $w^*$, point-to-point correspondence map}
Compute stiffness and mass matrices for each surface: $\textbf{M}_1, \textbf{M}_2, \textbf{S}_1, \textbf{S}_2$\;
Use stiffness and mass to calculate LBO eigensystems: $\textbf{M}_1 \Phi = \lambda \textbf{S}_1 \Phi$\;
Compute corresponding feature functions $F$ and $G$ on $\M^1$ and $\M^2$ respectively\;
Initialize: Let $\Psi^0$ be the LBO eigenfunctions of target surface: $\textbf{M}_2 \Psi = \lambda \textbf{S}_2\Psi$\;
Compute down sampled bases through downsample and heat diffusion\;
Compute $\bar \Psi^0$ and $\bar w^0$ through warm start through Algorithm \ref{alg:sub}\;
\While{number of re-initialization steps complete $<$ max number of re-initializations}{
Update $\displaystyle \bar\Psi^{j+1}= \arg\min_{\bar\Psi}\mathcal{E} (w^j, \bar \Psi) + \frac{1}{2\eta} ||\bar \Psi - \bar \Psi^j ||^2$ using the curvilinear search algorithm \eqref{eqn:curvilnear}\;
\While{ $s \leq \ell$ }{
Update $\displaystyle w^{j+1,s} = \arg \min_{w} \mathcal{L} (w, \bar \Psi^{j+1};b^{j+1,s}) + \frac{1}{2\eta} ||w - w^{j} ||^2$ using BFGS\;
$\displaystyle b^{j+1,s+1} =  b^{j+1,s} +  (w^{j+1})^T\textbf{M}_2 w^{j+1}  - A$\;
}
$w^{j+1} = w^{j+1,l}$\;

	\If {update $<$ tolerance}{
		Re-Initialize $\bar \Psi$ as $\displaystyle \argmin_{\bar\Psi} ~ \tr (\bar\Psi^T \bar{\textbf{S}}^2(w^{j+1}) \bar\Psi) , ~ \text{s.t.} ~ \bar \Psi^T \textbf{M}_2 \bar \Psi = \textbf{I}$\;
		\If {Using feature functions which depend on local geometry}
		{Re-Compute features using $\textbf{M}_2 \mathrm{diag}(w^2)$ as Mass matrix}
	}
}
Compute correspondence map with KNN-search of coefficient space;
\end{algorithm2e}

\section{Numerical Experiments}
\label{sec:Results}
In this section, we apply our algorithm to several problems. We begin by working on a typical non-isomorphic matching problem for a pair of shapes with a large deformation: a horse and an elephant. We perform tests showing the effectiveness of our approach given different amounts of landmark points, and demonstrate robustness with respect to noise both on the manifold and in the initial correspondences. Finally, we conduct experiments on the Faust benchmark data set \cite{bogo2014faust}. All numerical experiments are implemented in MATLAB on a PC with a 32GB RAM and two 2.6GHz CPUs.

In all of our experiments, we use randomly chosen correspondence points to create indicator functions as the input features.  The first 100 non-trivial LB eigenfunctions are chosen to calculate the coefficient matching term, as well as for computing the final correspondence. We set $r_1=10,\ r_2=10,\ r_3=1,\ r_4=.01, \ell = 1$ for all experiments, even though the data sets and experimental conditions are very different. This choice of $r_1$ and $r_2$ allows the coefficient matching terms and eigenfunction term to balance each other out, with the choice of $r_3$ still being large enough to preserve the area constraint. $r_4$ is chosen to be small so that the harmonic energy, which tends to be quite large, does not dominate the others. In general, we have observed that our algorithm is quite robust to different choices of parameters. 

\subsection{A Large Deformation Pair: Horse to Elephant}
\begin{figure}[ht]
\begin{minipage}{0.49\linewidth}
\begin{center}
\includegraphics[width=.85\linewidth]{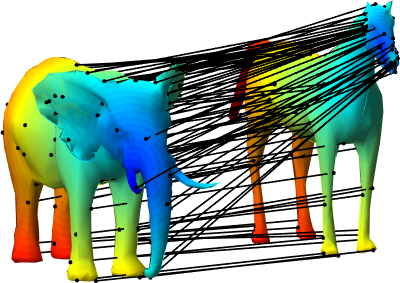}
\end{center}
\end{minipage}
\begin{minipage}{0.49\linewidth}
\begin{center}
\includegraphics[width=.95\linewidth]{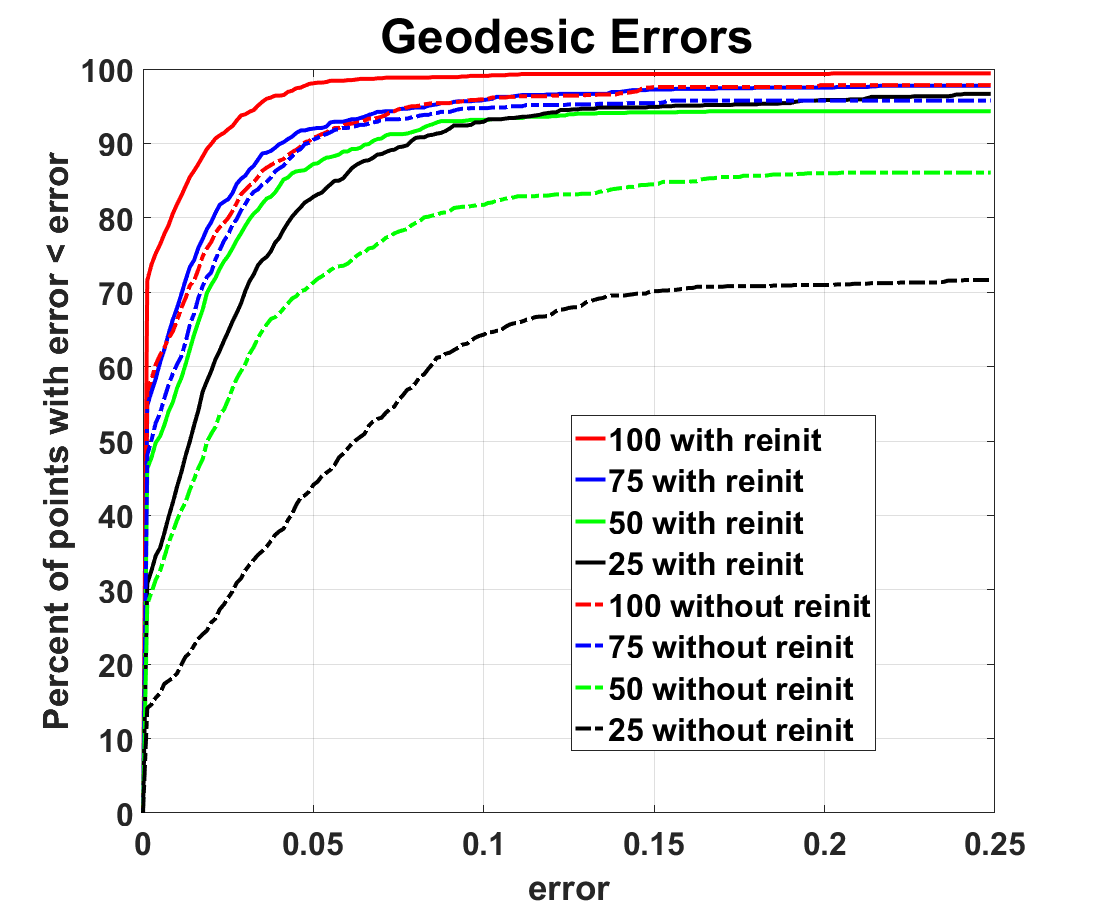}
\end{center}
\end{minipage}
\lrpicaption{Left: visualization of point-to-point map and texture transfer. Right: normalized geodesic errors for various numbers of randomly selected landmarks with and without reinitialization.}
\label{P2Ppic}
\end{figure}
The first experiment is designed to test the effectiveness of the proposed method on a pair of shapes with large deformation. Each surface, a horse and an elephant, is represented by a mesh with 1200 points. One of the challenges in this pair is the large deformations in the sharp corner and elongated regions including ears, teeth, noses and tails on the horse and elephant surfaces. Those regions make the registration problem very difficult.To demonstrate the efficacy of our approach, we perform this experiment under several different conditions. Our algorithm produces excellent results given a sufficient number of landmarks, and it still finds reliable correspondences given limited landmarks. We also show that using our reinitialization scheme (Algorithm \ref{alg:LBBasisPursuit2})  produces a more accurate map than without this extra step (Algorithm \ref{alg:LBBasisPursuit1}). 

\begin{figure}[ht]
\begin{minipage}{.48\linewidth}
\centering
\includegraphics[width=1\linewidth]{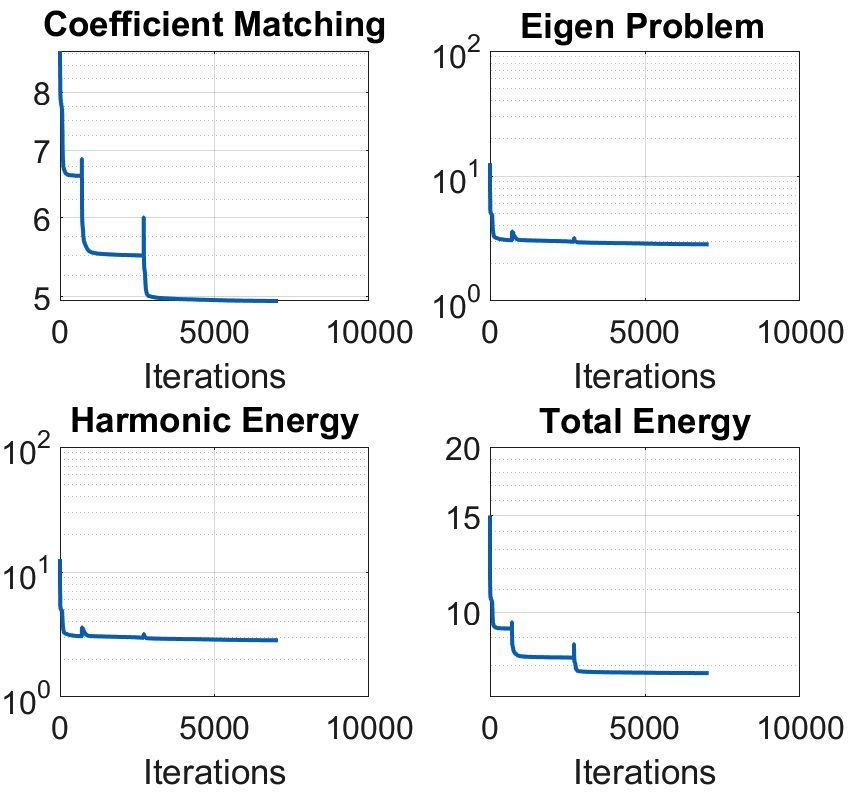}
\end{minipage}
\begin{minipage}{.48\linewidth}
\centering
\includegraphics[width=1\linewidth]{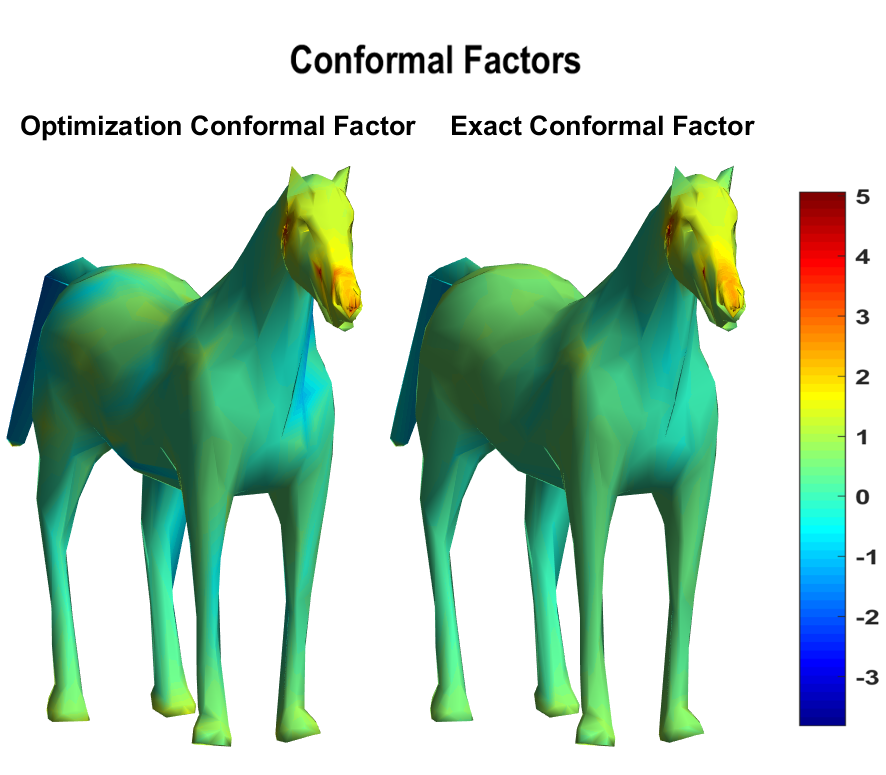}
\end{minipage}
\lrpicaption{Left: convergence curves of our method. The coefficient matching term measures: $\| F^T \textbf{M}_1 \Phi - G^T  \textrm{diag}(w) \textbf{L}^T \bar\Psi \|_F$. The eigen problem is: $(\Psi^T \textbf{S}_2 \Psi)$ and the harmonic energy measures: $w^T \textbf{S}_2 w$ and the total energy is the entire model derived in \eqref{FinalDisc}.  Right: resulting and exact conformal factors.}
\label{convergence}
\end{figure}
Figure \ref{P2Ppic} shows the results of using 100, 75, 50 and 25 known landmark points with and without our reinitialization scheme.  To qualitatively measure the mapping quality, we calculate the normalized geodesic distance from the point on the target surface produced by the map to ground truth following the Princeton Benchmark method \cite{kim2011blended}. These distances are collected into a cumulative error on the right of Figure \ref{P2Ppic} where the $y$-axis measures the percent of points whose distances are less than or equal to the $x$-axis value. For example, in the case of $100$ known landmarks, our algorithm matches over $70\%$ of the points to exact correct point and more than $95\%$ within a $5\%$ error margin. 
\begin{figure}[ht]
\begin{center}
\includegraphics[width=.9\linewidth]{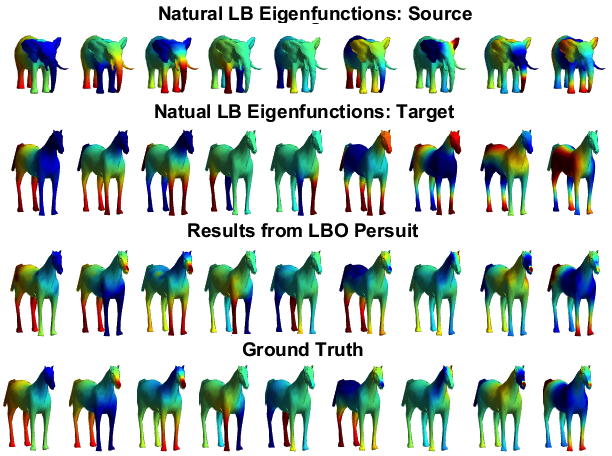}
\end{center}
\lrpicaption{First two rows: The first 9 non-trivial natural LB eigenfunctions of manifolds.
Third row:  results from the proposed basis pursuit algorithm. 
Fourth row: ground truth.}
\label{H2Eeigs}
\end{figure}

\begin{figure}
\centering
\includegraphics[width=.6\linewidth]{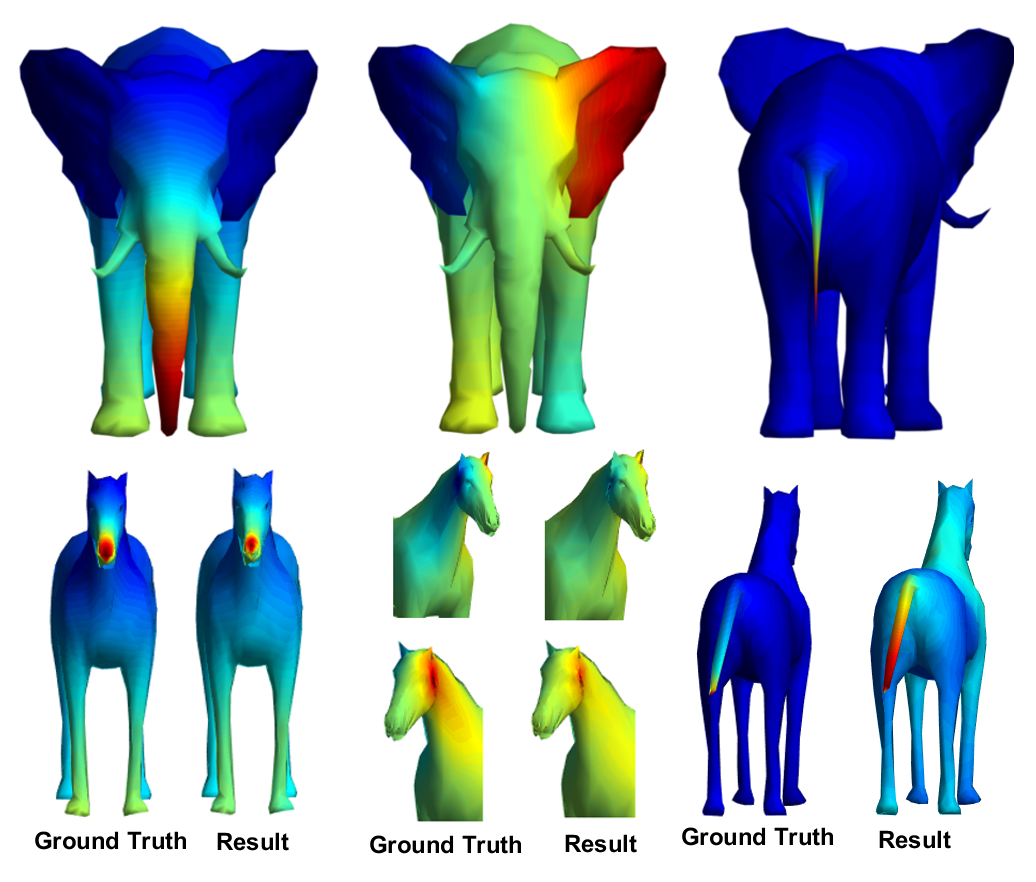}
\lrpicaption{Top row: 9th, 11th and 44th natural LB eigenfunctions on source. Bottom row: results and ground truth.}
\label{closeups}
\end{figure}

\begin{figure}
    \begin{center}
    \includegraphics[width=.6\linewidth]{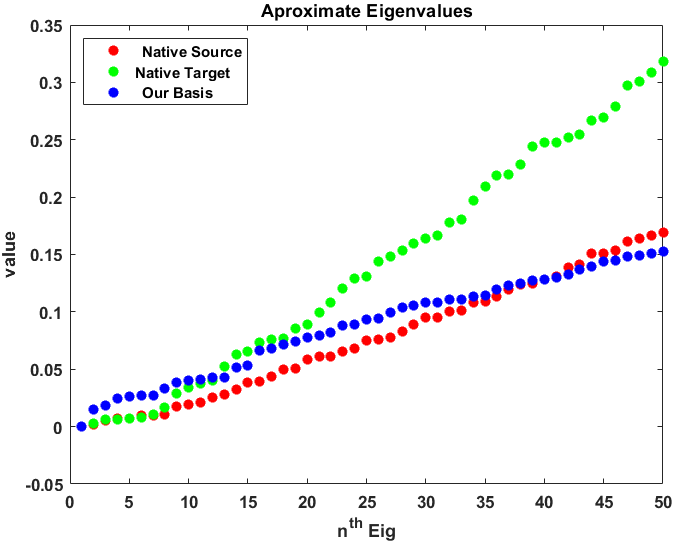}
    \lrpicaption{Alignment of the eigenvalues. Green: native basis, Red: target basis, Blue: deformed basis.}
    \label{isospec}
    \end{center}
\end{figure}

\begin{figure}
    \begin{center}
    \includegraphics[width=.65\linewidth]{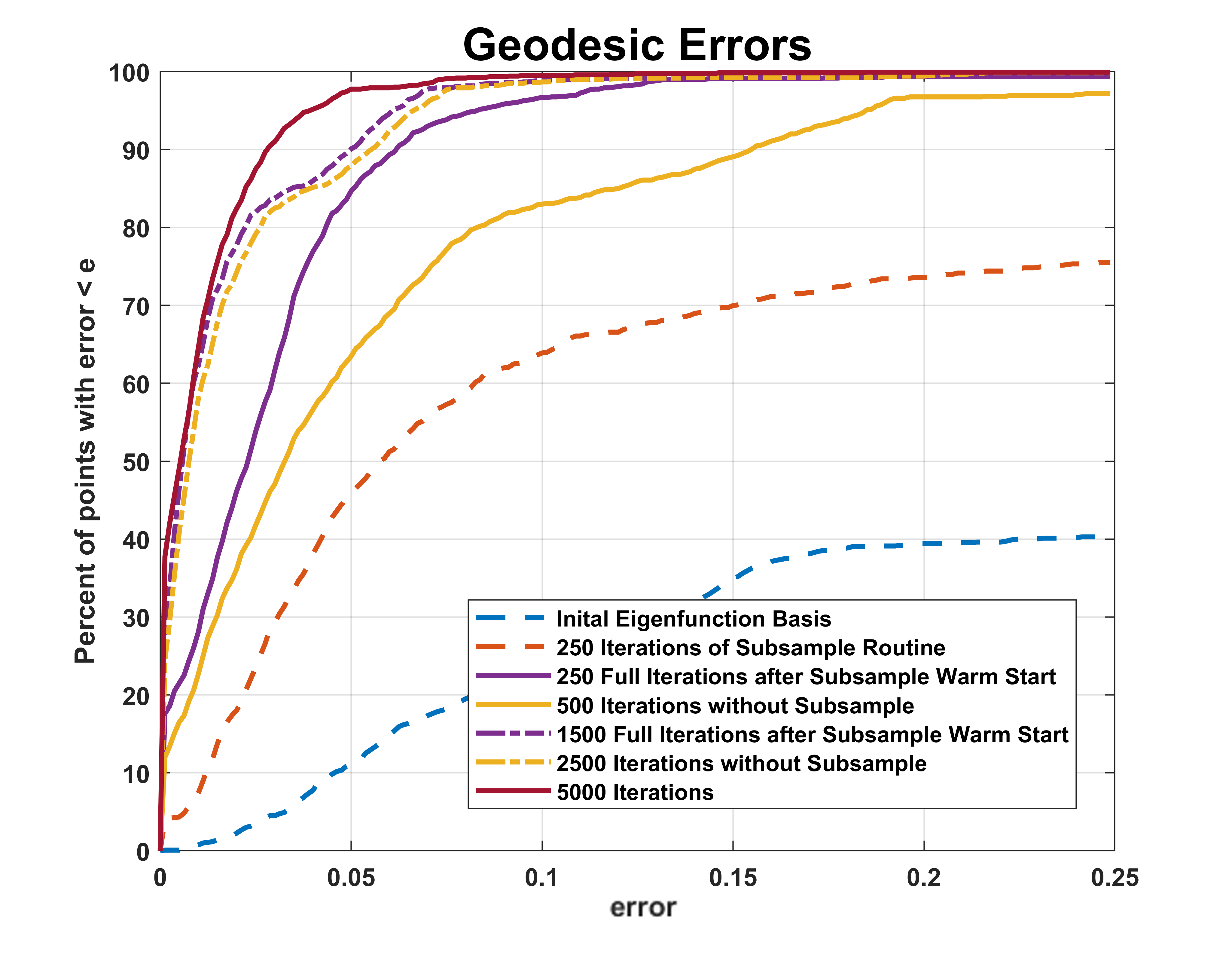}
    \lrpicaption{Quality of correspondences, with and without warm start.}
    \label{fig:suberrors}
    \end{center}
\end{figure}

The left panel of Figure \ref{convergence} shows the convergence of the objective function and illustrates the effectiveness of the reinitialization step. We plot the three terms in the objective function separately as well as the overall objective. We typically observe that the convergence curves in the coefficient matching and total energy flatten quickly as the algorithm tends to a local minimizer. However, each reinitialization
significantly reduces the objective function. We further demonstrate the validity of our algorithm by examining the resulting conformal factor. In the right image of Figure \ref{convergence}, we show the conformal factor calculated by our algorithm as well as the ground truth. The ground truth conformal factor is calculated by using the ground truth point-to-point map to compare the area of the first ring structure around each point on the source and target surface.  Here we plot $u$ where $w^2=e^{2u}$ for better visualization. From this figure we can confirm that the conformal mapping our algorithm produces is very close to the true factor.

Since the elephant and horse are dramatically different shapes, the large dissimilarity of their natural LB eigenfunctions (first two rows of Figure \ref{H2Eeigs}) cannot be expected to produce meaningful correspondence. However, our model overcomes this by capturing the conformal deformation between the surfaces. As a result, the basis computed for the horse (target surface) by our model is consistent with the LB eigenfunctions of the elephant (source surface). We further compare these results with the ground truth which is calculated through the push forward of the LB eigenfunction of the source to the target surface using the \textit{a priori} map.  Figure \ref{closeups}  highlights the consistency of the produced bases on several highly distorted regions. Specifically, we focus on each of the ears, the nose/trunk and the tails. From Figures \ref{H2Eeigs} and \ref{closeups}, we can see that our approach produces a new basis on the target that aligns very closely to the natural LB basis on the sources manifold. Figure \ref{isospec} shows that the eigenvalues of the deformed eigenesytem are much closer to the eigenvalues of the source surface than they are to the target. Although these values are never explicitly taken into account in our numerical algorithm, it is not surprising that aligning the eigenfunctions also aligns their eigenvalues. This close alignment of the eigensystems is the reason that accurate registration results can be obtained using the new basis.

%
%
%

%
\subsection{Sub-Sampling}

\begin{table}
\lrpicaption{Wall-clock time of sub-sampling schemes.}
\begin{tabular}{|c|c|c|c|c|}
\hline
Iterations: & Warm Start (250) & $250$ & $500$  & $1500$  \\ \hline
Time & $322s$       & $2047s$          & $4521s$          & $1485s$ \\ \hline
\end{tabular}
\end{table}

To illustrate the effectiveness of the sub-sampling scheme presented in section \eqref{fig:suberrors}, we repeat the previous experiment twice more, both with and without the sub-sampling warm start, and manually stop the algorithm after 500 iterations. Figure 5 shows the quality of the correspondences produced by the initial basis, the one produced by the subsampling scheme after 250 iterations, the basis produced by algorithms after 250 full iterations using the sub-sampled scheme as a warm start, one produced by the algorithm using 500 iterations of the full scheme without using the warm start and finally results after 1500 and 2500 iterations with and without the warm start. From this figure we observe that the warm start routine can significantly speed up the basis pursuit by providing a good initialization to the full algorithm.
\subsection{Necessity of Conformal Deformation}
\begin{figure}
\begin{center}
\includegraphics[scale=.85]{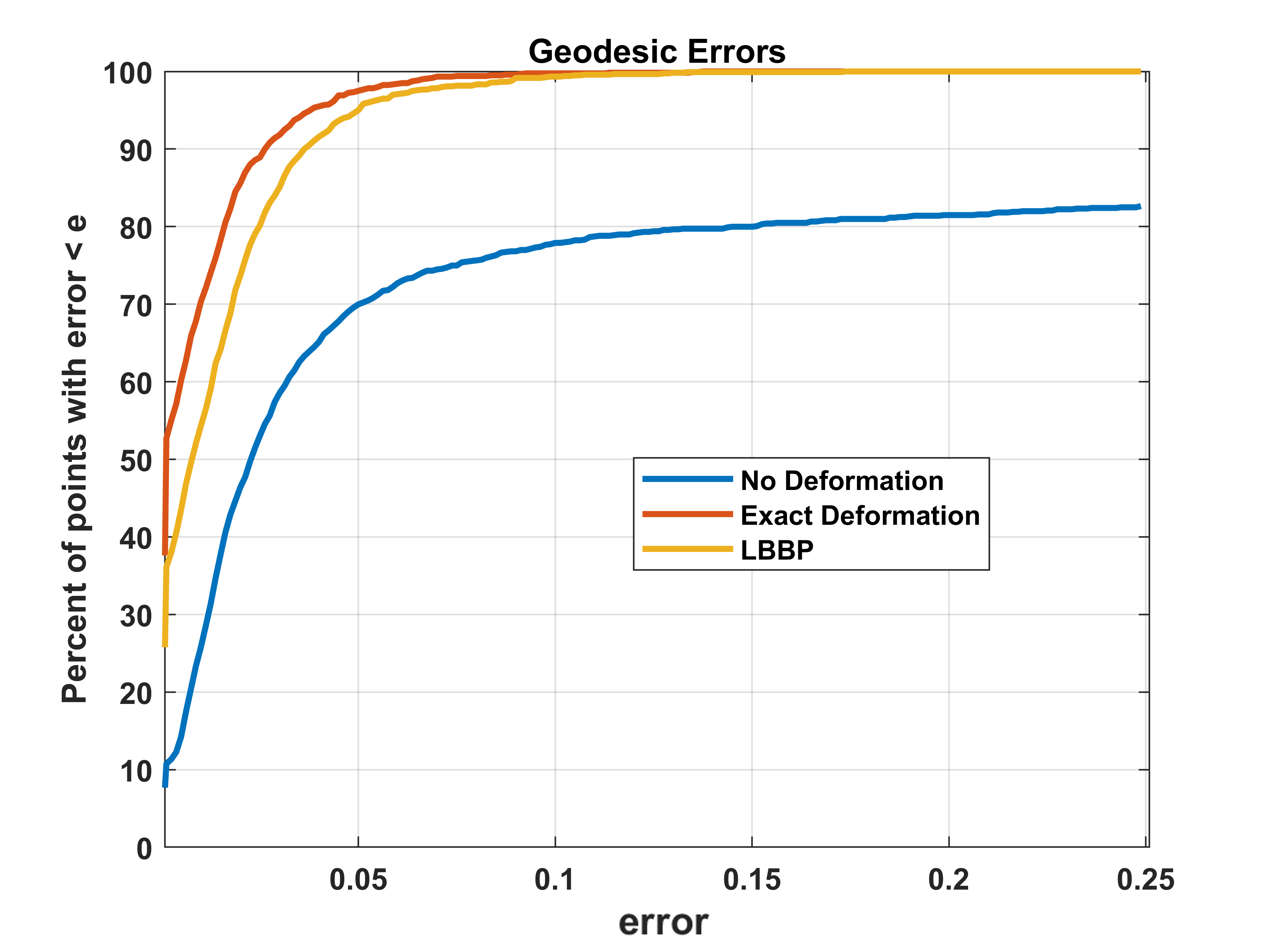}
\lrpicaption{Comparisons of results obtained from basis pursuit without deformation, with oracle deformation and our LBBP method.}
\label{fig:defcomp}
\end{center}
\end{figure}
To show the importance of understanding the deformation between surfaces when using a spectral based method, we run two tests for finding correspondence between horse and elephant using LB basis pursuit algorithm but freezing the conformal deformation. We first set the conformal factor to be $1$ everywhere. This mean no deformation is imposed in the procedure of the LB basis pursuit. Next, we use the exact deformation, which can be computed as \textit{a priori} using the exact correspondence. Figure \ref{fig:defcomp} shows the geodesic errors of the correspondence produced by the optimized bases when using each of these fixed conformal factors, as well as the result of our algorithm referred as LBBP. Although, our algorithm does not achieve the same performance as using the oracle deformation (which is not obtainable in practice), we vastly outperform the non-deformation case.

\subsection{Robustness of Conformal Laplace-Beltrami Basis Pursuit}
\paragraph{Noisy data.}In this experiment, we demonstrate that our algorithm can handle noisy data. Since noise on the surfaces can be viewed as local deformations, our algorithm is automatically robust to geometric noise. Medical scans often have noise resulting from the imaging instruments and manual segmentation. Our model can solve registration problems for this type of data. To demonstrate this, we generate noisy data by adding noise along the normal of each point. The top panel in Figure \ref{fig:robustness} shows the results of two experiments: a noisy elephant to an elephant and a noisy horse to an elephant. We observe that our algorithm still produces very accurate results despite this noise.

\paragraph{Landmark perturbation.}We also demonstrate the robustness of our algorithm to landmark perturbations. Working again on the horse and elephant, we test cases where the landmarks are perturbed to another vertex within the first ring. The magnitude of these perturbations depends on the uniformity and meshing of the surface. The bottom left graph in Figure \ref{fig:robustness} shows the size of the perturbations of the landmarks points as well as the error in their final mapping. The bottom right graph in Figure \ref{fig:robustness} compares the geodesic error of the for all points when 25\%, 50\% and 100\% of the landmarks points are perturbed. From these tests we conclude that our method can successfully reduce the error introduced in the perturbed landmarks and still produce accurate maps in the presence of perturbations.
\begin{figure}[ht]
\begin{minipage}{.45\linewidth}
\includegraphics[width=.95\linewidth]{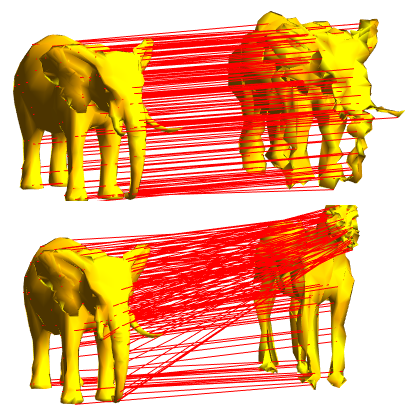}
\end{minipage}\hfill
\begin{minipage}{.45\linewidth}
\includegraphics[width=.95\linewidth]{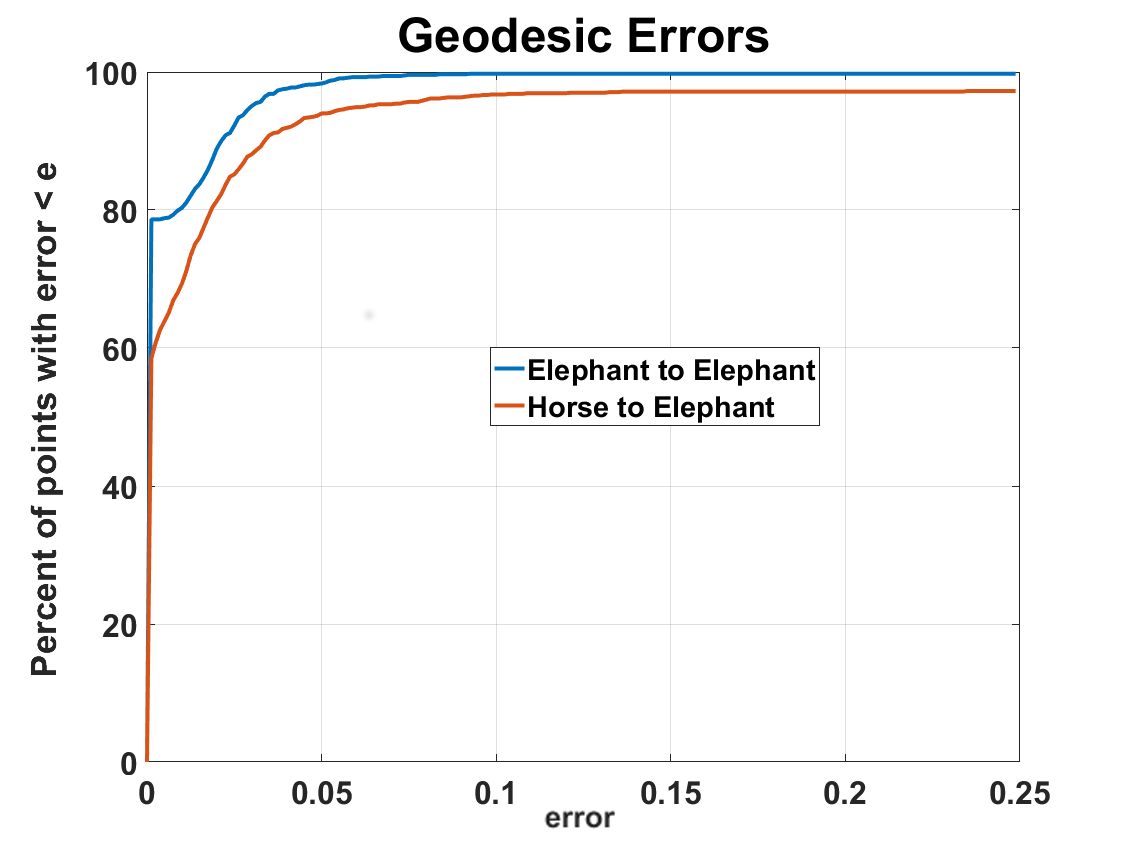}
\end{minipage}
\lrpicaption{Left: point-to-point maps for noisy data. Tight: normalized geodesic errors for noisy data.}
\end{figure}

\begin{figure}
\begin{minipage}{.45\linewidth}
\includegraphics[width=.95\linewidth]{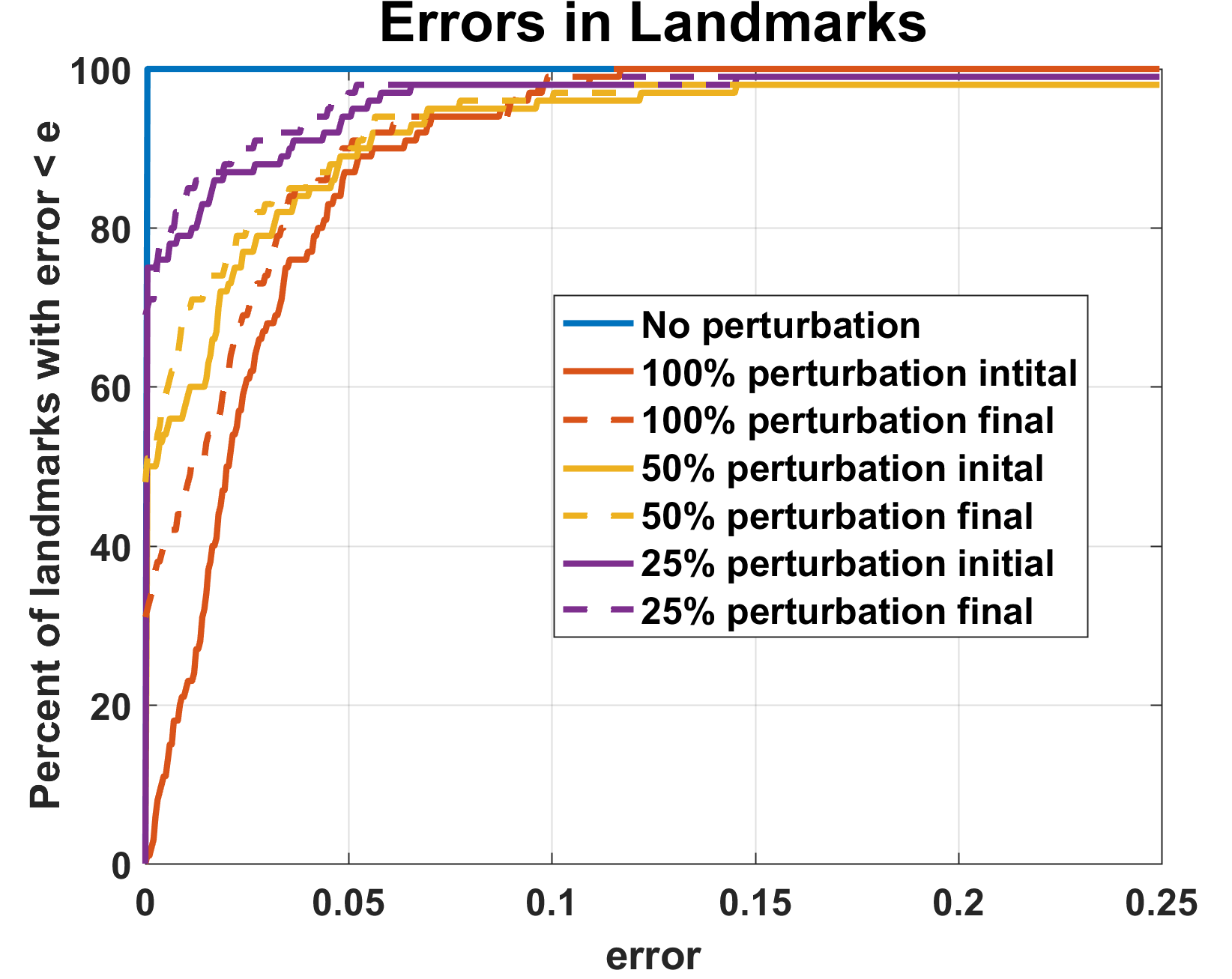}
\end{minipage}
\begin{minipage}{.45\linewidth}
\includegraphics[width=.95\linewidth]{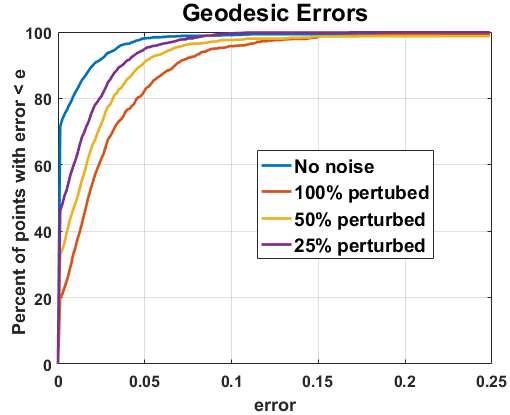}
\end{minipage}
\lrpicaption{Left: Initial perturbations to landmarks and final error of landmarks. Right: Final registration geodesic errors for all points using perturbed landmarks.}
\label{fig:robustness}
\end{figure}
%
%
%
%
%

\subsection{Benchmark Test Using the Faust Dataset} 
In our next experiment, we test our algorithm on a larger dataset to demonstrate its effectiveness and robustness on a variety of shapes. The Faust dataset is a collection of 100 3D shapes composed of 10 real individuals in 10 distinct poses 
Instead of testing all 9900 possible correspondences be each of the pairs, we select two smaller subsets of shapes to formulate to smaller test sets. For the first test, we randomly choose 100 pairs of shapes and compute the correspondences. In the second test, we choose l0 scans and ensure that each individual and each pose is represented exactly once in the test set and compute all 90 correspondence maps. (Figure \ref{Faustex}) \cite{bogo2014faust}.
This selection criteria ensures that no pairs are from the same the pose or individual. The bottom left graph in Figure \ref{Faustex} shows the average error of the mappings for each of these tests. We see that our algorithm again computes very accurate correspondences for both tests. 
\begin{figure}
\centering
\includegraphics[width=.95\linewidth]{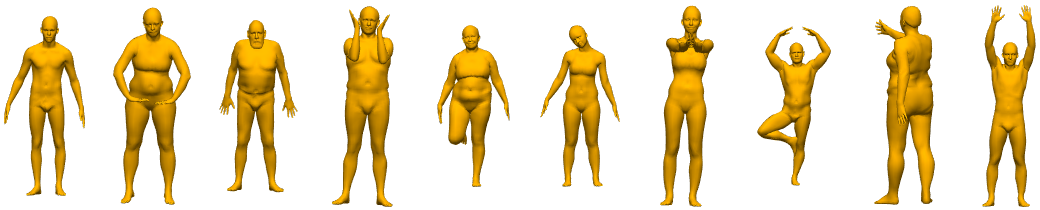}
\includegraphics[width=.45\linewidth]{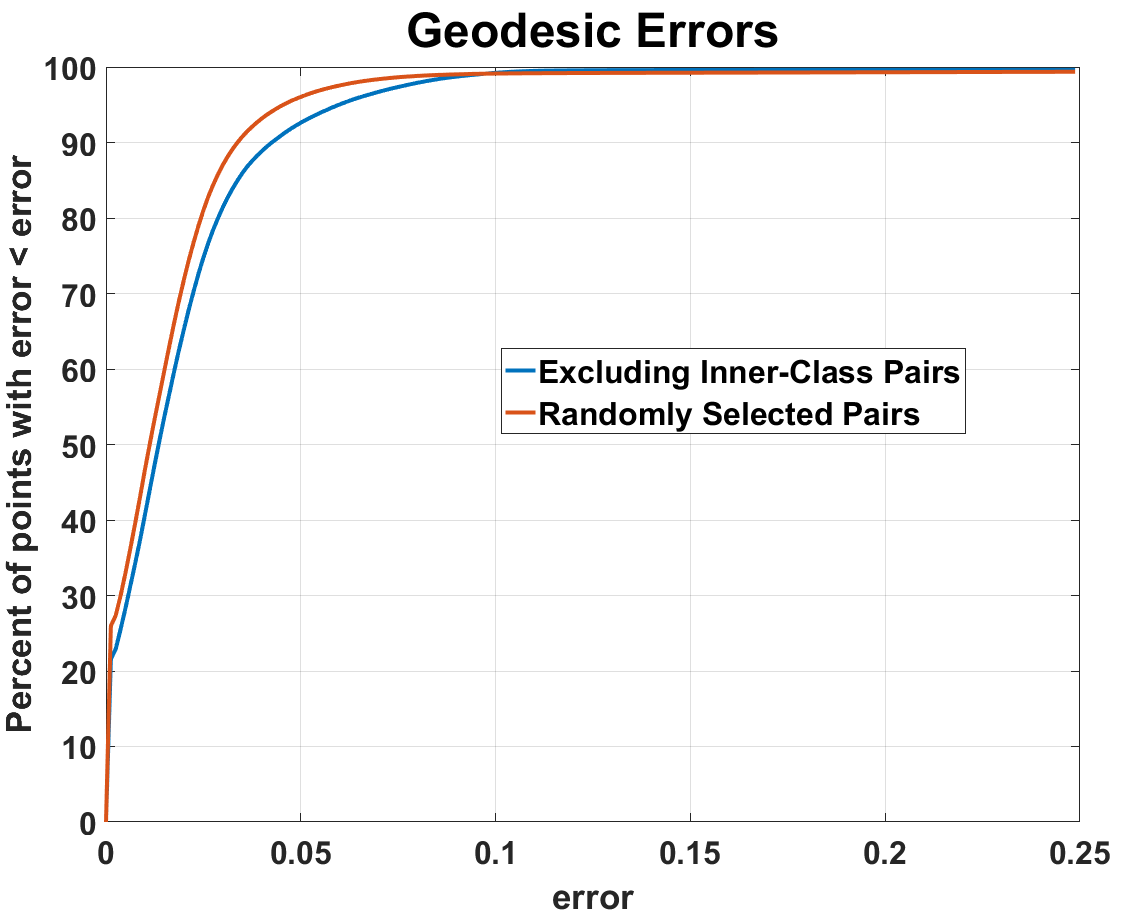}
\lrpicaption{Top: selected subjects from the Faust data set \cite{bogo2014faust}. Bottom: geodesic errors for randomly selected and least isomorphic pairs.} 
\label{Faustex}
\end{figure}
Furthermore, we see that the results for the harder test set are very close to the results for the first test set. This indicates that our approach can effectively handle non-isometric matching problems with large deformations. For each of these test we employ our sub-sampling scheme outlined in Algorithm \ref{alg:sub}, using a subsample of 1000 points to compute a basis which we use as a warm start for the dense meshes. Each pair took roughly 45 minutes to compute.


\subsection{Comparisons with Other Nonisometric Techniques}
Figure \ref{fig:comps} shows the a comparison our algorithm and that of the kernel matching \cite{vestner2017efficient}, coupled quasi-harmonic basis \cite{kovnatsky2013coupled}, basis matching (no deformation in 6.2) and functional maps \cite{ovsjanikov2012functional} approaches on the non-isometric horse to elephant problem and on a nearly isometric problem taken from the Faust dataset. For each test the algorithms used 100 randomly generate heat diffusion functions as corresponding features and solve the minimization problem until the relative objective function update falls below 10e-6. 

The horse-to-elephant test has a much larger deformation, but is also much less densely meshed. As a result the algorithms which are able to encapsulate the change in local geometry, kernel matching and our approach perform much better than methods developed for near-isometric surfaces. On the other had the problem taken from the Faust data set has a much smaller deformation, so methods which rely on the native eigensystems being closely aligned(functional maps and coupled basis) perform much better on this test then on the horse-to-elephant case.

\begin{figure}[ht]
\begin{center}
\begin{minipage}{.49\textwidth}
\includegraphics[scale=.54]{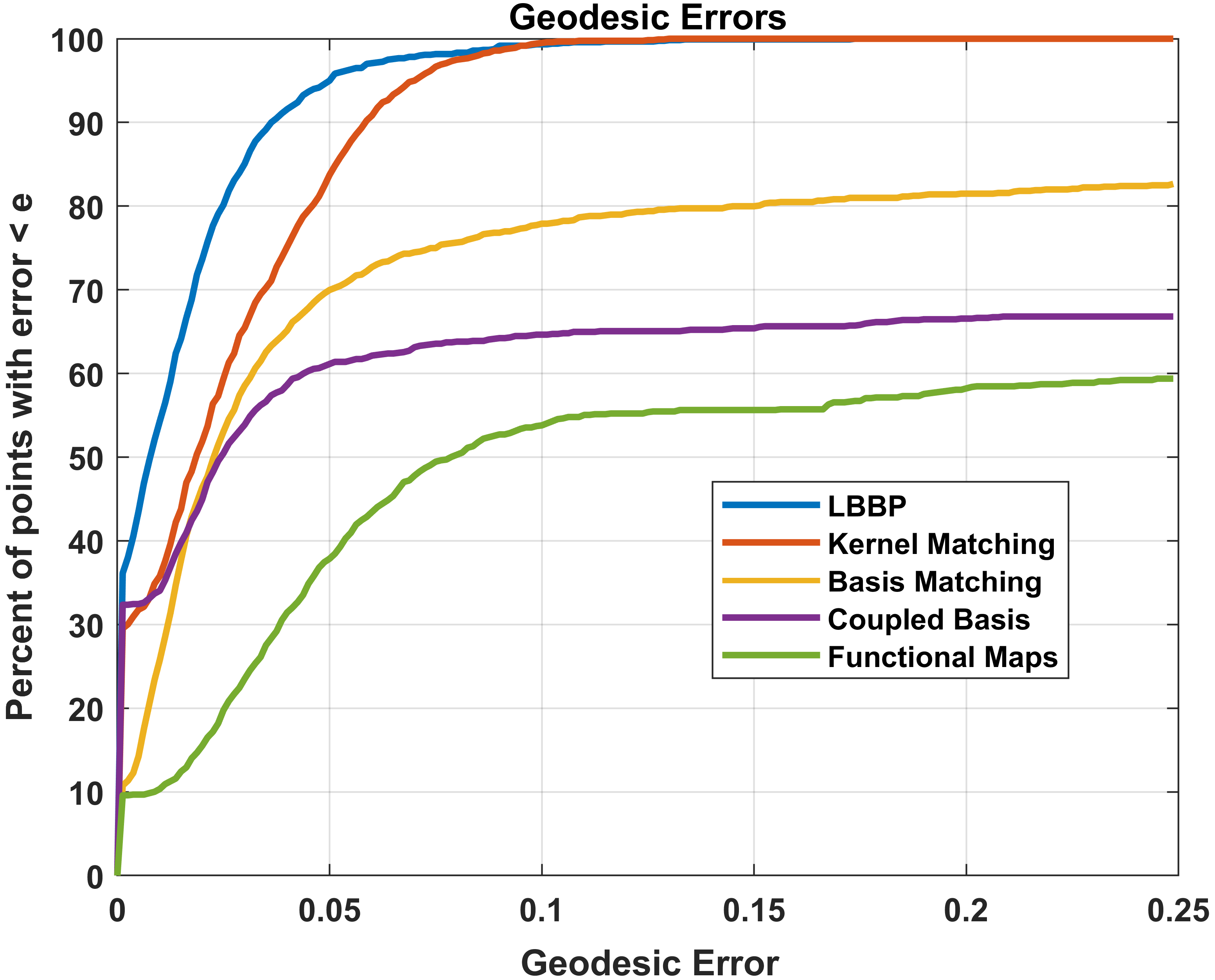}
\end{minipage}
\begin{minipage}{.49\textwidth}
\includegraphics[scale=.45]{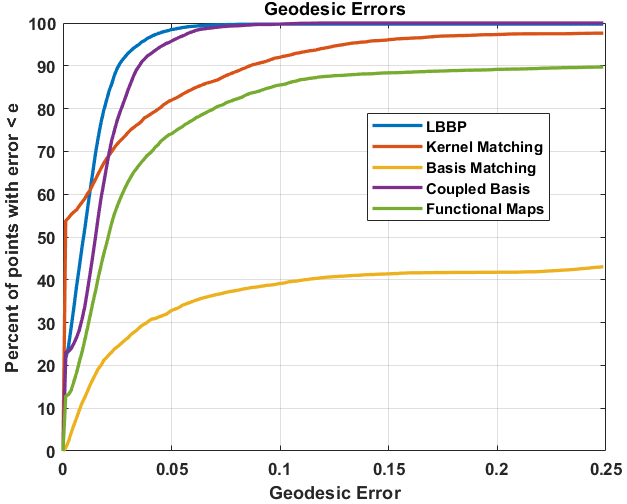}
\end{minipage}
\end{center}
\lrpicaption{Left: Comparison of methods on non-isometric horse-to-elephant. Right: Comparison of methods on Faust example.}
\label{fig:comps}
\end{figure}


\section{Final Remarks on LBBP}
\label{sec:LBBPconclusion}
In this chapter, we have developed a variation method for computing correspondence between pairs of largely deformed non-isometric manifolds. Our approach considers conformal deformation of the manifolds and combines with traditional LB spectral theory. This method naturally connects metric deformations to the spectrum of the manifold and therefore allows us to register manifolds with large deformations. Our approach simultaneously aligns the bases of the manifolds and computes a conformal deformation without having to explicitly reconstruct the deformed manifolds. We have also proposed an efficient, locally convergent method to solve this model based on the PAM framework. Finally, we have conducted intensive numerical experiments to demonstrate the effectiveness and robustness of our methods.

 
\chapter{PARALLEL TRANSPORT CONVOLUTION} \label{Chap:PTC}

Convolution has played a prominent role in various applications in science and engineering for many years and has become a key operation in many neural networks. There has been a recent growth of interests of research in generalizing convolutions on 3D surfaces, often represented as compact manifolds. However, existing approaches cannot preserve all the desirable properties of Euclidean convolutions, namely: compactly supported filters, directionality, transferability across different manifolds. In this paper we develop a new generalization of the convolution operation, referred to as parallel transport convolution (PTC), on Riemannian manifolds and their discrete counterparts. PTC is designed based on the parallel transportation which is able to translate information along a manifold and to intrinsically preserve directionality. PTC allows for the construction of compactly supported filters and is also robust to manifold deformations. This enables us to preform wavelet-like operations and to define convolutional neural networks on curved domains.

\blfootnote{Portions of this chapter previously appeared as:  S. C. SCHONSHECK, B. DONG, AND R. LAI, \textit{Parallel Transportation Convolution: a New Tool for Convolutional Neural Networks on Manifolds},  arXiv preprint, arXiv:1805.07857, 2019. \newline
Portions of this chapter have been submitted as S. C. SCHONSHECK, B. DONG, AND R. LAI, \textit{Parallel transportation convolution: deformable convolutional networks on manifold structured data}, SIAM J. Imaging Sci (2020).}

\section{Introduction to Non-Euclidean Convolution}

Convolution is a fundamental mathematical operation that arises in many applications in science and engineering. Its ability to effectively extract local features, as well as its ease of use, has made it the cornerstone of many important techniques such as numerical partial differential equations and wavelets~\cite{Daubechies:1992ten,lecun1990handwritten,mallat2008wavelet}. More recently, convolution plays a fundamentally important role in convolutional neural networks (CNN)~\cite{lecun1990handwritten} which have made remarkable progress and significantly advanced the state-of-the-art in image processing, analysis and recognition~\cite{lecun1990handwritten,bengio2015deep,krizhevsky2012imagenet,ciresan2012deep,hinton2012deep,sermanet2013overfeat,leung2014deep,sutskever2014sequence}.

In Euclidean space $\RR^n$, the convolution of a function $f$ with a kernel (or filter) $k$ is defined as:
\begin{equation}\label{EucConv}
(f * k)(x) := \int_{\RR^n} k(x-y)f(y) dy.
\end{equation}
Unlike signals or images whose domain is shift invariant (such as images in the plane), functions defined on curved domains do not always have shift-invariance. To define robust convolutional operators on these curved domains the key challenge is to properly define the translation operation. This is one of the main obstacles of generalizing CNN to manifolds.

There has been a recent surge of research in designing CNNs on manifolds or graphs. We refer the interested readers to \cite{bronstein2017geometric} for a review of recent progress in this area. These approaches can be classified into three categories: spectral patch based and group action methods. Spectral methods are based on projecting a signal onto the eigen (Fourier) space and using the convolution theorem to define convolution. Patch based methods use a patch operator to interpolate local geodesic discs on a certain given template. Group action based methods are defined on homogeneous space with a transitive group action. Here, we briefly review some of these approaches.

Spectral methods for manifold convolutions are based on the Fourier transform. The convolution theorem states that, for any two functions $f$ and $g$: 
$\F (f * g) = \F(f)\cdot \F(g)$
where $\F$ is the Fourier transform and $\cdot$ denotes pointwise multiplication. This theorem can be naturally generalized to functions on manifolds if we let $\F$ to be the projection operator onto the Laplace-Beltrami (LB) eigensystem. This method has proven effective to handle functions on a fixed domain, and can be applied to graphs as well \cite{hammond2011wavelets,bruna2013spectral,dong2015sparse,henaff2015deep}. However, these methods have two fundamental limitations. First, the uncertainly principle states that a function can have compact support in either the time or frequency domain, but not both. These methods normally use only a finite number of eigenfunctions in the Fourier domain. As a result the kernels that arise from these methods are not localized (i.e. not compactly supported in the spatial domain). The second major drawback to these methods is that since they rely on the eigensystem of the domain, any deformation of the domain will change the eigensystem which in turn changes the filters. The high-frequency LB eigenfunctions of a manifold are extremely sensitive to even small deformations. This means that anything designed for, or learned on, one manifold can only be applied to problems on the same domain. This limits the transferability of the spectral based methods, and makes them inefficient for working on large collections of shapes.

Patch based methods are originally proposed in \cite{masci2015geodesic}. In this work the authors propose the use of a local patch operator to interpolate local geodesic discs of the manifold to a fixed template and develop a Geodesic Convolutional Neural Network (GCNN). Then for each point on the manifold, the convolution is calculated as the multiplication between the values of the kernel and the extracted patch on the template. To do so, they create a local polar coordinate system at each point. One drawback to this approach is that there is no natural way to choose the origin of the polar coordinate. To overcome this, the authors consider an angular pooling operation that evaluates all rotations of their kernel at each point and selects the orientation which maximizes the convolution in a point-wise fashion. Since the angular pooling operation is computed independently at each point, the selected orientation does not reflect the geometric structure of the base manifold and may not be consistent even for nearby points.
More recently, \cite{boscaini2016learning} proposes an anisotropic convolutional neural network (ACNN) by replacing the aforementioned patch operator with an operator based on anisotropic heat kernels with the direction of anisotropy fixed on the principle curvature at each point. Although this introduces a new hyper-parameter (the level of anisotropy), it allows the kernels to be directionally aware. However, 
filters developed for applications on one manifold can only be applied to manifolds in which the local directions of principal curvature are the same. 
In \cite{monti2016geometric}, the authors proposed a mixture model network (MoNet) whereby they learn a patch operator to interpolate the functional value to a template. The convolution kernel is set to be a Gaussian function with learnable mean and covariance matrices. However, MoNet requires a choice of local coordinates that may suffer from the same drawback as GCNN and ACNN.

Group action-based methods are recently discussed in several works ~\cite{kondor2018generalization,cohen2018spherical,chakraborty2018cnn}. A typical application of these methods is to extend CNN on the unit sphere~\cite{cohen2018spherical}, where convolutional operations can be defined by transferring kernels on the unit sphere through the rotation group. This idea can be generalized to a manifold $\M$ with a transitive group action $G$, where any two points $p, q \in\M$ can be connected by some group element, i.e. there exists $g\in G$ such that $p = g\cdot q$. In this setting, the manifold is called a homogeneous space which essentially equivalent to a quotient group $G/G_p$ where $G_p$ is the stabilizer of the group action at $p$. However, the general manifolds considered in this paper often do not have an associated transitive group action. Therefore, it is still necessary to consider a new method to apply convolution on manifolds without group action structure. 

In Euclidean space $\RR^n$, the convolution of a function $f$ with a kernel (or filter) $k$ is defined as:.

\begin{table*}
\lrpicaption{Comparison on different generalizations of convolutional operator on general manifolds.}
\begin{center}
\resizebox{\textwidth}{!}{
    \begin{tabular}{| l || c | c | c | c  | c | c |}
    \hline
                            Method   & Filter Type & Support  & Extraction & Directional   & Transferable & Deformable \\ 		\hline
    Spectral~\cite{bruna2013spectral} & Spectral    & Global   & Eigen          & \cmark          & \xmark         &\xmark  \\
    TFG~\cite{dong2015sparse} & Spectral    & Global   & Eigen          & \cmark            & \xmark          & \xmark  \\
    WFT~\cite{shuman2016vertex} & Spectral    & Local   & Windowed Eigen          & \cmark            & \xmark          & \xmark  \\
    GCNN~\cite{masci2015geodesic}     & Patch       & Local    & Variable       & \xmark              & \cmark         & \cmark \\
    ACNN~\cite{boscaini2016learning}  & Patch       & Local    & Fixed          & \cmark          & \cmark         & \xmark  \\
    \hline
    \textbf{PTC} & Geodesic    & Local    & Embedded & \cmark            & \cmark        & \cmark\\
    \hline
    \end{tabular}
}

\label{CompTable}
\end{center}
\end{table*}

In the Euclidean setting, convolution operators that are frequently used in practice have compactly supported filters which allow for fast and efficient computations on both CPUs and GPUs. Furthermore, they are directionally aware, deformable and can be easily transferred from one signal domain to another. Previous attempts to generalize the convolution operator on manifolds have failed to preserves one or more of these key properties. In this project, we propose a new way of defining the convolution operation on manifolds based on parallel transportation. We shall refer to the proposed convolution as the parallel transportation convolution (PTC). The proposed PTC is able to preserve all of the aforementioned key proprieties of Euclidean convolutions. This spatially defined convolution operation enjoys flexibility of conducting isotropic or anisotropic diffusion, and it also enables us to perform wavelet-like operations as well as defining convolutional neural networks on manifolds. Additionally, PTC can be shown to simplify to the Euclidean convolution when the underlying domain is flat. Therefore, the PTC can be used to define natural generalizations of common Euclidean convolution-like operations on manifolds.

To be more precise, we seek a general convolution operator of the form:
\begin{equation}\label{ManConv}
(f *_{\M} k) (x) := \int_{\M} k(x,y)f(y) d_{\M}y.
\end{equation}
where $k(x,\cdot)$ is the parallel transport of a compact support kenrel $k(x_0,\cdot)$ to $x$. In the Euclidean case \eqref{EucConv}, the term $x-y$ encapsulates the direction from $x$ to $y$, while on manifold such a vector can be understood as a tangent direction at $x$ pointing to $y$. The crucial idea of PTC is to define a kernel function $k(x,y)$ which is able to encode the direction $x-y$ using a parallel transportation in a way which naturally incorporates the manifold structure.

Table \ref{CompTable} compares the proposed PTC with previous approaches. Since the group action methods are limited to homogenous spaces, which do not fit our objective of designing convolution on more general manifolds, we do not include these methods in the table. A method is called directional if the filters are able to characterize non-isotropic features of the data. A method is transferable if the filters can be applied to manifolds with different LB eigensystems. Finally, a technique is said to be deformable if large deformations in the manifold (i.e. those which change properties such as curvature or local distances) do not drastically affect the convolution.




\section{Mathematical Background of PTC}
\label{sec:Math}
In this section, we discuss some background of differential manifolds and parallel transportation. This provides a motivation and theoretical preparation for the proposed convolutional operation.


\subsection{Manifolds, Tangent Spaces and the Exponential Map}

Let $\M$ be a two dimensional differential manifold associated with a metric $g_{\M}$. For simplicity we assume that $(\M,\delta_M)$ is embedded in $\RR^3$. We write the set of all tangent vectors at any point $x\in\M$ as $\T_x\M$ which we refer to as the tangent plane of $\M$ at $x$. The disjoint union of all tangent planes, $\bigcup_{x} \{(x,v)\in\M\times \RR^3~|~x\in\M, v\in\T_x\M\}$, forms a four dimensional differential manifold called the tangent bundle $\T\M$ of $\M$. A vector field $X$ is a smooth assignment $X: \M\rightarrow\T\M$ such that $X(x)\in\T_x\M, ~\forall x\in\M$. We denote the collection of all smooth vector fields on $\M$ as $C^{\infty}(\M, \T \M)$.

Let $\T_{x,\delta} \M = \{v\in \T_{x}\M ~|~ \langle v,v\rangle_{g_{\M}} \leq \delta\}$ be a $\delta$-neighborhood of the tangent space at a given point $x$. The {\it exponential map}, $\exp: \T_{x,\delta} \M \rightarrow \M_{x,\delta}$, maps vectors from the tangent space back onto a nearby region $\M_{x,\delta}$ of $x$ on the manifold. Formally, given $v\in \T_{x,\delta}\M$ there exists a unique geodesic curve $\gamma$ with $\gamma(0)= x$ and $\gamma'(0) = v$ such that $\exp_{x}(v) = \gamma(1)$.
Note that this map is defined in the local neighborhood where the differential equation: $\gamma'(0) = v$ with initial condition $\gamma(0) = x$ has a unique solution. The size of this neighborhood depends on the local geometry of the manifold. In fact, the exponential map defines a one-to-one correspondence between $\T_{x,\delta}\M$ and $\M_{x,\delta}$ if $\delta$ is smaller than the injective radius of $\M$~\cite{kobayashi1969foundations,chavel2006riemannian}. Since this map is a bijection, there is a natural inverse (sometimes called the \textit{logistic map}) which we denote as $\exp_{x}^{-1}: \M \rightarrow \T_{x,\delta} \M $.


\subsection{Parallel Transportation}
\label{subsec:PT}
Parallel transportation is a method of translating a vector, based an affine connection, along a smooth curve so the resulting vector is `parallel'. An affine connection translates the tangent spaces of points on a manifold in a way that allows us to differentiate vector fields along curves. Formally, an {\it affine connection} is a bilinear map $\nabla: C^{\infty}(\M, \T \M) \times C^{\infty}(\M, \T \M) \rightarrow C^{\infty}(\M, \T \M)$, such that for all smooth functions $f, g$ and all vector fields $X,Y, Z$ on $\M$ satisfy:
\begin{equation}
\left\{\begin{array}{c}
\nabla_{fX + gY} Z = f \nabla_X Z + g \nabla_Y Z \qquad \quad \\
\nabla_{X} (aY + bZ) = a \nabla_X Y + b \nabla_X Z  \quad a, b \in \RR \\
\nabla_X (fY) = df(X)Y+f\nabla_X Y  \qquad
\end{array}
\right.
\end{equation}
In particular, an affine connection is called the Levi-Civita connection if it is torsion free ($\nabla_X Y - \nabla_Y X = [X,Y]$) and compatible with the metric ( $X\langle Y,Z\rangle_{g_\M} = \langle\nabla_X Y,Z\rangle_{g_\M} + \langle Y,\nabla_X Z\rangle_{g_\M}$).
In this case, the transport induced by the connection preserves both the length of the transported vector and the angle it makes with the path it is transported along.

A curve $\gamma:[0,\ell] \rightarrow \M$ on $\M$ is called geodesic if $\nabla_{\dot \gamma(t)} \dot \gamma(t) = 0$. More precisely, using local coordinate system, we can write $\displaystyle \dot \gamma (t) = \sum_{i=1}^2 \frac{dx^i}{dt}\partial x^i$, then plugging in the covariant derivative leads to the following ordinary differential equation for a geodesic curve:
\begin{equation}\label{Geodesic Equation}
\frac{d^2x^k(t)}{dt^2} + \sum_{i,j=1}^2 \Gamma^k_{ij}(t) \frac{dx^i(t)}{dt} \frac{dx^j(t)}{dt} = 0, \qquad k = 1, 2
\end{equation}
where $\Gamma^k_{i,j}$ is the Christoffel symbols associated with the local coordinate system.
For any two points $x_0$ and $x_1$ on a complete manifold $\M$, there will be a geodesic $\gamma:[0,\ell] \rightarrow\M$ connecting $x_0$ and $x_1$.
A vector field $X(t)$ on $\gamma(t)$ is called {\it parallel}  if $\nabla_{\dot \gamma} X = 0$. Therefore, given any vector $v \in \T_{x_0} \M$,  we can transport $v$ to a vector $v'$ in $\T_{x_1}\M$ by defining $v' = X(\ell)$ from the solution of the initial value problem $\nabla_{\dot \gamma(t)} X(t) = 0$ with $X(0) = v$.
In other words,
If we write $X(t) = \sum_{i=1}^2 a^i(t)\partial x^i$, the problem of solving $X$ reduces to find the appropriate coefficients $\{a^k(t)\}$ satisfying the parallel transport equation. This can be written as the following first order linear system:
\begin{equation}
\left\{\begin{array}{c}
\displaystyle \dfrac{d a^k(t)}{dt} +  \sum_{i,j=1}^2 \dfrac{d \gamma^i}{dt} a^j(t)\Gamma ^k_{ij} = 0, \quad k = 1, 2 \vspace{0.2cm}\\
\sum_{i=1}^2 a^i(0) \partial x^i = v
\end{array}\right.
\label{eqn:PTC_ODE}
\end{equation}
Solving this equation finds a parallel vector field $X$ along $\gamma(t)$ which provides parallel transportation of $v = X(0) \in\T_{x_0}\M$ to $X(\ell)\in\T_{x_1}\M$. We denote the parallel transportation of a vector from $x_0$ to $x_1$ along the geodesic as $\PT_{x_0}^{x_1}:\T_{x_0,\delta}\M \rightarrow\T_{x_1,\delta}\M$.


\section{Parallel Transport Convolution (PTC)}
\label{sec:PTC}
In this section, we introduce parallel transport convolution on manifolds which provide a fundamental important building block of designing convolutional neural networks on manifolds. After that, we discuss a useful numerical discretization of PTC.

\subsection{Mathematic Definition of PTC}
Unlike one-dimensional signals or images whose base space is shift invariant, many interesting geometric objects modeled as curved manifolds do not have shift-invariance. This is an essential barrier to adopt CNN to conduct learning on manifolds and graphs except for a few recent work where convolution is defined in the frequency space of the LB operator~\cite{bruna2013spectral,shotton2013real,rodola2014dense,rustamov2013wavelets}. These methods only manipulates the LB eigenvalues by splitting the high dimension information to LB eigenfunctions. Limitations include that it is always isotropic due to the LB operator and can only approximate the even order differential operators~\cite{dong2015sparse}. In addition, there is another recent method discussed in~\cite{masci2015shapenet}, in which convolution is directly considered on the spatial domain using local integral on geodesic disc although it does not involve manifold structure as transportation on manifold is not considered. The lack of an appropriate method of defining convolution on manifolds motivates us to introduce the following way of defining convolution on manifolds through parallel transportation. This geometric way of defining convolution naturally integrates manifold structures and enables us to apply established euclidean learning techniques on non-euclidean problems.

Let $\M(x_0, \delta) = \{y \in \M \ | \ d_{\M}(x_0,y) \leq \delta \}$ and $k(x_0,\cdot): \M(x_0,\delta)\rightarrow \RR$ be a compactly supported kernel function centered at $x_0$ with raduis $\delta$. We assume $k(x_0,y) = 0$ for $y \notin \M(x_0,\delta)$ and require the radius of the compact support parameter $\delta$ be smaller than the injective radius of $\M$ to guarantee the bijectivity of the exponential map. Note that this is a very mild assumption, since most modern CNN architectures use filters which are much smaller than the entire image. It is also important to remark that parameterization of $k(x_0,\cdot)$ can be determined by user. It may be designed hand designed for specific applications, or be learned as a component of a neural network.

Our idea of defining convolution on manifolds relies on transporting this compactly supported kernel $k(x_0,\cdots)$ to every other point on $\M$ in a way which reflects the manifold geometry.
More specifically, given any point $x\in\M$, we first construct a vector field transportation $\PT_{x_0}^x:\T_{x_0,\delta}\M \rightarrow \T_{x,\delta}\M$ using the parallel transportation discussed in Section \ref{subsec:PT}. Then $k(x_0,\cdot)$ can be transported on $\M$ as:
\begin{equation}
\label{eqn:kernel}
k(x,\cdot):\M_{x,\delta} \rightarrow  \RR 
\end{equation}
\begin{equation}
y \mapsto  k\left(x_0, \exp_{x_0}\circ (\PT_{x_0}^{x})^{-1}\circ \exp_x^{-1}(y)\right)
\end{equation}
Note that the above definition is analogous to convolution in the Euclidean space \eqref{EucConv}. Here, the exponential map $\exp_x^{-1}(y)$ mimics the vector $x - y$, and $\PT_{x_0}^x$ is a generalizes the translation operation. In fact, it can be easily checked that the above definition is compatible with Euclidean case by setting the manifold $\M$ to be $\RR$.

By plugging \eqref{eqn:kernel} into \eqref{ManConv}, we can now formally define the {\it parallel transport convolution} operation of $f$ which a filter $k$, centered at $x_0$:
\begin{equation}
\begin{split}\label{eqn:ManConvRot}
(f *_{\M} k) (x) := \int_{\M} f(y)~k(x,y) d_\M y =  \\
\int_{\M} f(y) ~k\left(x_0, \exp_{x_0}\circ (\PT_{x_0}^{x})^{-1}\circ \exp_x^{-1}(y)\right) \ d_{\M}y
\end{split}
\end{equation}
As natural extensions, this approach can also be used to define dilations, reflections and rotations of the kernel by simply manipulating the reference vector $\exp^{-1}_{x}(y)$. More specifically, shrinking or expanding the kernel by a factor of $s$ is defined by multiplying the lengths of the vectors in the tangent space by $s$. If $s$ is chosen to be negative then the kernel is reflected through its center and dilated by a factor of $|s|$. Similarly, rotating the kernel can be achieved by multiplying a rotation matrix $R_{\theta}$ to the reference vectors on the tangent plane. In summary, the scaling of $k$ by $s$ with a rotation of $\theta$ is defined as:
\begin{equation}\label{RotandDil}
k_{s,\theta}(x,y) := \frac{1}{C_{x}} k\Big(x_0,\ \exp_{x_0}\circ (\PT_{x_0}^{x})^{-1}(s\  R_{\theta} \ exp^{-1}_{x}(y)\big)\Big)
\end{equation}
where $R_\theta$ is a rotation matrix and $\displaystyle \frac{1}{C_{x}}$ is a normalization constant that can be used to preserve volume of the kernel.

\begin{theorem}
Parallel transport convolution is invariant under isomorphism.
\end{theorem}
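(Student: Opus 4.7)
My plan is to interpret the statement precisely: given an isometry $\varphi:(\M,g_\M)\to(\N,g_\N)$ between Riemannian surfaces, a function $f$ on $\M$, a compactly supported kernel $k(x_0,\cdot)$ centered at $x_0\in\M$, and their pushforwards $\tilde f=f\circ\varphi^{-1}$ and $\tilde k(\varphi(x_0),\varphi(z)):=k(x_0,z)$ defined on $\N$, I want to show
\begin{equation*}
(\tilde f *_{\N} \tilde k)(\varphi(x))\;=\;(f *_{\M} k)(x)\qquad\text{for every }x\in\M.
\end{equation*}
The whole proof rests on three classical facts about isometries together with a single change-of-variables step in the defining integral \eqref{eqn:ManConvRot}.

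First I would collect the three facts and indicate why each is a routine consequence of $\varphi^* g_\N = g_\M$: (i) the volume form is preserved, $\varphi^* d\N = d\M$; (ii) geodesics push to geodesics and the exponential map is natural, $\exp^{\N}_{\varphi(x)}\circ d\varphi_x = \varphi\circ \exp^{\M}_x$, so in particular $(\exp^{\N}_{\varphi(x)})^{-1}\circ\varphi = d\varphi_x\circ(\exp^{\M}_x)^{-1}$ on the injectivity neighborhood; and (iii) because $\varphi$ carries the Levi-Civita connection of $\M$ to that of $\N$, parallel transport is natural as well, yielding
\begin{equation*}
\PT^{\N,\varphi(x)}_{\varphi(x_0)}\;=\;d\varphi_x\;\circ\;\PT^{\M,x}_{x_0}\;\circ\;d\varphi_{x_0}^{-1}.
\end{equation*}
Fact (iii) follows by observing that if $X(t)$ solves the ODE system \eqref{eqn:PTC_ODE} along $\gamma$, then $d\varphi(X(t))$ solves the corresponding system along $\varphi\circ\gamma$, since the Christoffel symbols are determined by $g$ alone.

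With these in hand the main computation is a substitution. Starting from the right-hand side of the definition \eqref{eqn:ManConvRot} on $\N$, I would substitute $\tilde y=\varphi(y)$, use (i) to rewrite $d_{\N}\tilde y$ as $d_{\M}y$, use $\tilde f(\varphi(y))=f(y)$, and use (ii)--(iii) to rewrite the argument of $\tilde k$ as
\begin{equation*}
\exp^{\N}_{\varphi(x_0)}\circ (\PT^{\N,\varphi(x)}_{\varphi(x_0)})^{-1}\circ (\exp^{\N}_{\varphi(x)})^{-1}(\varphi(y))\;=\;\varphi\Bigl(\exp^{\M}_{x_0}\circ (\PT^{\M,x}_{x_0})^{-1}\circ (\exp^{\M}_{x})^{-1}(y)\Bigr),
\end{equation*}
after which the definition $\tilde k(\varphi(x_0),\varphi(z))=k(x_0,z)$ collapses the integrand to the one appearing in $(f*_{\M}k)(x)$. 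The scaled and rotated variants in \eqref{RotandDil} are handled identically because $d\varphi_{x_0}$ is a linear isometry of tangent spaces and hence commutes with scalar multiplication and with any rotation taken in an orthonormal frame transported by $d\varphi$.

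The only subtlety, and the step I would flag as the main obstacle, is the bookkeeping around bijectivity and well-definedness of the exponential maps: the support radius $\delta$ must be smaller than the injectivity radii on both sides. Since $\varphi$ is an isometry the injectivity radii agree at corresponding points, so any admissible $\delta$ on $\M$ is admissible at $\varphi(x_0)\in\N$, and the inverse maps used above are legitimate. Once this is noted, no additional estimates are needed and the identity follows, completing the proof.
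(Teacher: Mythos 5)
Your proof is correct and takes essentially the same approach as the paper, which argues in one line that isometries preserve the metric, the geodesic paths, the parallel transport, and the volume element, so every ingredient of \eqref{eqn:ManConvRot} is invariant. Your version simply makes that argument rigorous by stating the pushforward conventions, isolating the three naturality facts (for $d\M$, $\exp$, and $\PT$), and carrying out the change of variables explicitly, which is a welcome expansion of the paper's sketch rather than a different route.
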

\begin{proof}
By definition isomorphims preserve the Riemannin metric and therefore distances and geodesic paths. Then both the paths $\PT_{x_0}^{x}$ and the metric $d_{\M}y$ are invariant to isomorphims, therefore so is \eqref{eqn:ManConvRot}.
\end{proof}


\subsection{Numerical Discretization of PTC}

In stead of solving the system of ODEs~\eqref{eqn:PTC_ODE} on manifolds, we novelly propose the following method to compute parallel transport by considering transition matrices among local frames generated by the vector field obtained from the distance function on manifolds. Our idea is motivated from the following fact. Given smooth vector fields $\{\vb^1,\vb^2\}$, one can define linear transformation among tangent planes $\mathcal{L}(\gamma)_s^t:\mathcal{T}_{\gamma(s)}\mathcal{M} \rightarrow\mathcal{T}_{\gamma(t)}\mathcal{M}$, then the corresponding parallel transport through the associated infinitesimal connection $\nabla_{\dot{\gamma}}V = \lim_{h\rightarrow 0}\frac{1}{h} (\mathcal{L}(\gamma)_0^h(V_{\gamma(0)}) - V_{\gamma(0)})$ can be induced \cite{knebelman1951spaces}. Therefore, construction of parallel transport is essentially equivalent to design vector fields on manifolds.

For convenience, we represent a two-dimensional manifold $\M$ using triangle mesh $\{V, E, T\}$. Here $V = \{v_i \in \RR^3\}_{i=1}^n$ denotes vertices and $T = \{\tau_s \}_{s=1}^l$ denotes faces. 
First we compute the geodesic distance function from $x_0$ to every other point by solving the Eikonal equation $|\nabla_\M D(x) | = 1$ using the fast marching method \cite{sethian1996fast,kimmel1998computing}. Next we calculate $\nabla_\M D$ and its orthonormal direction on each triangle $\tau_s$. Together with the face normal direction $\vn_{s}$,
for each triangle $\tau_s$, we construct a local orthonormal frame $\mathfrak{F}_s = \{\vb_{s}^1,\vb_{s}^2,\vn_{s}\}$ where $\vb_{s}^1,\vb_{s}^2$, reflecting the intrinsic information, are tangent to $\tau_s$,  and $\vn_{s}$, reflecting the extrinsic information, is orthogonal to $\tau_s$. For an edge adjacent with $\tau_s$ and $\tau_t$, we write $R_{st}$ as an orthonormal transition matrix such that $R_{st}\mathfrak{F}_t = \mathfrak{F}_s$. Then any vector in $\mathrm{Span}\{\vb_{s}^1,\vb_{s}^2\}$ can be transported to $\mathrm{Span}\{\vb_{t}^1,\vb_{t}^2\}$ using the transition matrix $R_{st}$. This can be viewed as a discretization of connection and used to transport a vector on the tangent space of one given point to all other points.
The compatibility condition of all $R_{st}$ discussed in~\cite{wang2012linear} can guarantee  that no ambiguity will be introduced in this way. We remark this idea can be also used for manifolds represented as point clouds by combining with the local mesh method for manifold represented as point cloud developed in \cite{lai2013local}. 


After the transportation is conducted, the convolution kernel can be transported to a new point by interpolating the transported vectors in the local tangent space at the target point.
Computationally, we define a sparse matrix $K$ where the $i^{th}$ column is the transportation of the kernel to the $i^{th}$ vertex. Thus, we have the following definition of discrete parallel transport convolution:
\begin{equation}\label{eqn:discretePTC}
(f *_{\M} k) (x) :=  K^T \textbf{M} F
\end{equation}
where $F$ is column vector representation the function $f$ at each vertex and $\textbf{M}$ is the mass matrix. Note that once we have computed the vector field of the geodesic equation, the transportation of the kernel to each new center and multiplication with $F$ is independent and can therefore be parallelized efficiently. Additionally, by discretizing the kernel function $k$ as a fixed stencil, we can precompute the transportation and interpolation of the stencil once, before training. Then, PTC can be computed very efficiently using sparse matrices products. We provide detailed implementation about computing these sparse matrices in appendix A.  

Figure~\ref{fig:ConvKernel} illustrates the effect of the proposed method of transporting a kernel function on a manifold. This result shows that the proposed method produce an analogy of the behavior of a kernel function $k(x-y)$ operating in the Euclidean domain. More importantly, we would like to emphasize that number of degrees of freedom in PTC is essentially the same as the classical convolution on Euclidean domain.

\begin{figure}[htp]
 \centering
\includegraphics[width=.7\textwidth]{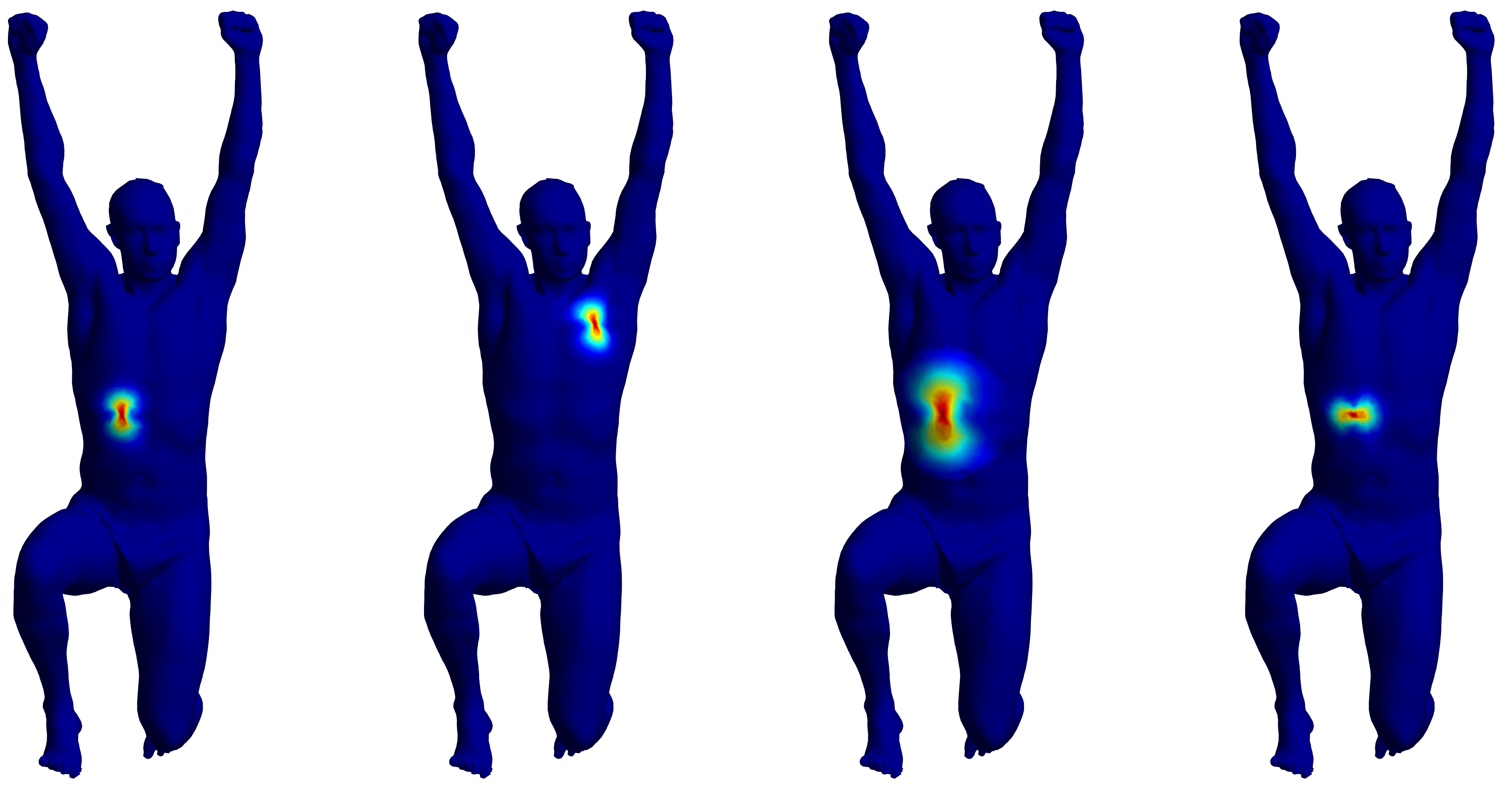}\\
{\centering (a)\hspace{2cm} (b) \hspace{2cm} (c) \hspace{2cm} (d)}
\lrpicaption{A compactly supported kernel (a) is transported on a manifold from the FAUST data set~\cite{bogo2014faust} through translation (b), translation + dilation (c) and translation + rotation (d).}
  \label{fig:ConvKernel}
\end{figure}


\section{Convolutional Tools on Manifolds}
In this section, we discuss two more important ingredients in comomon CNN architectures: stride and transposed convolution and discuss their theoretical properties of adjointness and invariance under isometric transformations.

\subsection{Strided PTC}
\label{sec:stride}

In the discrete Euclidean setting, the {\it stride} of a convolution is the distance, usually measured in pixels, which the kernel is translated on the image between each multiplication with the images \cite{glorot2010understanding}. The numerical discretization of PTC presented thus far evaluates the transported kernel $k$ at each point on the discretized point cloud (or vertex of the mesh). When the manifold is uniformly sampled, this results in an consistent distance between centers of the transported patch, and therefore a consistent stride. However, when the surface is discretized with inconsistent sampling, the distance between evaluation points will also be inconsistent. This inconsistency is overcome by the inclusion of the mass matrix into the discrete PTC formulation \eqref{eqn:discretePTC}, which normalizes the integral by the size of the the local area elements.

Our proposed strided PTC formulation is based on the following observation: A Euclidean strided convolution is evaluated by transporting a kernel to an `evenly spaced' subset of points from the euclidean domain. If conducted without padding, then this creates a contraction information and the resulting output of the convolution is both more compact (information from pixels which are far apart in the input become closer in the output) and smaller (in number of total number of pixels) than the input. To mimic this effect we compute a heretical sub-sampling of the mesh (sometimes called mesh coarsening) through a farthest point sampling (FPS) \cite{moenning2003fast} method. Each level of subsapling corresponds to each level of strided convolution. Let the original discrete manifold be represented as a set of points $\M_0$, and each hierarchical sub-sampling be computed such that $\M_0 \supset \M_1 \supset \M_2... \supset \M_k$. Then the convolution from $\M_{i}$ to $\M_{i+j}$ can be defined as:
\begin{equation}\label{eqn:stride}
(f*_{\M_{i}\rightarrow \M_{i+j}} k) (x) = \sum_{z \in \M_{i+j}} k(x,z) f(z) M_k(z) \quad \forall x \in \M_{i+j}
\end{equation}
Where $M_k(z)$ is the local mass element at $z$ from the $\M_k$ level of sampling. These mass elements can be recomputed from the sub-sample point-cloud $\M_k$ or can be aggregated by assigning each of the mass elements from the $\M_{k-1}$th sampling to its nearest neighbour in $\M_k$ sampling.


\subsection{Transposed PTC}
\label{sec:transposed}

Transposed convolution is often thought of as the the opposite (or more formally as the adjoint) of strided convolution as it is a convolution which expands the size of the input. In the Euclidean setting, this is achieved by padding a signal (most often with zeros) the performing convolution with a fixed filter. The result of this operation is a dilation of information. To mimic this operation we reverse the subsampling scheme presented in \ref{sec:stride} and define a convolution which takes signals form the $i^{th}$ level to the $(i-j)^{th}$. Given a signal $f$ defined on $\M_i$ and a kernel $K$, the transposed convolution of $f$ from $M_i$ to $M_{i-j}$ (for any $(0 < j \leq i$). That is:
\begin{equation}\label{eqn:transpose}
(f*_{\M_{i}\rightarrow \M_{i-j}} k) (x) = \sum_{z \in \M_{i-j}} k(x,z) f(z) M_k(z) \quad  \forall x \in \M_{i-j}
\end{equation}
To achieve this we need extend $f$ to all of the points in $\M_{i-j}$. This can be done either through zero padding, which is analogous to most common euclidean operations, or through harmonic extension. In either case, once the function $f$ is well defined on the up-sampled mesh, the convolution is as simple as plugging in the correct mass matrix into equation \eqref{eqn:discretePTC}.


\subsection{Adjointness}
Next we show some useful adjoined proprieties of PTC which are similar to those for euclidean convolution. In the continuous case we show the existence and provide a formula for the construction of an ad joint filter. For convolutions on euclidean surfaces, the adjoint filter is a rotation of the original. On manifolds this condition becomes $k(x,y) = k'(y,x)$. Next, we show that in the discrete case, this adjoined property can be extended to apply to strided and transposed PTC.

\begin{proposition}\label{eqn:prop1}
Given a Riemenaian, geodesic complete 2-manfiold $(\M,g)$ and a compactly supported filter $k(x_0,\cdot)$ then there exists an adjoint filter $k'(x_0,\cdot)$ such that: 
\begin{equation}
\langle x *_{\M} k ,y \rangle  = \langle x, y*_{\M} k'  \rangle
\end{equation}
Where $\langle \cdot, \cdot \rangle$ denotes the inner-product induced by the metric $g$ on $\M$
\end{proposition}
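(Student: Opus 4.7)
The plan is to establish the adjointness by a Fubini-type swap of the order of integration, then identify the adjoint filter explicitly as a parallel-transported reflection of the original. First I would unfold both sides using the definition in \eqref{ManConv},
\[
\langle x *_{\M} k,\, y \rangle \;=\; \int_{\M} y(z) \int_{\M} k(z,w)\, x(w) \, d_{\M} w \, d_{\M} z ,
\]
and note that $k(x_0,\cdot)$ is compactly supported in a geodesic ball smaller than the injectivity radius, so $k(z,w)$ has compact support in $\M\times\M$ and Fubini's theorem applies. Swapping the order of integration yields
\[
\int_{\M} x(w) \int_{\M} k(z, w)\, y(z) \, d_{\M} z \, d_{\M} w ,
\]
so the natural candidate for the adjoint kernel is the transpose in its two arguments, $k'(w, z) := k(z, w)$. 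With this choice the inner integral is precisely $(y *_{\M} k')(w)$ in the sense of \eqref{ManConv}, and the identity $\langle x *_{\M} k, y\rangle = \langle x, y *_{\M} k'\rangle$ follows immediately.

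Next I would show that this transposed kernel is itself realized by the PTC construction, by producing a base filter $k'(x_0, \cdot)$ whose parallel transport recovers it. Motivated by the Euclidean case, where the adjoint of convolution with $k_0$ is convolution with $k_0(-\,\cdot)$, I would set
\[
k'\bigl(x_0,\, \exp_{x_0}(v)\bigr) \;:=\; k\bigl(x_0,\, \exp_{x_0}(-v)\bigr), \qquad v \in \T_{x_0, \delta}\M ,
\]
and verify via the transport formula \eqref{eqn:kernel} that $k'$, when transported to base point $w$ and evaluated at $z$, reproduces $k(z, w)$. The driving identity is that the velocity field of a geodesic is parallel along itself, giving $\PT_w^z\bigl(\exp_w^{-1}(z)\bigr) = -\exp_z^{-1}(w)$, which converts the tangent-space reflection at $x_0$ into the required swap of the roles of source and target.

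The main obstacle is the holonomy of the Levi-Civita connection. In general the composition $(\PT_{x_0}^z)^{-1}\circ \PT_w^z\circ \PT_{x_0}^w$ is a nontrivial rotation of $\T_{x_0}\M$ whenever the geodesic triangle $x_0 w z$ encloses nonzero Gaussian curvature, so identifying $k'$ with a single reflected base filter at $x_0$ is exact only in the flat or infinitesimal regime. To obtain the proposition without any curvature assumption I would ultimately fall back on the intrinsic two-point definition $k'(w, z) := k(z, w)$ in \eqref{ManConv}, for which existence of the adjoint follows directly from Fubini; the reflected-base-filter formula then supplies an approximate PTC representative whose error tends to zero as the support shrinks. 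This also dovetails with the discrete case treated in the next subsection, where the adjoint acquires the clean mass-weighted form $K' = \mathbf{M}^{-1} K^T \mathbf{M}$ and is literally the transpose of the PTC matrix, so existence is manifest at the numerical level regardless of curvature.
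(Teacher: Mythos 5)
Your proposal is correct and follows essentially the same route as the paper: both unfold the definition, swap the order of integration, and define the adjoint kernel by transposing its two arguments, $k'(m,n) = k(n,m)$, with the paper likewise remarking that this reduces to a reflection of the kernel about its center in the flat case. Your additional discussion of holonomy obstructions to realizing $k'$ as a single parallel-transported reflected base filter is a sensible refinement, but it is not needed for (and does not change) the core argument, which the paper carries out purely at the level of the two-point kernel.
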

\begin{proof} Define $k'(m,n) = k(n,m)$, then:
\begin{equation} \begin{split}
\langle x *_{\M} k ,y \rangle & = \int_{\M} y(n) \int_{\M} f(m)~k(n,m) d_\M m~ d_\M n \\
& = \int_{\M}  \int_{\M} y(n) x(m)~k(n,m) d_\M m~ d_\M n \\
& = \int_{\M}  \int_{\M} y(n) x(m)~k'(m,n) d_\M n~ d_\M m \\
& = \int_{\M} x(m) \int_{\M} y(n)~k'(m,n) d_\M n~ d_\M m \\
& = \langle x  ,y *_{\M} k'\rangle 
\end{split}
\end{equation}
Note that the condition $k(m,n) = k'(n,m)$ reduces to a reflection about the center of the kernel if the manifold is flat. 
\end{proof}

Similarly, for discrete strided and transposed convolutions we have the following analogous result which includes the implicit up and down-sampling involved in these operations.

\begin{proposition}\label{eqn:prop2}
Given a two discretizations of $\M$, $\M_i$ and $M_j$ with $\M_j \subset \M_i$ with $supp(x) \in \M_i$ and $supp(y) \in M_j$. Then for discrete PTC we have: 
$$ \langle x *_{\M_i} k ,y \rangle_{M_j}   = \langle x, y*_{\M_j} k'  \rangle_{M_i} $$
Where $\langle \cdot, \cdot \rangle_{M_k}$ denotes the inner-product induced by the sample level $k$.
\end{proposition}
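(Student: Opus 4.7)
The plan is to mimic the proof of Proposition~\ref{eqn:prop1} in the discrete setting, with extra care taken so that the two different mass matrices $M_i$ and $M_j$ (arising from the hierarchical sub-sampling used to define strided and transposed PTC) bookkeep correctly. As in the continuous case, the adjoint filter is taken to be $k'(q,p) := k(p,q)$; everything else is finite-sum Fubini.

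First, I would expand the left-hand side directly. Unfolding the $\M_j$-inner product and then substituting the strided PTC formula \eqref{eqn:stride} with input supported on $\M_i$ gives a double sum
\begin{equation}
\langle x *_{\M_i} k,\, y\rangle_{\M_j}
= \sum_{p\in\M_j}\sum_{q\in\M_i} k(p,q)\, x(q)\, y(p)\, M_i(q)\, M_j(p),
\end{equation}
where the $M_j(p)$ factor is contributed by the inner product on $\M_j$ and the $M_i(q)$ factor by the strided convolution (which integrates $x$ against the mass of its domain $\M_i$).

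Second, I would expand the right-hand side analogously. Unfolding the $\M_i$-inner product and substituting the transposed PTC definition \eqref{eqn:transpose} (together with the zero-padded extension of $y$, which forces the outer sum to restrict to $\M_j$ since $\mathrm{supp}(y)\subseteq\M_j$) yields
\begin{equation}
\langle x,\, y *_{\M_j} k'\rangle_{\M_i}
= \sum_{q\in\M_i}\sum_{p\in\M_j} k'(q,p)\, x(q)\, y(p)\, M_j(p)\, M_i(q).
\end{equation}
Applying the identity $k'(q,p)=k(p,q)$ and interchanging the order of the two finite summations identifies this expression term-by-term with the first double sum, completing the argument.

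The only real obstacle is bookkeeping: one must be sure that the strided operator from $\M_i$ to $\M_j$ integrates against $M_i$ while the transposed operator from $\M_j$ to $\M_i$ integrates against $M_j$, and that zero-padding is used to extend $y$ off $\M_j$ (the harmonic-extension variant mentioned in Section~\ref{sec:transposed} would generally \emph{break} adjointness, since harmonic extension is not self-adjoint with respect to restriction). Once these conventions are fixed, the proof reduces, just as in Proposition~\ref{eqn:prop1}, to swapping the order of two finite sums and invoking $k'(q,p)=k(p,q)$.
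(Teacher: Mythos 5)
Your proposal is correct and follows essentially the same route as the paper's proof: both expand the two inner products as double finite sums weighted by the local mass elements, use the support condition on $y$ (zero-padding off $\M_j$) to align the index sets across the two sampling levels, and then conclude by swapping the order of summation and invoking $k'(m,n)=k(n,m)$. Your remark that the harmonic-extension variant would break adjointness is a sensible additional observation but does not change the argument.
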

\begin{proof} Similar to the proof of \eqref{eqn:prop1} we have:
\begin{equation} \begin{split}
\langle x *_{\M_i} k ,y \rangle_{\M_j} 
& =\sum^{\M_j} _n  \left( y(n) \sum^{\M_j} _m x(m) k(n,m) D(m) \right) D(n)\\
& = \sum^{\M_i}_n  \left( y(n) \sum^{\M_j} _m x(m) k(n,m) D(m) \right) D(n)\\
& \overset{*}{=}  \sum^{\M_i}_n  \sum^{\M_i}_m  y(n)  x(m) k(n,m) D_n(m) D(n)\\
& = \sum^{\M_i}_m  x(m) \left( \sum^{\M_i}_n  y(n) k'(m,n)  D(n) \right) D(m)\\
& = \sum^{\M_i}_m  x(m) \left( \sum^{\M_i}_n  y(n) k'(m,n)  D(n) \right) D(m)\\
& = \langle x  ,y *_{\M} k'\rangle_{\M_i}
\end{split}
\end{equation}
with $D(n)$ being the local area element at $n$ and $k'(m,n) = k(n,m)$. Note that the step $\overset{*}{=}$ is possible since $y(i) = 0$ for $i \in M, i \notin n$ 
\end{proof}

\section{Convolutional Neural Networks on Manifolds Through PTC}
Using the proposed PTC, we can define convolutional neural networks on manifolds. We shall refer these network as PTCNets.
Similar as CNNs on Euclidean domains, a PTCNet consists of an input and an output layer, as well as multiple hidden layers including fully connected layers, nonlinear layers, pooling layers and PTC layers listed as follows.
\begin{itemize}
\item \textbf{Fully Connected}: $f_i^{out}(x) = \sum_{j=1}^{N} w_{ij} f^{in}_j(x),  \quad i = 1,\cdots,L$. This layer connects every neuron in one layer to every neuron in the previous layer.
The coefficient matrix $(w_{ij})$ parameterizes this layer and will be trained by a training data set.
\item \textbf{Vector Connected (VC)}: $f^{out} = \sum_j+1^n w_j f*^{in}_j$. This layer linearly combines channels independent of the ordering of the discretization of points. This can also be thought of as a special case of the fully connected layer, in which each column of of the weight matrix is a constant. 
\item \textbf{ReLu}: $f^{out}_i(x) = \max\{0,f_i^{in}(x)\}, \quad i = 1,\cdots,L$. This is a fixed layer applying the nonlinear Rectified Linear Units function $\max\{0,x\}$ to each input.
\item \textbf{PTC}: $f^{out}_{i,\alpha}(x) = \int k_\alpha(x,y) f^{in}_i(y) ~\mathrm{d} y \approx K_{\alpha}\textbf{M} F^{in}_i, \quad \alpha = 1,\cdots, m$.
This layer applies the proposed PTC to the input, passes the result to the next layer. By choosing the correct mass matrix, these convolutions can be strides or transposed. 
Each $k_\alpha$ is determined by the proposed PTC on manifolds with an initial convolution kernel $k_\alpha(x_0,\cdot)$, which parametrize the parallel transport convolution process and will be learned based on a training data set. 
\item \textbf{Vector Field Pooling}: $f^{out}_i(x) = \max_\alpha f^{in}_{i,\alpha}(x)$. The pooling layer can be implemented using several non-linear functions among which the max pooling is the most common way. By pooling over multiple vectorfield, we can avoid troubles caused by singularities in the vector field. See section \eqref{sec:sub:sing} for more details.
\end{itemize}

Using these layers it is straightforward to adapt established network architectures in Euclidean domain cases to manifolds case as the only change is to replace traditional convolution by PTC. In addition, back-propagation can be achieved by taking derivation of $K$. The compact support of the convolution kernel is represented as a sparse matrix which makes computation efficient.

\begin{remark}[Vector Fields]
Thus far we have only considered transportation along the geodesic from some chosen seed point. In practice we can compute the parallel transportation along any given vector field. For some applications it may be more natural to use another vector field. To do so we follow the same process except using this new vector field to form the first basis vector in $V$. This can be extremely beneficial in dealing with areas in which our geodesic vector field has a singularity. Around the singularity the direction of the vector field is often highly variable. We can simply define another vector field which is more regular in this area (but may have singularities elsewhere) to analyze information near the singularity in the first field. The problem of designing and controlling the singularities of vector fields on surfaces is a well studied problem for which many approaches already exist (see \cite{DeGoe2015vector} for a review of such techniques). It is important to note that if we would like our the results of our training to be generalizeable (i.e. when working with multiple domains) then we need to the vector fields to be generalizeable as well. For this reason using geodesic distances from canonically chosen points is a natural choice. This choice of paths is both highly non-trivial, and problem dependent. In the future we will further explore options for making this choice.
\end{remark}


\section{Numerical Experiments}
\label{sec:experiments}
To illustrate the effectiveness of the proposed PTC,  we conduct numerical experiments including processing images on manifolds using PTC, classifying images on manifolds using PTCNets and learning features on manifolds for registration and defining variational autoencoders. All numerical experiments on MNIST data were implemented in MATLAB on a PC with a 32GB RAM and 3.5GHz CPU, while the final experiment was implemented in Tensorflow with a NVIDA GTX 1080 Ti graphics card.
We remark that these experiments aim to demonstrate capabilities the proposed PTC for manipulating functions on curved domains by naturally extending existing wavelet and learning methods from Euclidean domains to curves domains. It is by no means to show that the experiments achieve state-of-the-art results on euclidean problems. 

\subsection{Wavelet-Like Operations}
\begin{figure}[ht]
\begin{center}
\includegraphics[width=.85\linewidth]{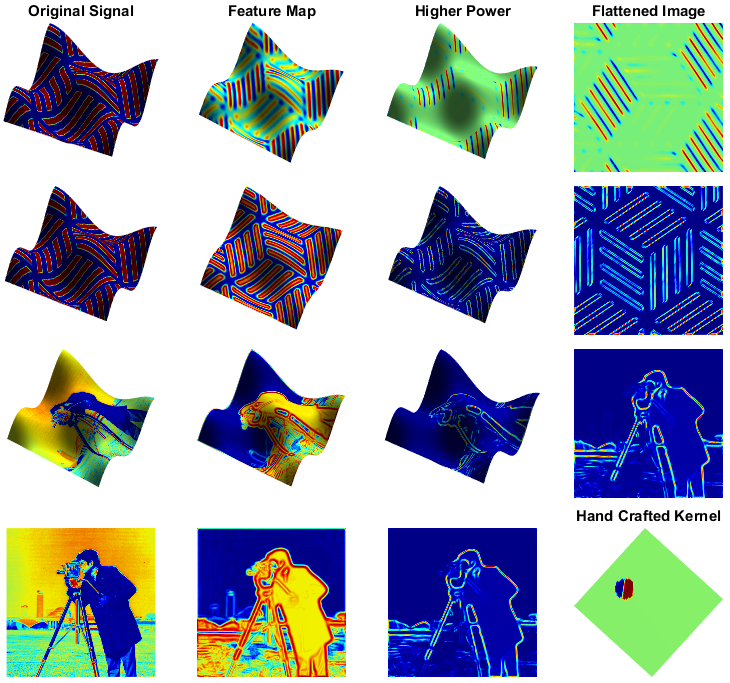}
\end{center}
\lrpicaption{First Row: Convolutions without rotation on test image. Second Row: Convolutions with rotation on test image. Third Row: Convolutions with rotation on a cameraman image. Fourth row: Traditional Euclidean convolution and the edge detector used in PTC.}
\label{fig:wavelet}
\end{figure}
In the first experiment, we demonstrate the effectiveness of our approach by performing simple signal processing tasks on manifolds. Then we  compare the PTC results to those produced by traditional techniques applied to Euclidean domains. First we apply PTC with a hand crafted edge detection filter to images on a manifold. By convolving this filter with the input image, we obtain an output feature function whose higher values indicate similarity to the predefined edge. In the first row of Figure~\ref{fig:wavelet}, it is clear that the proposed convolution successfully highlights the edges with similar orientation of the input filter. In the second row of Figure~\ref{fig:wavelet}, we allow additional rotations as we discussed in \eqref{RotandDil}. We observe that the additional rotation flexibility can reliably capture all of the edges regardless of orientations. This illustrates the directional awareness of our method. Furthermore, we apply this edge detector using PTC to a more realistic problem in the third row of Figure~\ref{fig:wavelet}. It shows that the results are very close to those produced in an analogous Euclidean setting (fourth row). In the third column, we show the feature map raised to the fifth power for better contrast and the last column shows a flattened version for easier visualization.


\subsection{Single Manifold MNIST}
\label{subsec:SingleMfd}
In this test, we conduct experiments to demonstrate the effectiveness of PTCNets to handle signals on manifolds. The most highly celebrated early applications of CNNs was the recognition of hand written digits \cite{lecun1998gradient}. We map all MNIST data to a curved manifold plotted in the left image of Figure~\ref{tab:SingleMNIST}. We use a simple network architecture consisting of a single convolution layer with 16 filters followed by a ReLu non-linear layer and then a fully connected layer which outputs a 10 dimensional vector of predictions. We apply this network architecture to four scenarios including MNIST data on a Euclidean domain using traditional convolution, MNIST data on a Euclidean domain using PTC,  MINST data on a curved domain using PTC, and MINST data on the same curved domain using spectral convolution.

\begin{table}[htp]
\lrpicaption{Comparison of our PCTNet to Euclidean case and a spectral based method on a single manifold.}
\begin{center}
\resizebox{\textwidth}{!}{
  \begin{tabular}{ c | l |c|c | }
 \cline{2-4}
\multirow{ 5}{*}{\includegraphics[width=.15\textwidth]{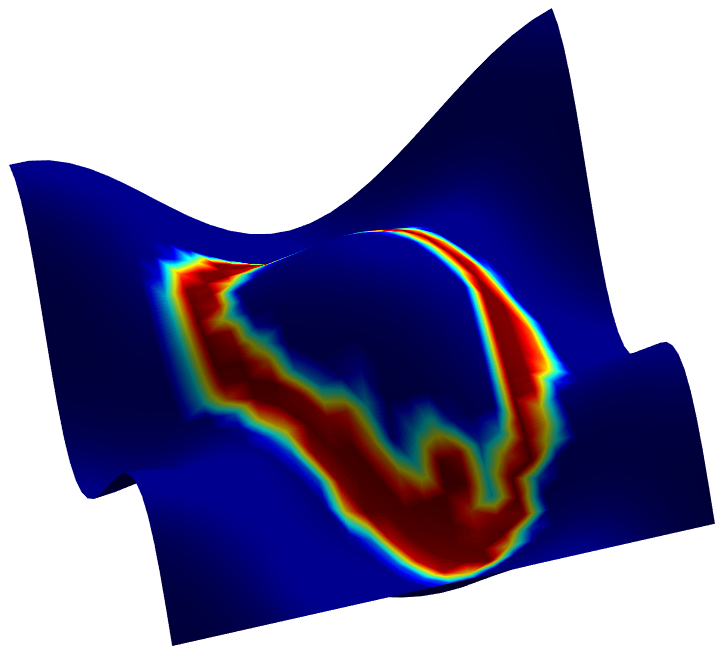}} \quad &  Network  & Domain & Accuracy \\ \cline{2-4}
   & Traditional              & Euclidean & \textbf{98.85} \\ \cline{2-4}
    & \textbf{Flat PTCNet} & Euclidean  & 98.10  \\ \cline{2-4}
   & Spectral             & Manifold  & 95.35 \\ \cline{2-4}
   &  \textbf{PTCNet}   & Manifold     & 97.96  \\ \cline{2-4}
  \end{tabular}
}
\end{center}

\label{tab:SingleMNIST}
\end{table}

Each network is implemented in MATLAB using only elementary functions and  is trained using batch stochastic gradient descent with batch size 50 and a fixed learning rate $\alpha = 10^{-3}$. We also use the same random seed for the batch selection and the same initialization. We choose such a simple training regime in order to make the effects of different convolution operations as clear as possible. We measure the results by the overall network error after 5,000 iterations.

The table~ in Figure~\ref{tab:SingleMNIST} shows the accuracy of the traditional CNN on a flat domain, a spectral net applied to a simple manifold as well as our network applied to both a Euclidean domain (Flat PTCNet) and the manifold. Similar performance of Flat PTCnet to traditional CNN illustrate that our method is an appropriate generalization of convolution from flat domains to curved domains. In addition, we observe that our method out performs the spectral network for this classification task on a curved domain.


\subsection{Multi-Manifold MNIST}
One of the advantages of our method is that filters which are learned on one manifold can be applied to different domains. The spectral convolution based methods do not have this transferability as different domains are unlikely to share the same eigensystem. In this experiment, we first directly apply the network learned by the PTCNet and Spectral networks from Section~\ref{subsec:SingleMfd} to a new manifold. As we illustrate in the first two rows of the table in Fig.~\ref{fig:MultiMNIST}, the accuracy of the spectral convolution based method is dramatically reduced since the two manifolds have quite different eigensystems. However, our PTCNet can still provide reasonable accurate results since the underlying geodesic vector fields of these manifolds is more stable to deformations than eigensystems are.

\begin{table}[htp]
\lrpicaption{Comparison of results from learning on single and multiple domains and then testing on a new manifold.}
\begin{center}
  \begin{tabular}{ |c | c | }
  \hline
 Training      &  Success Rate  \\ \hline
 Spectral      & 88.50        \\ \hline
 Single Manifold     & 95.65  \\ \hline
 \textbf{Multiple Manifolds} & \textbf{97.32} \\ \hline
  \end{tabular}
\end{center}
\end{table}

\begin{figure}
\centering
\includegraphics[width=.9\linewidth]{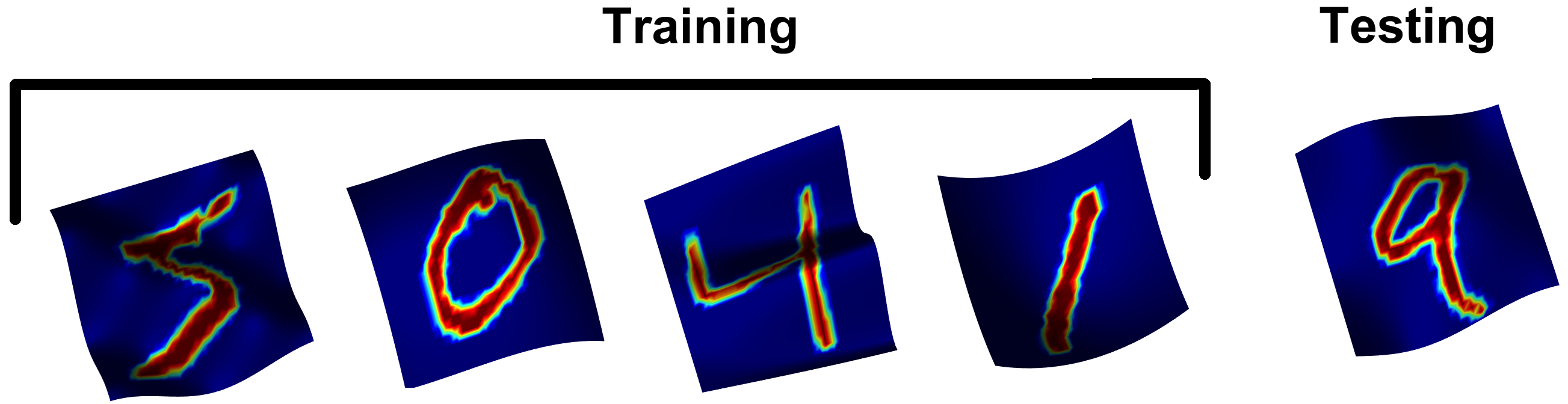}
\lrpicaption{Manifolds used for multi-manifold tests. The first four are used for training and the last is used for testing.}
\label{fig:MultiMNIST}
\end{figure}

Furthermore, we conduct a new experiment in which we train our PTCNet on a variety of manifolds and test on a different manifolds as showed in the bottom picture of Figure \ref{fig:MultiMNIST}, where the first four manifolds are used as training domains, and the fifth one is used for testing. From these pictures, it is clear that the training manifolds are quite different and therefore the spectral methods and definitions of convolution which require curvature to set their direction \cite{boscaini2016learning} cannot be applied to these problems. However the geodesic vector fields  of the manifolds are quite similar and therefore filters learned through our technique should apply to the new problem. As we can see in the last row of the Table in Figure~\ref{fig:MultiMNIST}, the network achieves a 97.32$\%$ success rate since training on multiple manifolds allows PTC network to learn greater invariance to local deformation in the metric, which enables great transferability.

\subsection{Singularities of Vector Fields}
\label{sec:sub:sing}
In each of the previous experiments the vector field used to translate the convolutional kernels is choosen to be the gradient of the geodesic from one corner of the manifold. Although our convolution is well defined everywhere on these manifolds, the filters may be more variable near this singularity. To investigate the effects that these singularities may have on,  we next test our network using  different types of vector fields. PTC1 uses the vector field chosen as in the previous experiments. PTC2 uses a vector field with a singularity in the center of the domain. The next test (PTC3) has two separate vector fields each with a singularity at different point on the interior of the domain. For this test, half the kernels are assigned to one vector field and half to the other. The last test uses four vector fields, each with a singularity at a different point on the interior of the manifold. Table~\ref{table:Sing} shows the results of using these vector fields on the single and multiple manifold problems described previously. We observe that the presence of singularity can negatively effect the performance, while using multiple vector fields can overcome these difficulties.
\begin{table}[htp]
\lrpicaption{Success rate (SR) comparison of several of our networks on a single (the $4^{th}$ coloum) and on multiple manifolds (the $5^{th}$ coloumn).}
\begin{center}
  \begin{tabular}{ | c || c | c | c | c | }
  \hline
    Implementation      &  VF & Sings per VF   & Single:  Accuracy  & Multi:  Accuracy   \\ \hline
    Spectral                 & - & -     & 92.10  & 88.50   \\ \hline
    PTC1                     & 1 & 0     & \textbf{96.36}  & \textbf{97.32}   \\ \hline
    PTC2                     & 1 & 1     & 94.92  & 94.51   \\ \hline
    PTC3                     & 2 & 1     & 95.89  & 95.02  \\ \hline
    PTC4                     & 4 & 1     & 96.01  & 95.28   \\ \hline
  \end{tabular}
\end{center}
\label{table:Sing}
\end{table}


\subsection{MNSIT Convolutional Variational Auto-Encoder (CVAE)}

Variatinonal Auto-encoders \cite{kingma2014} are a generic tool used for data compression and generation. Given some input signal $x$ one wishes to compute some encoder function $f:x\rightarrow \hat x$ which greatly reduces the dimension of $x$ ($dim(x) >> dim(\hat x)$) and decoder function $f^*: \hat x \rightarrow x$ which recovers $x$. Variatonal autoe-ncoders also require that the latent variable $x$ follow some unit normal distribution: $\hat x \sim N(0,I)$. This requirement allows for the creation of new data by passing random samples from $N(0,I)$ into the decoder as $\hat x$. A auto-encoder is also called convolutional, if the feature extraction in the encoder is done through strided convolution, and the upsampling in the decoder is done through transposed convolution.

\begin{figure}[ht]
\includegraphics[scale=.5]{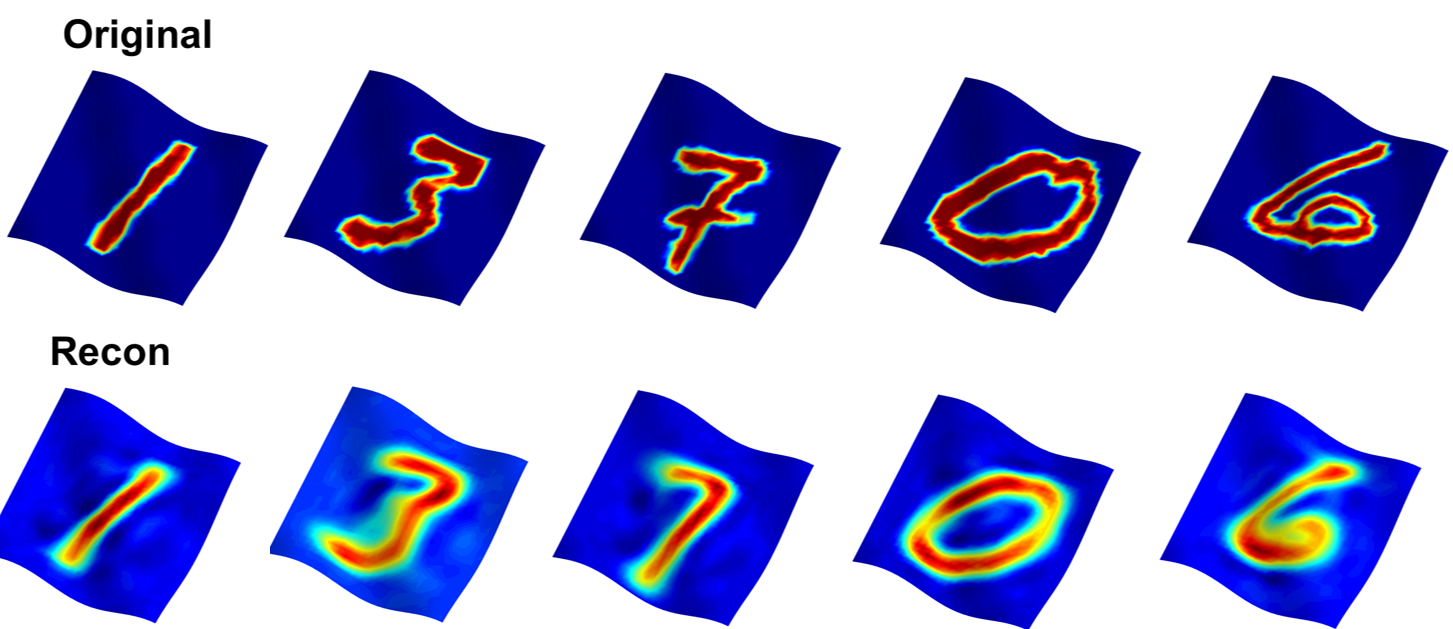}
\lrpicaption{Results of auto-encoding on a single manifold. Row 1: Input image, Row 2: Reconstructed image.}
\label{fig:MNISTPTCGen}
\end{figure}

In this test we use the MNSIT handwritten digit data base to validate our proposed definition by creating an CVAE on a manifold embediding of the MNSIT data set. We denote a PTC convolution layer as $PTC_{a}B$ where $a$ is the number of points in the discretized domain and $B$ is the number of filters in this level. Then our architecture for the encoder is:
\begin{equation}
x \rightarrow PTC_{784} 16 \rightarrow PTC_{196} 16 \rightarrow PTC_{49} 16\rightarrow FC(10,2)=(\mu,\Sigma)
\end{equation}
Similarly the decoder is defined by:
\begin{equation}\hat x \rightarrow N(\mu,\Sigma) \rightarrow FC(49,1)  \rightarrow PTC_{49} 16 \rightarrow PTC_{196} 16 \rightarrow PTC_{784} 16 \rightarrow \sum_{axis = 0} = x_{out}
\end{equation}
The network is trained by simultaneously minimizing the KL divergence between $ N(\mu,\Sigma)$ and the $L_2$ loss between $x$ and $x_{out}$. Figure \ref{fig:MNISTPTCGen} shows several examples on pairs of input signals and their recover as well as several figures generated by randomly sampling latent variables from the unit normal distribution.

\begin{figure}[h]
\includegraphics[scale=.38]{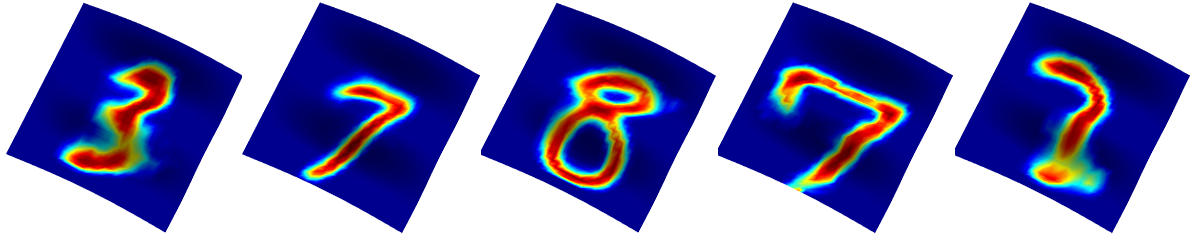}
\lrpicaption{Images generated by randomly sampling latent variables as input to trained model and applying PTC to new surface not used during training.}
\label{fig:NewSurfMNIST}
\end{figure}

One additional advantage of this framework is that, since we only use fully connected layers at the coarsest level of sampling, we only need coarse correspondences to apply a trained model to a new manifold. Since the PTC layers are agnostic to re-indexing of the data points, we can use the filters learned on on domain to apply to another. The fully connected layer still requires a correspondence in order to be consistent, but the sparse correspondences required by this approach are much easier to compute then the dense correspondences which would be required in a method without intrinsic down/up sampling. Figure \ref{fig:NewSurfMNIST} shows an example of several additional digits on a new manifold, given by a model trained on the previous surface.


\subsection{Feature Learning for Shape Correspondence}
\begin{figure*}[t]
\begin{minipage}{0.55\linewidth}
\centering
\includegraphics[width=1\linewidth]{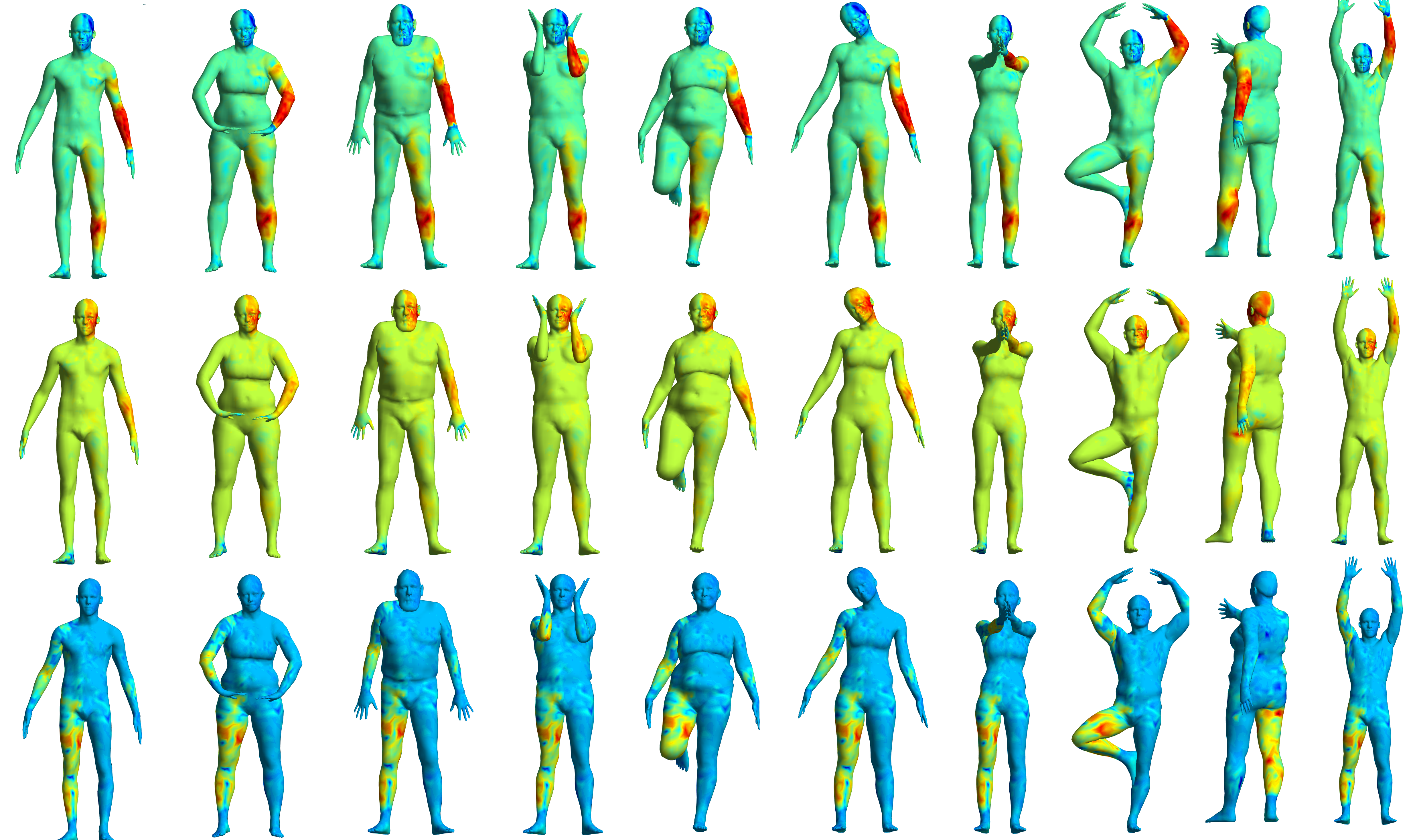}
\end{minipage}\hfill
\begin{minipage}{0.45\linewidth}
\includegraphics[width=1\linewidth]{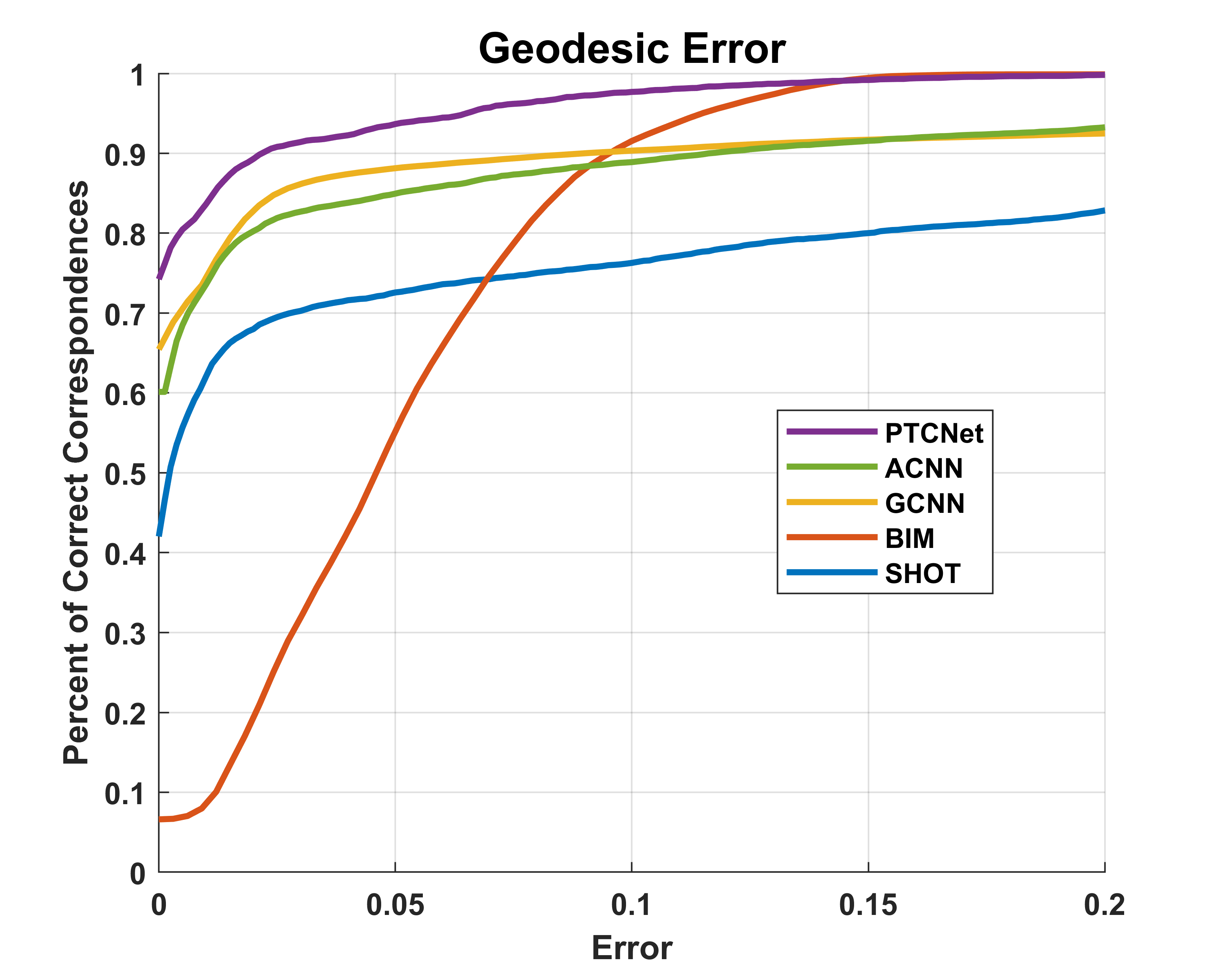}
\end{minipage}
\lrpicaption{Left: Example feature functions for shape correspondence on the Faust dataset. Right: geodesic errors in predicted correspondence of our method and several others.}
\label{fig:FaustFeats}
\end{figure*}
One important application of convolution neural networks in shape processing is the creation of geometric features \cite{bronstein2017geometric}. The goal of these networks is to output descriptor functions, $F:(\M) \rightarrow  \RR$, which accurate describe the local and global geometry of a manifold. In this section we implement a network based on the 'ShapeNet2' architecture original presented in \cite{masci2015geodesic} for shape registration, substituting in our proposed definition of convolution. We remark that this architecture is not state-of-the art, but provides a good framework for comparing geometric convolutions. In this network we input a 150 dimensional geometry vector into a vector connected layer which linearly combines these input features into a 16 dimensional signal. This signal is then passed through two layers of PTC  (each followed by a Relu non-linearity) with 16 filters in each layer. The final features are the output of the second convolution layer. The network is trained by minimizing the following triplet loss:
\begin{equation}
\begin{split}
L(S;\Theta) &= 
\sum_{x_1,x_2 \in S \times S} ||F(x_1;\Theta) - F(x_2;\Theta)||^2 
\\
&+ \lambda \sum_{P \in \Pi}(\mu_1 - || F(x_1;\Theta) -  F(x_3;\Theta))||)^2 
\end{split}
\end{equation}
where $\{x_1,x_2\}$ are similar pairs of shapes, $\{x_1,x_3\}$ are dissimilar and $\mu$ is the user parameter representing the margin. We evaluated this model the Faust dataset which contain 100 real world scans (each with $n = 6890$ points) of 10 individuals in 10 poses~\cite{bogo2014faust}. We use the first 80 figures for training, 10 for validation, and 10 for testing. Using the sparse matrix operations described in the appendix, each forward and backward propagation through a two layer network, defined on a mesh containing 6890 points, can be calculated in less than half a second. The whole training process is completed in 8 hours using the ADAM algorithm~\cite{kingma2014adam}.  Figure \ref{fig:FaustFeats} shows three of the output feature functions across different individuals in the dataset, where the first 7 individuals (10 surfaces for each individual) are used for training, the $8^{th}$ and $9^{th}$ individuals are used for validation, and the last individual is used for testing. These consistent features lead to satisfactory registration results by simply conducting the nearest point search in the feature space.  Figure \ref{fig:FaustFeats} shows our registration performance, measured by the geodesic error between the predicted correspondence and the actual correspondence, compared to error from use the heat kernel signatures which were used as input layer. We compare results with the original GCNN implementation of the ShapeNet2 \cite{masci2015shapenet}.

Finally we note that there are many more advanced architectures for shape correspondence \cite{litany2017deep, vestner2017efficient} which involve solving some version (often relaxed) of the quadratic assignment problem based on some starting map. Since these methods require a geodesic convolution network as some component of their overall architecture we do not make direct comparisons with their overall results.


\section{Conclusions on PTC}
\label{sec:PTCconclusion}
In this chapter we proposed a generalization of the convolution operation on smooth manifolds using parallel transportation and discuss its numerical implementation. Using the proposed PTC, we have performed wavelet-like operation of signals and built convolutional neural networks on curved domains. Our numerical experiments have shown that the PTC can perform as well as Euclidean methods on curved manifolds, and is capable of including directional awareness, handling problems involving deformable manifolds, in particular, learning features for deformable manifolds registration. In our future works, we will apply our PTC to different applications of comparing, classifying and understanding manifold-structured data by combining with recent advances of deep learning architectures.


 
\chapter{CHART AUTO-ENCODERS} \label{Chap:CAE}

Deep generative models have made tremendous advances in image and signal representation learning and generation. These models employ the full Euclidean space or a bounded subset as the latent space, whose flat geometry, however, is often too simplistic to meaningfully reflect the manifold structure of the data. In this work, we advocate the use of a multi-chart latent space for better data representation. Inspired by differential geometry, we propose a \textbf{Chart Auto-Encoder (CAE)} and prove a universal approximation theorem on its representation capability. We show that the training data size and the network size scale exponentially in approximation error with an exponent depending on the intrinsic dimension of the data manifold. CAE admits desirable manifold properties that auto-encoders with a flat latent space fail to obey, predominantly proximity of data. We conduct extensive experimentation with synthetic and real-life examples to demonstrate that CAE provides reconstruction with high fidelity, preserves proximity in the latent space, and generates new data remaining near the manifold. These experiments show that CAE is advantageous over existing auto-encoders and variants by preserving the topology of the data manifold as well as its geometry.

\blfootnote{ Portions of this chapter previously appeared as:   S. C. SCHONSHECK, J. CHEN, AND R. LAI \textit{Chart Auto-Encoders for Manifold Structured Data},  arXiv preprint, arXiv:1912.10094, 2020. \newline
Portions of this chapter have been submitted as S. C. SCHONSHECK, J. CHEN, AND R. LAI , \textit{Chart auto-encoders for manifold structured Data}, NeuRIPS (2020).}

\section{Chart Parameters and Generative Models}

Auto-encoding~\cite{bourlard1988auto,hinton1994autoencoders,liou2014autoencoder} is a central tool in unsupervised representation learning. The latent space therein captures the essential information of a given data set, serving the purposes of dimension reduction, denoising, and generative modeling. Even for models that do not employ an encoder, such as generative adversarial networks~\cite{Goodfellow2014}, the generative component starts with a latent space. A common practice is to model the latent space as a low-dimensional Euclidean space $\RR^d$ or a bounded subset of it (e.g., $[0,1]^d$), sometimes equipped with a prior probability distribution. Such spaces carry simple geometry and may not be adequate for representing complexly structured data. In this work, we are concerned with a widely studied structure: manifold.

A commonly held belief, known as the \emph{manifold hypothesis}~\cite{Belkin2003, fefferman2016testing}, states that real-life data often lies on, or at least near, some low-dimensional manifold embedded in a high-dimensional ambient space. Hence, a natural approach to representation learning is to introduce a low-dimensional latent space to which the data is mapped. It is desirable that such a mapping possesses basic properties such as invertibility and continuity. In differential geometry, this notion is coined \emph{homeomorphism}. Challengingly, it is known that even for simple manifolds, there does not always exist a homeomorphic mapping to the Euclidean space whose dimension is the intrinsic dimension of the data. 

We elaborate on two examples here. Consider a data set $X$ lying on the 2-dimensional sphere $S^2$ embedded in the ambient space $\RR^n$ where $n > 2$. It is well known that there exist no homeomorphic maps between $S^2$ and an open domain on $\RR^2$~\cite{rotman2013introduction}. Therefore, it is impossible for a traditional auto-encoder with a 2-dimensional latent space to faithfully capture the structure of the data. Consequently, the dimension of the latent space needs to be increased beyond the intrinsic dimension.

For another example, consider a double torus shown in Figure~\ref{fig:Eight}. When one uses a plain auto-encoder to map uniform points on this manifold to $\RR^2$, the distribution of the points is distorted and the shape destroyed; whereas if one maps to $\RR^3$, some of the points depart from the mass and become outliers. Generalization suffers, too. In Figure \eqref{fig:Eight} as well as in  the Supplementary Materials (Section~\ref{sec:vae.torus}, Figure~\ref{fig:vae}), we show the results of several variational auto-encoders with increasing complexity. They fail to generate data to cover the whole manifold; worse, the newly sampled data do not all stay on the manifold.

\begin{figure}[ht]
  \centering
  \includegraphics[width=\linewidth]{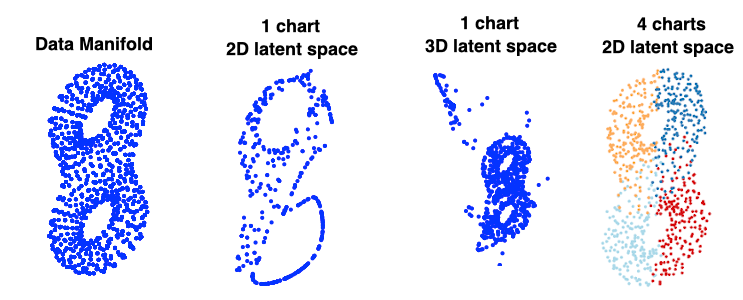}
  \vskip -0.15in
  \lrpicaption{Left: Data on a double torus. Middle two: Data auto-encoded to a flat latent space. Right: Data auto-encoded to a 4-chart latent space.}
  \label{fig:Eight}
\end{figure}

To circumvent the drawbacks of existing auto-encoders, in this work, we follow the definition of manifolds in differential geometry and propose a \textbf{Chart Auto-Encoder (CAE)} to learn a low-dimensional representation of the data. Rather than using a single function mapping, the manifold is parameterized by a collection of overlapping \emph{charts}, each of which describes a local neighborhood. Collectively cover the entire manifold. The reparameterization of an overlapping region shared by different charts is described by the associated \emph{transition function}.

As an illustration, we show to the right of Figure~\ref{fig:Eight} the same double torus aforementioned, now parameterized by using four color-coded charts. This example exhibits several characteristics and benefits of the proposed work: (i) the charts collectively cover the manifold and faithfully preserve the topology (two holes); (ii) the charts overlap (as evident from the coloring); (iii) new points sampled from the latent space remain on the manifold; and (iv) because of the preservation of geometry, one may accurately estimate geometric proprieties (such the geodesics).

These advantages are achieved through parameterizing the chart functions and the transition functions. We develop the neural network architecture of CAE and propose a training method. We conduct a comprehensive set of experiments on both synthetic and real-life data to demonstrate that CAE captures the structure of the manifold much better than do plain auto-encoders and variational auto-encoders.

\subsection{Related Work on Manifold Parameterization}\label{sec:related}

Exploring the low-dimensional structure of manifolds has led to many dimension reduction techniques in the past two decades~\cite{tenenbaum2000global, roweis2000nonlinear, cox2001, Belkin2003, He2003, zhang2004principal, Kokiopoulou2007, maaten2008visualizing}. Isomap~\cite{tenenbaum2000global} divides a data set into local neighborhoods, which are embedded into a low-dimensional space that preserves local properties. Similarly, Laplacian Eigenmaps~\cite{Belkin2003} use embeddings induced by the Laplace--Beltrami eigenfunctions to represent the data. These methods employ a flat Euclidean space for embedding and may lose information as aforementioned.

Auto-encoders use an additional decoder to serve as the reverse of a dimension reduction. The latent space therein is still flat Euclidean. One common approach to enhancing the capability of auto-encoders is to impose a prior distribution on the latent space (e.g., VAE~\cite{kingma2014}). The distributional assumption (e.g., Gaussian) introduces low-density regions that sometimes depart from the manifold. Then, paths in these regions either trace off the manifold or become invariant.

\cite{falorsi2018explorations} introduce a non-Euclidean latent space to guarantee the existence of a homeomorphic representation, realized by a homeomorphic variational auto-encoder. There are two limitations of this approach. First, it requires the knowledge of the topological class of the data set, which is generally impossible in practice. Second, it requires the estimation of the Lie group action on the latent space. If the topology of the data is relatively simple (e.g., a sphere or torus), the computation is amenable; but for more complexly structured data sets, it is rather challenging. Similarly, several recent work~\cite{davidson2018hyperspherical, rey2019disentanglement, falorsi2018explorations} studies auto-encoders with (hyper-)spherical latent spaces. These methods allow for the detection of cyclical features but offer little insight into the homology of the manifold.

Recently, \cite{lei2019geometric} established a relationship between manifolds and a generative model---the Wasserstein GAN---through the use of optimal transport that minimizes the distance between the manifold parameterized by neural networks and one estimated from training data.

Under the manifold hypothesis, \cite{chen2019efficient} extend the work of \cite{shaham2018provable} and theoretically show the existence of neural networks that approximate functions supported on low-dimensional manifolds, with a number of parameters only weakly dependent on the embedding dimension. A key feature in their proposal is a chart determination sub-network that divides the manifold into charts and a pairing sub-network that re-combines them. The premise of this approach is that the data manifold in question is known, which hinders practical application. Thus, the multi-chart latent space representation in this approach has been neither implemented nor conducted computationally. Our work introduces an implementable neural network architecture addressing these challenges.

\section{Background on Based Parameterization}\label{sec:Manifolds}

\begin{figure*}[ht]
  \centering
   \includegraphics[width=.65\linewidth]{figures/ManifoldPar.png}
   \lrpicaption{Illustration of a Manifold with a manifold parameterized by two overlapping charts.}
   \label{fig:ChartManifold}
\end{figure*}

\begin{figure*}[ht]
   \centering
  \includegraphics[width=1\linewidth]{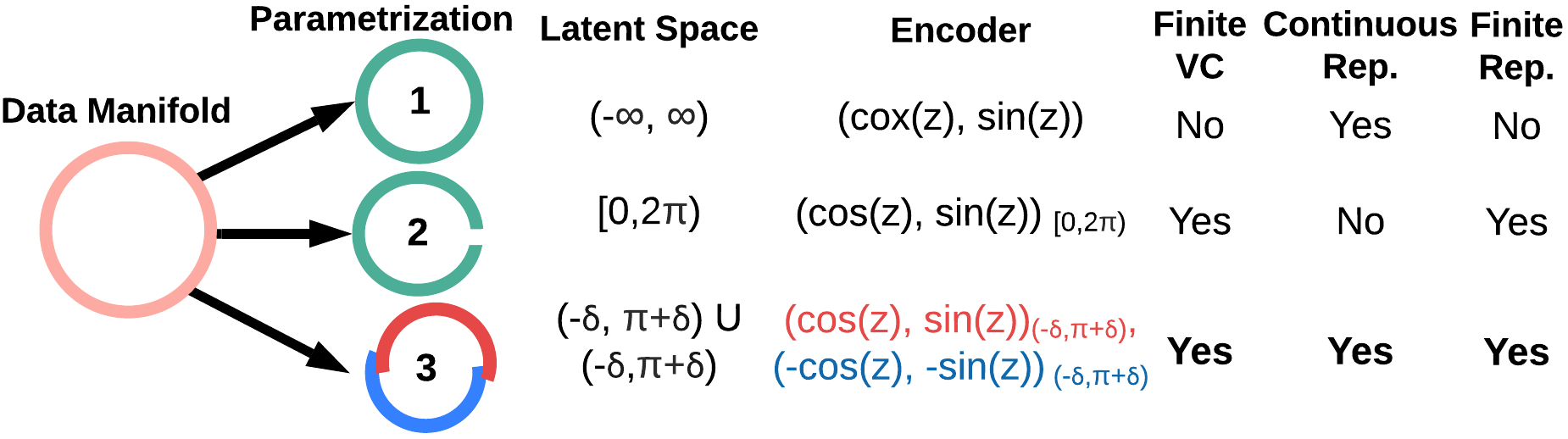}
  \lrpicaption{Possible parameterizations of a circle. The manifold approach (bottom) preserves all desired properties.}
  \label{fig:CircleManifold}
\end{figure*}

A manifold is a topological space locally homeomorphic to a Euclidean domain. More formally, a $d$-dimensional manifold is defined as a collection of pairs $\{(\M_\alpha,\phi_\alpha)\}_{\alpha}$, referred to as \emph{charts}, where $\{\M_\alpha\}_\alpha$ are open sets satisfying $\M= \bigcup_\alpha \M_\alpha$. Each $\M_\alpha$ is homeoporphic to an open set $U_\alpha\subset \RR^d$ through the coordinate map $\phi_\alpha:\M_\alpha\rightarrow U_\alpha$. Different charts can be glued together through \emph{transition functions} $\phi_{\alpha\beta}:\phi_\alpha(\M_\alpha \cap \M_\beta) \rightarrow \phi_\beta(\M_\alpha \cap \M_\beta)$ satisfying cyclic conditions (see Figure~\ref{fig:ChartManifold} left). Smoothness of the transition functions controls the smoothness of the manifold. A well-known result from differential geometry states that any compact manifold can be covered by a finite number of charts which obey these conditions. The \emph{intrinsic dimension} of the manifold is the dimension of  $U_\alpha$. See \cite{lee2013smooth} for a thorough review.

In practice, the coherent structure of data motivates us to model a given data set as samples from an unknown ground manifold. One crucial task in machine learning is to explore the topological (e.g., genus) and geometric (e.g., curvature) structure of the manifold and perform tasks such as classification and data generation. Mathematically, we explain the encoding and decoding process for a manifold as follows. Given a manifold $\M$, typically embedded in a high dimensional ambient space $\RR^n$, the encoding network constructs a local parameterization $\phi_\alpha$ from the data manifold to the latent space $U_\alpha$; and the decoding network maps $U_\alpha$ back to the data manifold $\M$ through $\phi_\alpha^{-1}$. In standard auto-encoders~\cite{bourlard1988auto,hinton1994autoencoders,liou2014autoencoder}, only one single chart is used as the latent space. In our work, multiple charts are used. Different from classical dimension reduction methods where distance preservation is preferred, we do not require the local parameterization $\phi_\alpha$ to preserve metric, but only bound its Lipschitz constant to control the regularity of the parameterization.

To illustrate the utility of such a multi-chart parameterization, we consider a simple example: finding a latent representation of data sampled from the 1-dimensional circle $S^1$ embedded in $\RR^2$. See Figure~\ref{fig:CircleManifold}. A simple (non-chart) parameterization is $(\cos(z),\sin(z))$, with $z\in (-\infty,\infty)$. However, approximating this parameterization with a finite neural network is impossible, since $z$ is unbounded and hence any multi-layer perceptron will have an infinite Vapnik-Chervonenkis dimension~\cite{blumer1989learnability}. One obvious alternative is to limit $z \in [0,2\pi)$, but this parameterization introduces a discontinuity and breaks the topology (it is theoretically known that the closed circle is not homeomorphic to $[0,2\pi)$).
Following the definition of manifolds, we instead parameterize the circle as:
\begin{equation}
    \phi_{\alpha}:  ( 0 -\delta ,\pi + \delta)  \rightarrow S^1 ,   \quad z_\alpha \mapsto (\cos(z),\sin(z)) 
\end{equation}
\begin{equation}
    \phi_{\beta}:  (0 - \delta ,\pi + \delta) \rightarrow S^1,   \quad z_\beta\mapsto  (-\cos(z),-\sin(z))
\end{equation}
\begin{equation}
    \phi_{\alpha\beta}:  (-\delta,\delta) \rightarrow (\pi-\delta, \pi+\delta),   \quad z_\alpha \mapsto z_\alpha + \pi
\end{equation}
\begin{equation}
    \phi_{\alpha\beta}:  (\pi - \delta, \pi+ \delta) \rightarrow ( -\delta,  \delta),  \quad z_\alpha \mapsto z_\alpha - \pi
\end{equation}
Although this function is cumbersome to write, it is more suitable for representation learning, since each encoding function can be represented with finite neural networks. Moreover, the topological and geometric information of the data is maintained.

Thus, instead of using only one chart as in standard auto-encoders, we propose to model the latent space with multiple charts glued by their transition functions, akin to the concept of manifolds. This geometric construction reflects the intrinsic structure of the manifold. Therefore, it is able to achieve a more accurate approximation of the data and generate realistic new ones. Moreover, once the charts and the transition functions are learned, the geometric information of the manifold, including metric, geodesic, and curvature, can be approximated according to their definition in differential geometry.

\section{Network Architecture}\label{sec:Arch}

\begin{figure*}[ht]
  \centering
  \includegraphics[width=.9\textwidth]{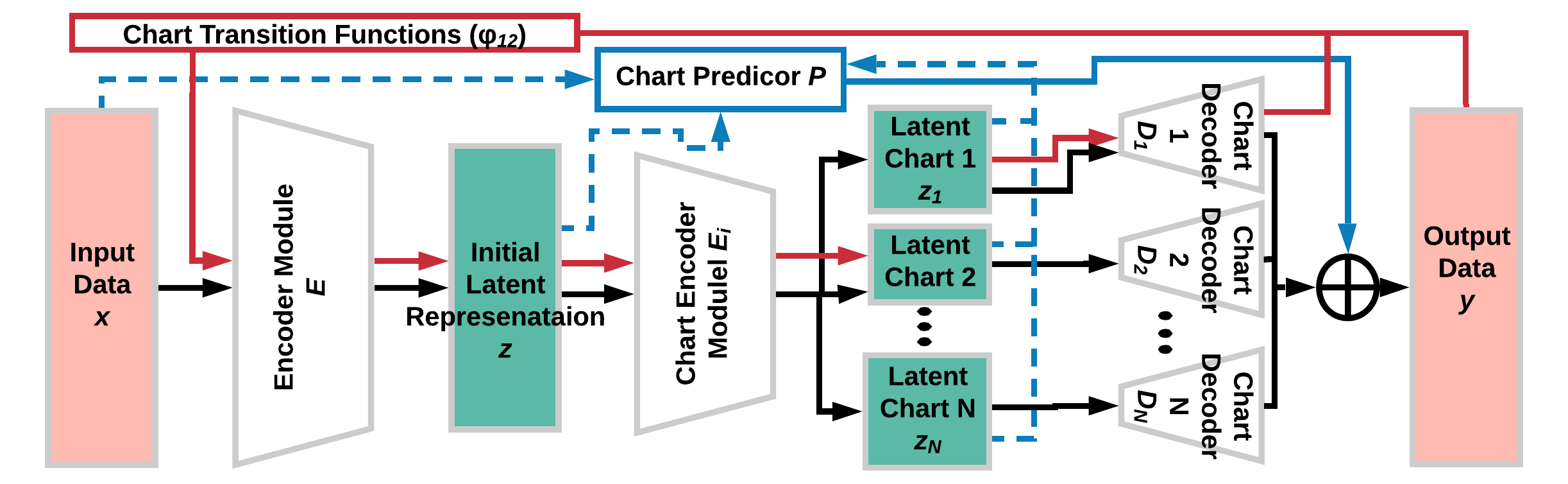}
  \vskip -0.15in
  \lrpicaption{Architecture diagram of CAE and transition functions. The red path illustrates the computation of transition function $\phi_{12}$.}
  \label{fig:Archetecture}
\end{figure*}

To integrate the manifold structure in the latent space, we propose CAE as illustrated in Figure~\ref{fig:Archetecture}. An input data point $x\in\RR^n$ is passed into an encoding module $\textbf{E}$, which creates an initial latent representation $z\in\RR^l$. Next, a collection of chart parameterizations---encoders $\textbf{E}_{\alpha}$ as analogy of $\phi_{\alpha}$---map $z$ to several chart spaces $U_{\alpha}$, which collectively define the multi-chart latent space. Each chart representation $z_{\alpha}\in U_{\alpha}$ is then passed into the corresponding decoding function---a chart decoder $\textbf{D}_{\alpha}$ as analogy of $\phi_{\alpha}^{-1}$---which produces an approximation $y_{\alpha}$ of the input data $x$. Finally, a chart prediction module $\textbf{P}$ decides which chart(s) $x$ lies on and consequently selects the corresponding $y_{\alpha}$('s) as the reconstruction of $x$. The chart transition functions may be recovered by composing the chart decoders, initial encoder, and the chart encoders. Hence, their explicit representations are not essential to the neural network architecture and we defer the discussion to Supplementary Material (Section~\ref{sec:trans}).

\paragraph{Initial Encoder.}

The initial encoder $\textbf{E}$ serves as a dimension reduction step to find a low dimensional \emph{isometric} embedding of the data from $\RR^n$ to $\RR^l$. For example, given an $\RR^3$ torus embedded in $\RR^{1000}$, the initial encoder maps from $\RR^{1000}$ to a lower-dimensional space, ideally $\RR^3$. Note that however three is not the \emph{intrinsic} dimension of the torus (rather, two is); hence, a subsequent chart encoder to be discussed soon serves the purpose of mapping from $\RR^3$ to $\RR^2$. Ideally, the initial dimension reduction step preserves the original topological and geometric information of the data manifold by reducing to the minimal isometric embedding dimension. A benefit of using an initial encoder is to reduce the subsequent computational costs in decoding. This step can be replaced with a homeomorphic variational auto-encoder~\cite{falorsi2018explorations} when the topology is known, or with an appropriately chosen random projection~\cite{baraniuk2009random,cai2018enhanced}.

\paragraph{Chart Encoder.}

This step locally parameterizes the data manifold to the chart space, whose dimension is ideally the intrinsic dimension of the manifold. The chart splits are conducted through a small collection of networks $\{\textbf{E}_\alpha\}_{\alpha}$ that takes $z \in \RR^l$ as input and output several local coordinates $z_\alpha\in U_\alpha$. The direct sum $\mathcal{U} = \bigoplus_{\alpha=1}^N U_\alpha$ is the multi-chart latent space. In practice, we set $U_\alpha = (0,1)^d$ for each $\alpha$ and regularize the Lipschitz constant of the corresponding encoding map to control the size and regularity of the region $\M_\alpha \subset \M$.

\paragraph{Chart Decoder.}

Each latent chart is equipped with a decoder function $\textbf{D}_\alpha$, which maps from the chart latent space $U_\alpha$ back to the ambient space. We denote the output as $y_{\alpha}$.

\paragraph{Chart Prediction.}

The chart prediction module $\textbf{P}$ produces confidence measure $p_{\alpha}$ for the $\alpha$-th. For simplicity we let the $p_{\alpha}$'s be probabilities that sum to unity. Ideally, if the input point lies on a single chart, then $p_{\alpha}$ should be one for this chart and zero elsewhere. If, on the other hand, the input point lies on more than one overlapping chart (say, $m$), then the ideal $p_{\alpha}$ is $1/m$ for these charts. In implementation, one may use the normalized distance of the data point to the chart center as the input to $\textbf{P}$. However, for complexly structured data, the charts may have different sizes (smaller for high curvature region and larger for flat region), and hence the normalized distance is not a useful indication. Therefore, we use $x$, $z$, and/or $z_{\alpha}$ as the input to $\textbf{P}$ instead. Several examples are given in Supplementary Material (Section~\ref{app:Networks}).

\paragraph{Final Output.}

If we summarize the overall pipeline, one sees that CAE produces $y_\alpha = \textbf{D}_\alpha\circ\textbf{E}_\alpha\circ\textbf{E}(x)$ for each chart as a reconstruction to the input $x$. Typically, the data lies on only one or at most a few of the charts, the confidence of which is signaled by $p_{\alpha}$. If only one, the corresponding $y_{\alpha}$ should be considered the final output; whereas if more than one, each of the correct $y_{\alpha}$'s should be similarly close to the input and thus taking either one is sensible. Thus, we select the $y_{\alpha}$ that maximizes $p_{\alpha}$ as the final output.

All modules of the CAE may be implemented by using fully connected and/or convolution layers (with ReLU activation). Details of the implementation are given in Section~\ref{app:Networks}.

\section{Network Training}\label{sec:Details}

In this section, we discuss the details of the training scheme, including the loss function, regularization, and pre-training. We also discuss how the number of charts is obtained.

\subsection{Loss Function}\label{sec:Loss}

Recall that a chart decoder output is $y_\alpha = \textbf{D}_\alpha\circ\textbf{E}_\alpha\circ\textbf{E}(x)$; hence, $e_\alpha = \|x - y_\alpha \|^2$ denotes the reconstruction error for the chart indexed by $\alpha$. If $x$ lies on only one chart, this chart should be the one that minimizes $e_{\alpha}$. Even if $x$ lies on more than one chart, the minimum of $e_{\alpha}$ is still a sensible reconstruction error overall.

Furthermore, to obtain sensible chart prediction probabilities $\{p_{\alpha}\}$, we will take the cross-entropy between them and $\{\ell_\alpha = \mathrm{softmax}(-e_\alpha)\}$ and minimize it. If $x$ lies on several overlapping charts, on these charts the $y_\alpha$'s are similar and off these charts, the $y_\alpha$'s are bad enough that the softmax of $-e_\alpha$ is close to zero. Hence, minimizing the cross-entropy ideally produces equal probabilities for the relevant charts and zero probability for the irrelevant ones.

Summarizing these two considerations, we use the loss function
\begin{equation}\label{eqn:LossPP}
  \mathcal{L}(x,W):=   \Big( \min_\alpha e_\alpha \Big) - \sum_{\beta=1}^N  \ell_{\beta} \log (p_{\beta}),
\end{equation}
where $W$ denotes the network parameters and $N$ is the number of charts.

\subsection{Regularization}\label{sec:Reg}

We introduce regularization to stabilize training by balancing the size of $\M_\alpha$ and avoiding a small number of charts dominating the data manifold. For example, a sphere $S^2$ needs at least two 2-dimensional charts. However, if we regularize the network with only $l_2$ weight decay, it may be able to well reconstruct the training data by using only one chart but badly generalizes, because the manifold structure is destroyed.

The idea is to add a Lipschitz regularization to the chart encoders to penalize mapping nearby points far away. Formally, the \emph{Lipschitz constant} of a function $f$ is $\sup_{x \neq y} |f(y)-f(x)|/|x-y|$. Since the chart spaces are fixed as $(0,1)^d$, controlling the Lipschitz constant of a chart function will control the maximum volume of decoding region $\textbf{D}_{\alpha}((0,1)^d)$ on the data manifold. 

The Lipschitz constant of a composition of functions can be upper bounded by the product of those of the constituent functions. Moreover, the Lipschitz constant of a matrix is its spectral norm and that of ReLU is 1. Hence, we can control the upper bound of the Lipschitz constant of a chart encoder function by regularizing the product of the spectral norms of the weight matrices in each layer.

To summarize, denote by $W_{\alpha}^k$ the weight matrix of the $k$th layer of $\textbf{E}_{\alpha}$. Then, we use the regularization
\begin{equation}\label{eqn:regularization}
  \mathcal{R}_{Lip}:= \left(\max_\alpha \prod_{k} ||W_\alpha^k ||_2\right) + \frac{1}{N} \sum_{\beta=1}^N  \prod_{k} ||W_\beta^k ||_2
\end{equation}

\subsection{Pre-Training}\label{sec:Initialization}

Since CAE jointly predicts the chart outputs and chart probabilities, it is important to properly initialize the model, so that the range of each decoder lies somewhere on the manifold and the probability that a randomly sampled point lies in each chart is approximately equal. To achieve so, we use furthest point sampling (FPS) to select $N$ points $x_{\alpha}$ from the training set as seeds for each chart. Then, we separately pre-train each chart encoder and decoder pair, such that $x_{\alpha}$ is at the center of the chart space $U_{\alpha}$. We further define the chart prediction probability as the categorical distribution and use it to pre-train the chart predictor. The loss function for each $\alpha$ is
\begin{equation}\label{eqn:InitLoss}
  \mathcal{L}_{init}(x_\alpha) := \|x_\alpha - \textbf{D}_\alpha\circ \textbf{E}_\alpha \circ \textbf{E}(x_\alpha)\|^2  
  + \|\textbf{E}_\alpha\circ \textbf{E}(x_\alpha)
  - [.5]^d\|^2 + \sum_{\beta=1}^N \delta_{\alpha \beta} \log (p_\beta).
\end{equation}
We can extend this pre-training idea to additionally ensure that the charts are oriented consistently, if desirable. See Supplementary Material (Section~\ref{sec:PCA}) for details.

We remark that although the training and pre-training altogether share several similarities with clustering, the model does more than that. The obvious distinction is that CAE eventually produces overlapping charts, which are different from either hard clustering or soft clustering. One may see a deeper distinction from the training insights. The pre-training ensures that each decoder is on the manifold, so that when training begins no decoder stays inactive. However, during training the charts may move, overlap, and even disappear. The last possibility enables us to obtain the correct number of charts \emph{a posteriori}, as the next subsection elaborates.

\subsection{Number of Charts}\label{sec:num.chart}

Since it is impossible to know \emph{a priori} the number $N$ of charts necessary to cover the data manifold, we over-specify $N$ and rely on the strong regularization~\eqref{eqn:regularization} to eliminate unnecessary charts. During training, a chart function $\textbf{E}_{\alpha}$ not utilized in the reconstruction of a point (i.e., $p_{\alpha} \approx 0$) does not get update from the loss function. Then, adding any convex penalty centered at 0 to the weights of $\textbf{E}_{\alpha}$ will result in weight decay and, if a chart decoder is never utilized, its weights will go to zero. In practice, we can remove these charts when the norm of the chart decoder weights falls below some tolerance. This mechanism offers a means to obtain the number of charts \emph{a posteriori}. We will show later a numerical example that illustrates that several charts do die off after training.

\section{Numerical Results}\label{sec:results}

In this section, we analyze the performance of the proposed CAE on synthetic and benchmark data. We begin by studying geometric objects and illustrating the important properties of CAE. Then, we demonstrate its use on MNIST and Fashion MNIST and compare the performance with plain auto-encoders and variational auto-encoders.

The implementation uses Tensorflow \cite{abadi2016tensorflow} and the built in ADAM optimizer with learning rate \text{3e-4} and batch size 64 to train for 100 epochs. The standard train/test split was used for MNIST and Fashion MNIST. The penalty for the Lipschitz regularization was set to \texttt{1e-2} for all tests. Demo code is available at 

\subsection{Illustrative Examples}

\paragraph{Chart Overlap and Transition.}
As motivated earlier, even a circle cannot be mapped to a 1-dimensional latent space homeomorphicly, which motivates the use of a multi-charted latent space. In Figure~\ref{fig:overlap}, we show a 4-chart result trained with 1000 points. On the top row, for each chart we decode points whose latent values are between 0.1 and 0.9. These charts overlap and the chart probabilities for each point are shown on the bottom row of the figure. One sees the smooth transition of the probabilities. By taking the argmax of the chart probabilities, the lower right corner of the figure illustrates the reconstruction of the entire circle.

\begin{figure}[ht]
  \centering
  \includegraphics[width=1\linewidth]{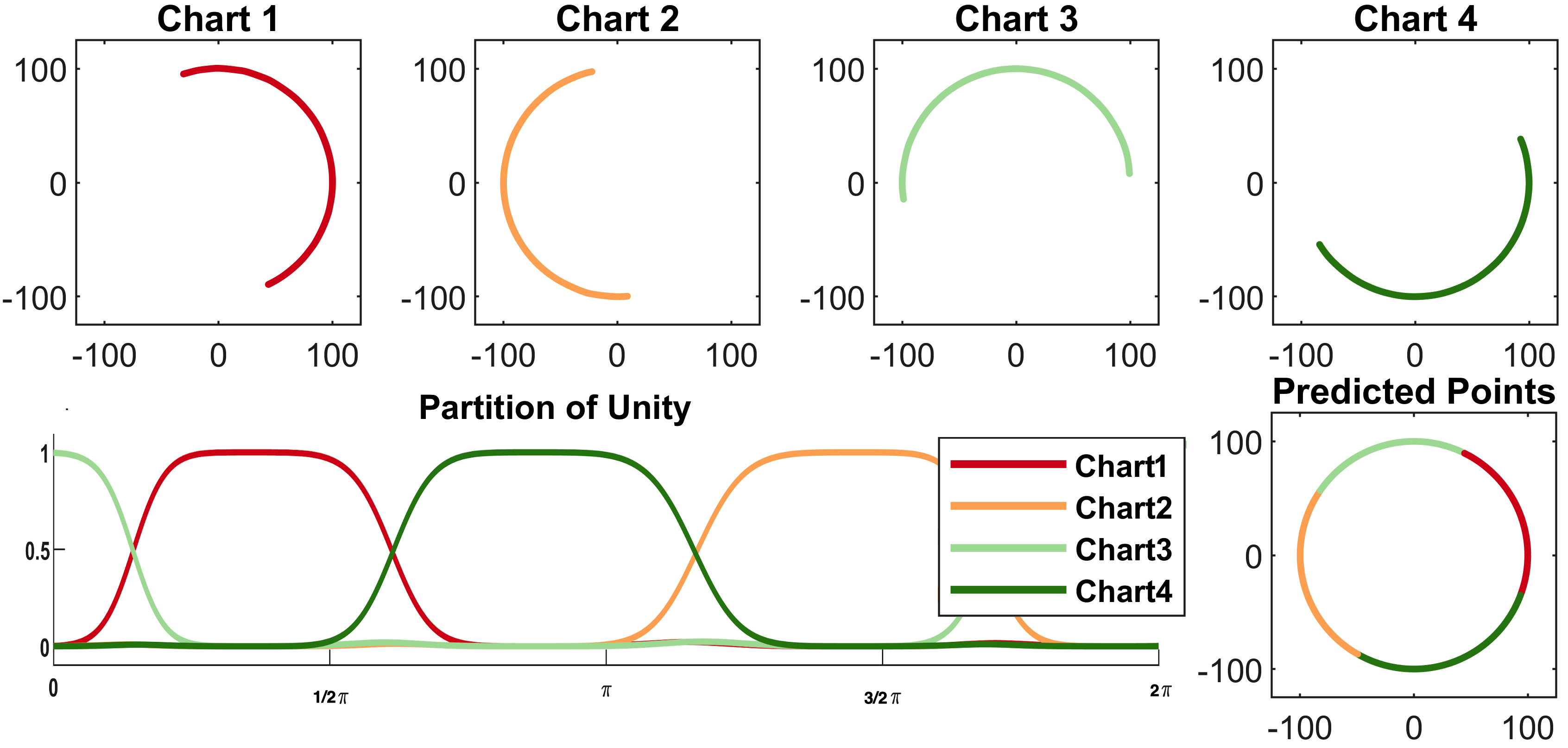}
  \vskip -0.15in
  \lrpicaption{Top: Individual charts. Bottom Left: Transition of the chart probabilities. Bottom Right: Charts after taking max of $p_\alpha$.}
  \label{fig:overlap}
\end{figure}

\paragraph{Effects of Lipschitz Regularization.}
In Section~\ref{sec:Reg} we mentioned the use of Lipschitz regularization as an important tool to stabilize training and encourage reasonable chart size. In Figure~\ref{fig:SphereEncoders} we show the result of autoencoding a sphere in $\mathbb{R}^3$ using a 2-dimensional charted latent space. The top row shows the charts trained with Lipschitz regularization and bottom without. One clearly sees that with Lipschitz regularization the charts are well localized, whereas without such regularization each chart spreads over the sphere but none covers the entire sphere well.

\begin{figure}[ht]
  \centering
  \includegraphics[width=.8\linewidth]{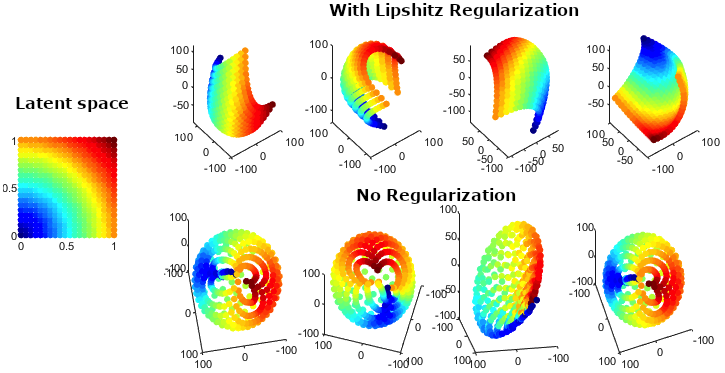}
  \vskip -0.1in
  \lrpicaption{Left: Chart latent space. Top: Model with Lipschitz regularization. Bottom: Model without Lipschitz regularization.}
  \label{fig:SphereEncoders}
\end{figure}

\paragraph{Automatic Chart Removal.}
As discussed in Section~\ref{sec:num.chart}, it is hard to know a priori the sufficient number of charts necessary to cover an unknown manifold. Hence, we propose over-specifying a number and relying on regularization to eliminate unnecessary charts. In Figure~\ref{fig:kill} we illustrate such an example. Pretrainining results in four charts but subsequent training removes two automatically.

\begin{figure}[ht]
  \centering
  \includegraphics[width=.5\linewidth]{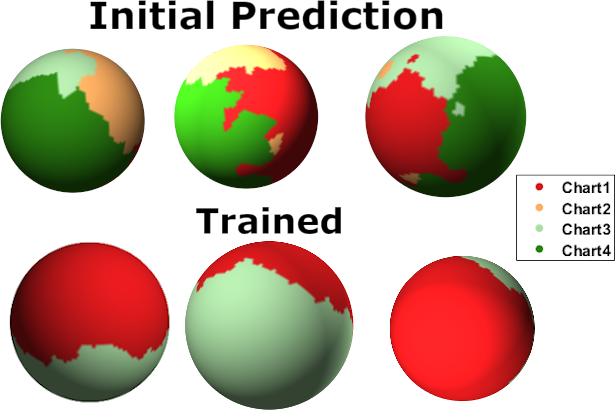}
  \vskip -0.1in
  \lrpicaption{Results of patch-removal techniques. Top: Pre-trained charts. Bottom: Final charts after training.}
  \label{fig:kill}
\end{figure}

\paragraph{Measuring Geodesics.}
One advantage of CAE compared with plain auto-encoders and VAEs is that it is able to measure geometric properties of the manifold, e.g., geodesics. In Figure~\ref{fig:geo} we illustrate such an example. To measure the geodesic distance of two points, we encode each point, connect them in the latent space, and sample points along the connection path. We then approximate the geodesic distance by summing the Euclidean distances for every pair of adjacent points. By increasing the number of sampling points we can improve the approximation quality. The figure shows a few geodesic curves and their approximation error, decreasing with denser sampling.

\begin{figure}[ht]
  \centering
  \includegraphics[width=.7\linewidth]{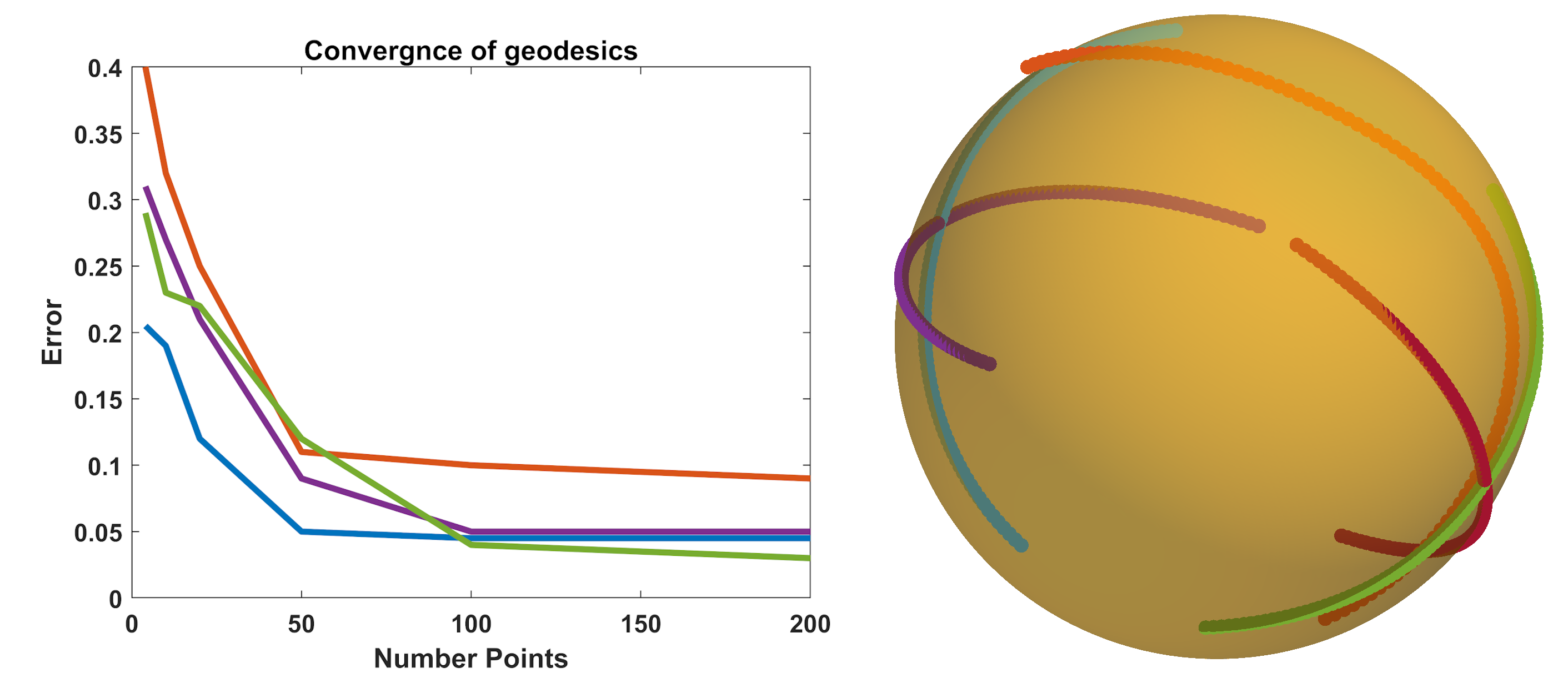}
  \vskip -0.1in
  \lrpicaption{Left: Geodesic approximation error v.s. number of points sampled in the latent space. Right: Geodesic curves generated from the chart decoders.}
  \label{fig:geo}
\end{figure}

\paragraph{Complex Topology.}
Beyond circles and spheres, CAE can handle increasingly more complex manifolds. Figure~\ref{fig:Gen3Results} shows a genus-3 manifold example, which is the surface of a pyramid with three holes. We use ten 2-dimensional charts to cover the entire manifold. The figure illustrates that new points generated by CAE stay close to the data manifold.

\begin{figure}[ht]
  \centering
  \includegraphics[width=0.9\linewidth]{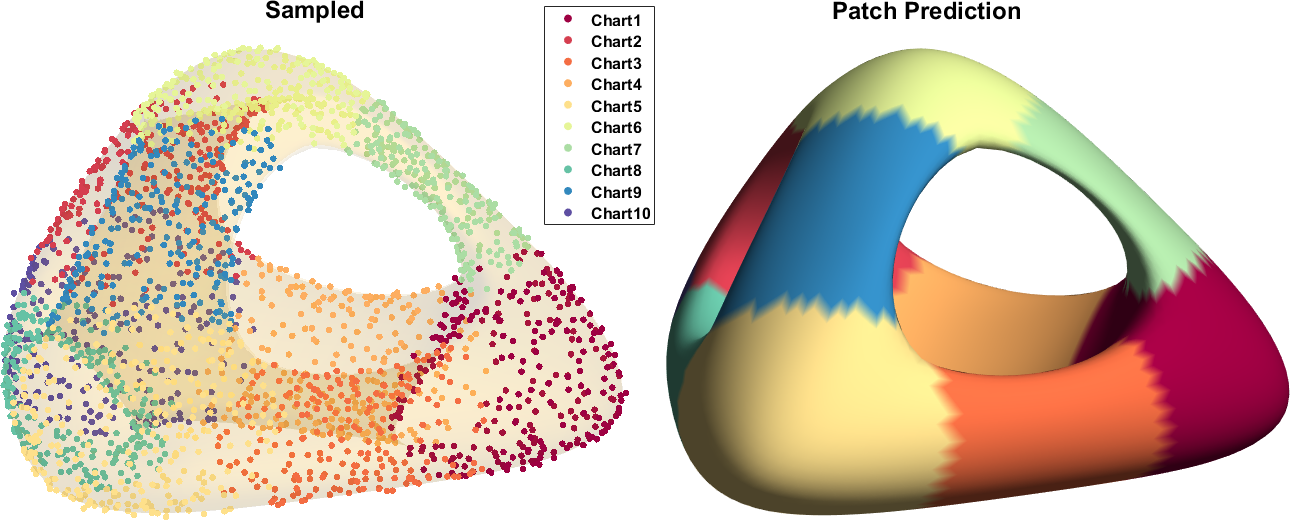}
  \vskip -0.1in
  \lrpicaption{Left: Points sampled from high probability regions. Right: Charts after taking max.}
  \label{fig:Gen3Results}
\end{figure}

\subsection{The MNIST and Fashion MNIST Manifolds}

In this subsection, we train a 4-chart CAE on MNIST and Fashion MNIST and explore the data manifold.

\paragraph{Decoder Outputs.}
Figure~\ref{fig:MNIST} illustrates several decoding results. Each column corresponds to one example. One finds that each chart decoder produces a legible digit, which may or may not coincide with the input. However, the maximum probability always points to the correct digit. Moreover, in some cases several chart decoders produce similar correct results (e.g., `7', `1', `9', and `6'), which indicates that the corresponding charts overlap in a region surrounding this digit.

\begin{figure}[ht]
  \centering
  \includegraphics[width=.85\linewidth]{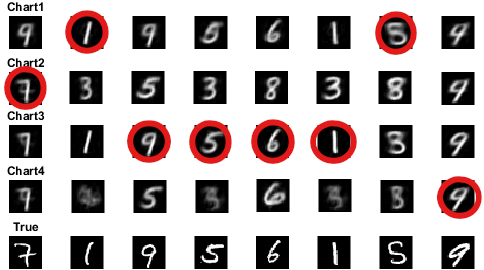}
  \vskip -0.1in
  \lrpicaption{Decoder outputs for a few digit examples. The circled outputs receive the highest probability and serve as the final reconstruction.}
  \label{fig:MNIST}
\end{figure}

\paragraph{Morphing Along the Geodesics.}
As demonstrated earlier, an advantage of CAE is that it is able to trace the geodesics. Here, we use the geodesic path between two data points to generate a morphing sequence between them. We compare such a morphing sequence with the sequence interpolated in the latent space of a VAE. The latter sequence does not necessarily stay on the manifold.

A few examples are illustrated in Figures~\ref{fig:MNIST2}--\ref{fig:FMNIST2}. In all examples, the sequences appear smooth. On MNIST, one sees that while the VAE sequence contains many ``ghost'' images, each of which looks like an overlay of two or more digits, the CAE sequence consists of cleaner digits. The transitions of the digits are also intuitive. The poor quality of VAE interpolation is more apparent on Fashion MNIST. Therein, the interpolated results are blurry overlaid images, as opposed to meaningful objects whose shapes smoothly vary, exhibited by the CAE sequence.

\begin{figure}[ht]
  \centering
  \includegraphics[width=1\linewidth]{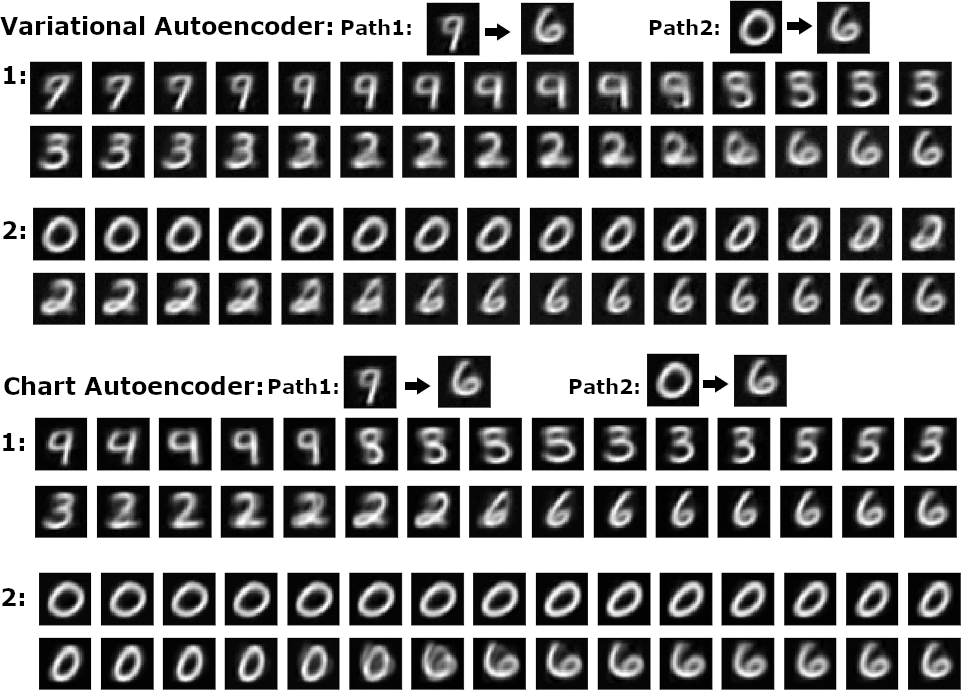}
  \vskip -0.1in
  \lrpicaption{Morphing on MNIST. Top: Morphing obtained by VAE. Bottom: Morphing obtained by the proposed CAE.}
  \label{fig:MNIST2}
\end{figure}

\begin{figure}[ht]
  \centering
  \includegraphics[width=1\linewidth]{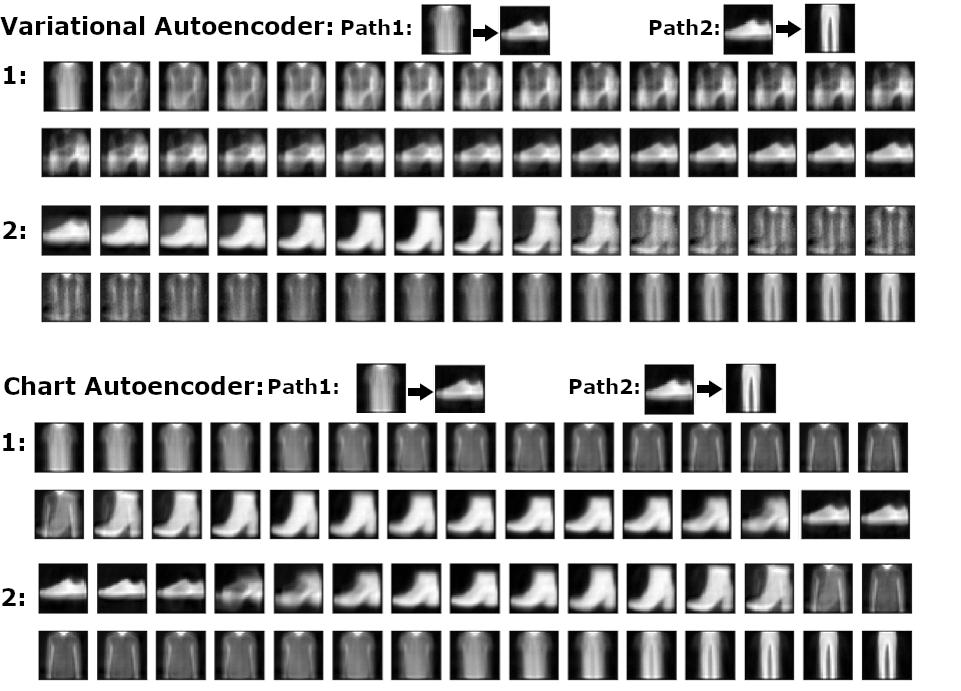}
  \vskip -0.1in
  \lrpicaption{Morphing on Fashion MNIST. Top: Morphing obtained by VAE. Bottom: Morphing obtained by the proposed CAE.}
  \label{fig:FMNIST2}
\end{figure}

\paragraph{Visualization of the Manifolds.}
To understand the manifold structure globally, we visualize each chart in Figures~\ref{fig:MNIST1}--\ref{fig:FMNIST1}. In these plots, we use t-SNE~\cite{maaten2008visualizing} to perform dimension reduction from the charted latent space to two dimensions. For MNIST, one sees that some digits are mostly covered by a single chart (e.g., purple 9) whereas others appear in multiple charts (e.g., navy blue 8). Similar observations are made for Fashion MNIST.

\begin{figure}[ht]
  \centering
  \includegraphics[width=.95\linewidth]{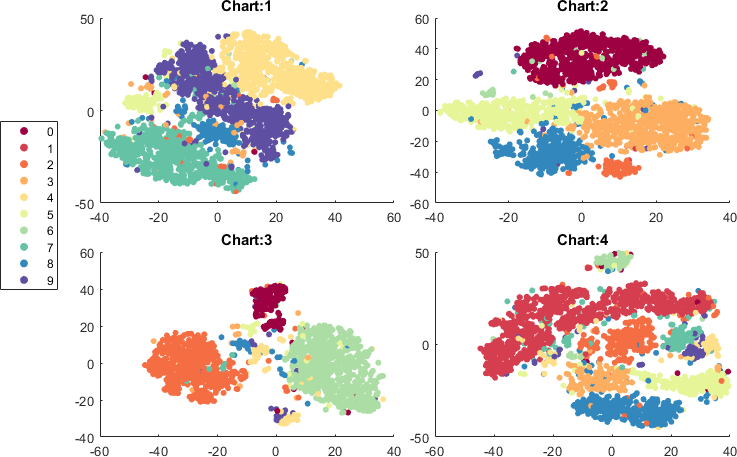}
  \vskip -0.1in
  \lrpicaption{TSNE Visualization of the MNIST manifold by charts. Each color corresponds to a different class from the training data.}
  \label{fig:MNIST1}
\end{figure}

\begin{figure}[ht]
  \centering
  \includegraphics[width=.95\linewidth]{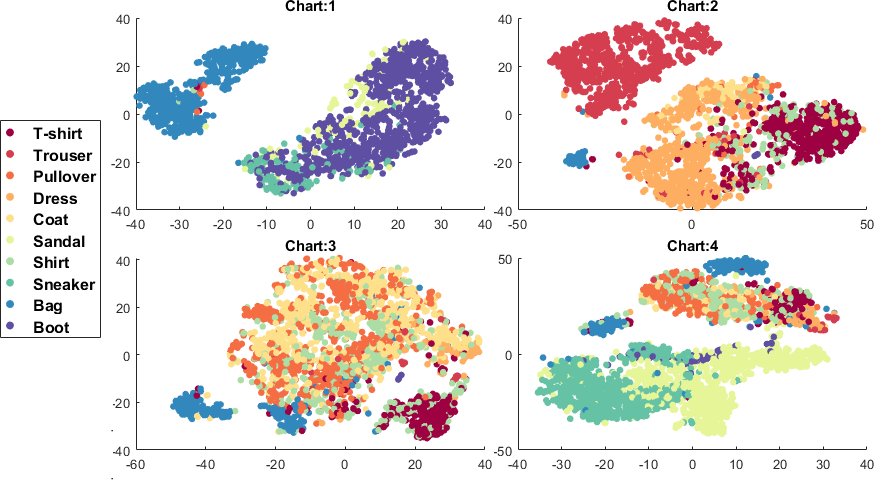}
  \vskip -0.1in
  \lrpicaption{Visualization of the Fashion MNIST manifold by charts.Each color corresponds to a different class from the training data.}
  \label{fig:FMNIST1}
\end{figure}

\subsection{
Model Evaluation}\label{sec:eval}

In addition to the qualitative evaluations so far, we quantitatively evaluate the performance of CAE by comparing it with plain auto-encoders and VAEs. Besides the usual reconstruction error, we define two complementary metrics to comprehensively evaluate models. Both require a uniform sampling in the latent space to make sense. The first one, named \emph{faithfulness}, is the constant one minus distance of a randomly generated sample from the training set. A larger value means closer to the data manifold and hence the model is more faithful to the manifold. The second metric, named \emph{coverage}, is the ratio between the number of distinct nearest training examples and the number of latent space samples. A high coverage is desired because otherwise some training examples (modes) are missed by the latent space. See Supplementary Material (Section~\ref{sec:eval.additional}) for the formal definitions of all three metrics.

We consider three data sets: sphere, MNIST \cite{lecun2010mnist}, and Fashion MNIST \cite{xiao2017fashion}. Because of space limitation, all results are reported in Section~\ref{sec:eval.additional} and here we show only Figure~\ref{fig:SpiderSphere}, which is typical. Each spider chart corresponds to one model class and the last one is an overlay of all. The four axes in each chart are (P) number of parameters, (R) reconstruction error, (F) faithfulness, and (C) Coverage. For all metrics R, F, and C, the farther away from the center, the better. On the other hand, the value of P increases radially and a larger P indicates higher model complexity (in terms of number of parameters).

\begin{figure}[ht]
  \centering
  \includegraphics[width=0.5\linewidth]{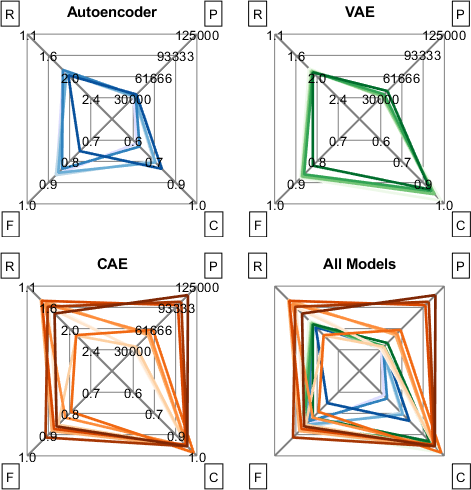}
  \lrpicaption{CAE Model comparison on $S^2$. P: number of parameters; R: reconstruction error; F: faithfulness; C: Coverage.}
  \label{fig:SpiderSphere}
\end{figure}

From the figure, one sees that at the same level model complexity, VAE outperforms plain auto-encoder, while CAE achieves the best results. Curiously, for VAEs and plain auto-encoders, the reconstruction error stays approximately the same regardless of the latent dimension; and only CAEs are able to reduce the reconstruction error through increasing the number of parameters.

\section{Conclusions and Future Work for Chart Based Auto-Encoding} \label{sec:PAEconc}

We have proposed and investigated the use of chart based parameterization to model manifold structured data, through introducing multi-chart latent spaces along with chart transition functions. The parameterization follows the mathematical definition of manifolds and allows one to significantly reduce the dimension of latent encoding. Numerically, we design geometric examples to analyze the behavior of the proposed model and illustrate its advantage over plain auto-encoders and VAEs. We also apply the model to real-life data sets (MNIST and fashion MNIST) to illustrate the manifold structures under-explored by existing auto-encoders.

The proposed chart based parameterization offers many opportunities for further analysis and applications. One interesting avenue is to study manifolds equipped with probability measures, which naturally introduce distributions in the latent space, more similar to VAEs. Another direction is to extend to non-auto-encoder type of generative models (e.g., GAN), which also incur distributional assumptions in the latent space.

\section{VAEs Do Not Generalize for Double Torus}\label{sec:vae.torus}

Figure~\ref{fig:vae} shows an experiment of VAEs with increasingly more parameters on data sampled from a double torus. The latent space dimension is set at two, the intrinsic dimension of the object. One sees that increasing the number of parameters in a VAE alone (without increasing the latent dimension) does not simultaneously produce good reconstruction and generalize. A latent space with a small dimension does not cover the entire manifold and a model with too many parameters overfits (the generated points may be far from the manifold).

\begin{figure}[ht]
  \centering
  \includegraphics[width=.95\linewidth]{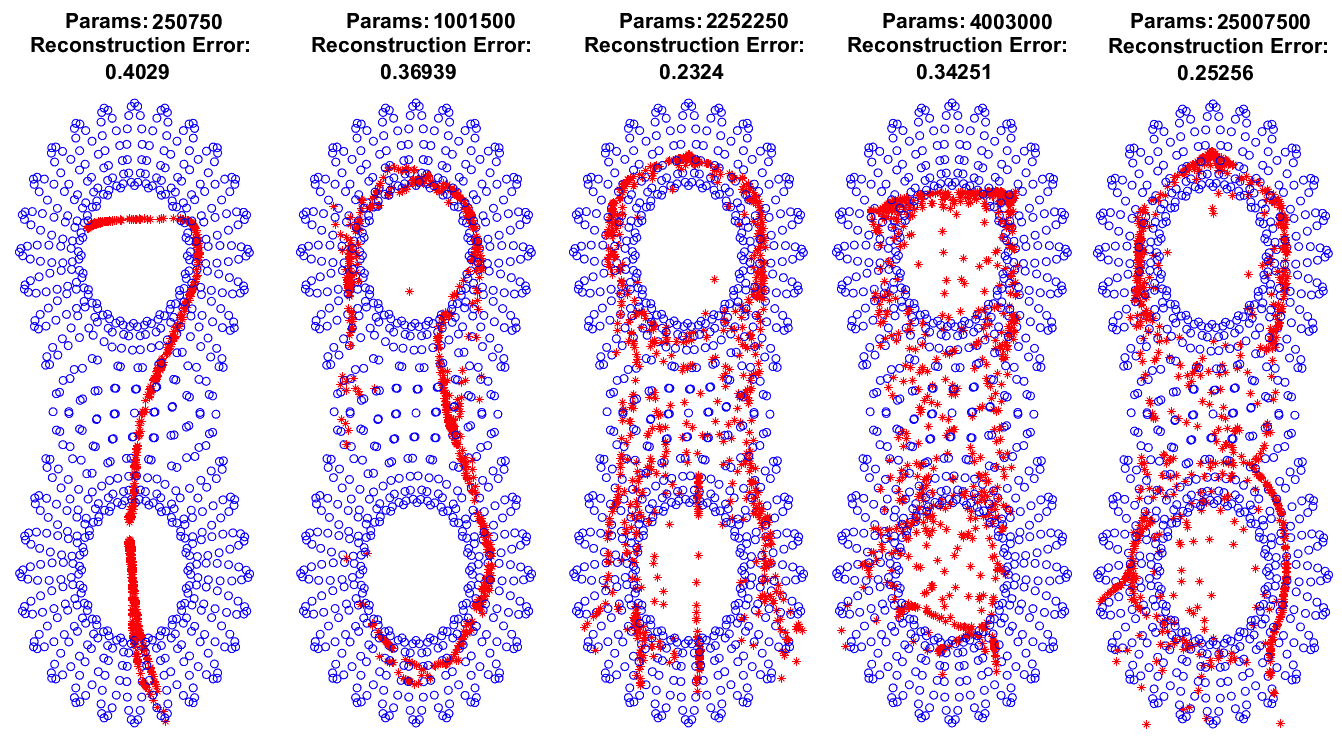}
  \lrpicaption{Increasingly overparametized VAEs with 2-dimensional flat latent space for data sampled from a double torus. Blue: training data. Red: generated data sampled from the latent space.}
  \label{fig:vae}
\end{figure}


\section{Chart Transition Functions}\label{sec:trans}

A key feature of the chart based parameterization in differential geometry is the construction of chart transition functions. As shown in Figure~\ref{fig:ChartManifold}, some points on the manifold may be parameterized by multiple charts. Let $\phi_{\alpha}$ and $\phi_{\beta}$ be two chart functions with chart overlap $M_{\alpha} \cap M_{\beta} \neq \emptyset$; then, the chart transition function $\phi_{\alpha \beta} = \phi_{\beta} \phi^{-1}_{\alpha}$.

In our model, the chart decoder $\textbf{D}_{\alpha}$ plays the role of $\phi_{\alpha}^{-1}$ and the composition $\textbf{E}_\beta \circ \textbf{E}$ plays the role of $\phi_{\beta}$. Hence, the chart transition function can be modeled by the composition:
\begin{equation}
  \phi_{\alpha \beta} :U_\alpha\cap U_\beta \rightarrow U_\beta\cap U_\alpha, \quad z_\alpha \mapsto  \textbf{E}_\beta \Big( \textbf{E} \big( \textbf{D}_\alpha (z_\alpha) \big) \Big).
\end{equation}
Note that if $x \in \mathcal{M}_{\alpha} \cap \mathcal{M}_{\beta}$, then to obtain a high-quality transition function we need:
\begin{itemize}
    \item $p_{\alpha}(x) \approx p_{\beta}(x)$ 
    \item  $ x \approx \textbf{D}_\alpha ( \textbf{E}_ \alpha( \textbf{E}(x) )) $
    \item  $ x \approx \textbf{D}_\beta ( \textbf{E}_\beta ( \textbf{E}(x) )) $.
\end{itemize}
Each of these conditions are naturally met if the loss function~\eqref{eqn:LossPP} is well minimized. To gauge the accuracy of such transition functions, one may re-encode the decoded data in a second pass:
\begin{equation}
    \begin{split}
  \mathcal{R}_{cycle}(x):= &
  \|x - \textbf{D}_\beta \circ \textbf{E}_\beta \circ \textbf{E} \circ \textbf{D}_\alpha \circ \textbf{E}_\alpha \circ \textbf{E} (x) \| \\
  & + \|x - \textbf{D}_\alpha \circ \textbf{E}_\alpha \circ \textbf{E} \circ \textbf{D}_\beta \circ \textbf{E}_\beta\circ \textbf{E} (x)\|.   
\end{split}
\end{equation}
The residual $\mathcal{R}_{cycle}(x)$ measures the error in chart transition and reconstruction.

\section{Chart Orientation}\label{sec:PCA}

We can extend pre-training to additionally orient all charts, whose centers are denoted by $c_{\alpha}$. To do so, we take a small sample of points $\mathcal{N}(c_\alpha)$ around the center and use principal component analysis (PCA) to define a $d$-dimensional embedding of this local neighborhood. Let the embeddings be $\hat{x}_\alpha(x) :=\frac{1}{C_\alpha} W_\alpha x + b_\alpha$ for all $x \in \mathcal{N}(c_\alpha)$, where $W_\alpha$ is the optimal orthogonal projection from $U_\alpha$ to $\RR^d$, $b_\alpha$ is used to shift $\hat{x}_\alpha(c_\alpha)$ to $[.5]^d$, and $C_i$ is chosen as a local scaling constant. Then, we can use this coordinate system to initialize the chart orientations by minimizing an additional regularization:
\begin{equation}\label{eqn:PCA}
  \mathcal{R}_{cords} =  \sum_{\alpha=1}^N\sum_{x \in \mathcal{N}(c_\alpha)} \langle \textbf{E}_\alpha \circ \textbf{E}(x) , \hat x_\alpha(x) \rangle.
\end{equation}

\section{Additional Results on Model Evaluation}\label{sec:eval.additional}

Here, we report the numerical results for all models, data sets, and metrics mentioned in Section~\ref{sec:eval} of the main text. See Figures~\ref{fig:SpiderMNIST}--\ref{fig:SpiderFMNIST} and Tables~\ref{tab:evaluation.S2}--\ref{tab:evaluation.FMNIST}.

The evaluation metrics are defined in the following.

\paragraph{Reconstruction Error}
Let $x$ be a data point in the test set $D_{test}$ and $y(x)$ be its reconstruction. Let there be $N$ test points. The reconstruction error is
\begin{equation}\label{eqn:recon}
\mathcal{E}_{recon}:= \frac{1}{N}\sum_{x \in D_{Test}} ||x-y||^2 .
\end{equation}

\paragraph{Faithfulness}
Let $\{z_i\}_{i=1}^\ell$ be a uniform sampling in the latent space and $\textbf{D}$ denote the decoder. Let $D_{test}$ be the training set. The faithfulness is
\begin{equation}\label{eqn:faithful}
\mathcal{E}_{faithful}=1 - \left( \frac{1}{\ell} \sum_{i=1}^\ell \min_{x \in D_{train} } \|x - \textbf{D} (z_i)\|^2\right) .
\end{equation}
We set $\ell=100$. The concept of faithfulness is complementary to the concept of novelty in deep generative models. Whereas novel samples are encouraged, this metric is concerned with how close the novel sample stays to the manifold. When the training set is sufficiently dense on the data manifold, newly generated data faraway from anything observed during training are unlikely to be realistic.

\paragraph{Coverage}
Let $\ell^*$ be the cardinality of the set
\begin{equation}
\{x^*~|~ x^* = \arg \min_{x \in D_{train} } \|x - \textbf{D} (z_i)\|^2 \}.
\end{equation}
Then, we define the coverage
\begin{equation}\label{eqn:coverage}
\mathcal{E}_{coverage} = \frac{\ell^*}{\ell}.
\end{equation}
A coverage score close to 1 indicates that the newly generated samples are well distributed on the manifold, whereas a score close to 0 indicates that the model may be experiencing mode collapse.

\begin{figure*}[ht]
  \centering
  \includegraphics[width=0.95\linewidth]{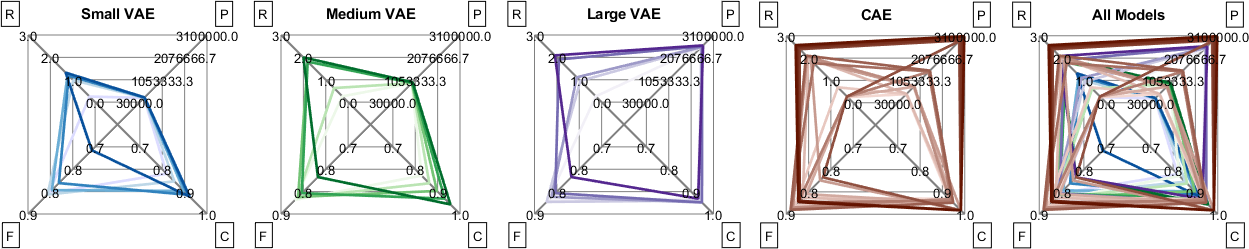}
  \lrpicaption{CAE Model comparison on MNIST. P: number of parameters; R: reconstruction error; F: faithfulness; C: Coverage.}
  \label{fig:SpiderMNIST}
\end{figure*}

\begin{figure*}[ht]
  \centering
  \includegraphics[width=0.95\linewidth]{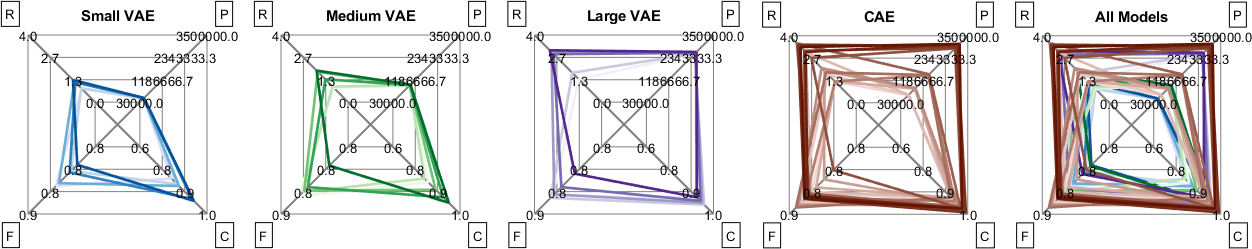}
  \lrpicaption{CAE Model comparison on Fashion MNIST. P: number of parameters; R: reconstruction error; F: faithfulness; C: Coverage.}
  \label{fig:SpiderFMNIST}
\end{figure*}

\begin{table*}[ht]
\begin{center}
\lrpicaption{Model comparison on $S^2$.}
\hspace{8pt}
\label{tab:evaluation.S2}
\resizebox{\textwidth}{!}{
\begin{tabular}{|c|c|c|c|c|c|c|}
\hline
\textbf{}                                                           & \textbf{\begin{tabular}[c]{@{}c@{}}\# of \\ Charts\end{tabular}}          & \textbf{\begin{tabular}[c]{@{}c@{}}Dim \\ of Charts\end{tabular}}         & \textbf{\begin{tabular}[c]{@{}c@{}}\# of \\  Param.\end{tabular}}                                               & \textbf{\begin{tabular}[c]{@{}c@{}}Recon.\\  Error\end{tabular}}                                                                                                                                                                   & \textbf{Faithfulness}                                                                                                                                                                                            & \textbf{Coverage}                                                                                                                                                                               \\ \hline
Auto-Encoder                                                        & \begin{tabular}[c]{@{}c@{}}1\\ 1\\ 1\\ 1\\ 1\end{tabular}                 & \begin{tabular}[c]{@{}c@{}}2\\ 4\\ 8\\ 16\\ 32\end{tabular}               & \begin{tabular}[c]{@{}c@{}}30850\\ 31250\\ 32050\\ 33650\\ 34950\end{tabular}                                   & \begin{tabular}[c]{@{}c@{}}0.0174 $\pm$ .0001\\ 0.0180 $\pm$ .0006\\ 0.0180 $\pm$ .0007\\ 0.0173 $\pm$ .0011\\ 0.0184 $\pm$ .0002\end{tabular}                                                                                     & \begin{tabular}[c]{@{}c@{}}0.838 $\pm$ .021\\ 0.842 $\pm$ .032\\ 0.829 $\pm$ .021\\ 0.808 $\pm$ .041\\ 0.710 $\pm$ .034\end{tabular}                                                                             & \begin{tabular}[c]{@{}c@{}}0.62 $\pm$ .01\\ 0.73 $\pm$ .01\\ 0.74 $\pm$ .01\\ 0.64 $\pm$ .02\\ 0.78 $\pm$ .01\end{tabular}                                                                      \\ \hline
\begin{tabular}[c]{@{}c@{}}Variational \\ Auto-Encoder\end{tabular} & \begin{tabular}[c]{@{}c@{}}1\\ 1\\ 1\\ 1\\ 1\end{tabular}                 & \begin{tabular}[c]{@{}c@{}}2\\ 4\\ 8\\ 16\\ 32\end{tabular}               & \begin{tabular}[c]{@{}c@{}}31052\\ 31654\\ 32858\\ 35266\\ 40082\end{tabular}                                   & \begin{tabular}[c]{@{}c@{}}0.0178 $\pm$  .0003\\ 0.0174 $\pm$  .0001\\ 0.0176 $\pm$  .0003\\ 0.0178 $\pm$  .0002\\ 0.0178 $\pm$  .0004\end{tabular}                                                                                & \begin{tabular}[c]{@{}c@{}}0.858 $\pm$ .021\\ 0.852 $\pm$ .016\\ 0.849 $\pm$ .021\\ 0.838 $\pm$ .023\\ 0.790 $\pm$ .011\end{tabular}                                                                             & \begin{tabular}[c]{@{}c@{}}0.98 $\pm$ .02\\ 0.93 $\pm$ .01\\ 0.94 $\pm$ .01\\ 0.92 $\pm$ .02\\ 0.91 $\pm$ .01\end{tabular}                                                                      \\ \hline
\begin{tabular}[c]{@{}c@{}}CAE\end{tabular}       & \begin{tabular}[c]{@{}c@{}}2\\ 2\\ 2\\ 4\\ 4\\ 4\\ 8\\ 8\\ 8\end{tabular} & \begin{tabular}[c]{@{}c@{}}2\\ 4\\ 8\\ 2\\ 4\\ 8\\ 2\\ 4\\ 8\end{tabular} & \begin{tabular}[c]{@{}c@{}}30533\\ 31891\\ 34607\\ 51991\\ 54557\\ 59689\\ 96707\\ 101689\\ 111653\end{tabular} & \begin{tabular}[c]{@{}c@{}}0.0130 $\pm$ .0002\\ 0.0144 $\pm$ .0002\\ 0.0201 $\pm$ .0007\\ 0.0128 $\pm$ .0001\\ 0.0136 $\pm$ .0002\\ 0.0200 $\pm$ .0003\\ 0.0130 $\pm$ .0001\\ 0.0142 $\pm$ .0001\\ 0.0156 $\pm$ .0004\end{tabular} & \begin{tabular}[c]{@{}c@{}}0.848 $\pm$ .021\\ 0.832 $\pm$ .016\\ 0.829 $\pm$ .061\\ 0.864 $\pm$ .021\\ 0.790 $\pm$ .011\\ 0.738 $\pm$ .021\\ 0.853 $\pm$ .016\\ 0.899 $\pm$ .011\\ 0.835 $\pm$ .021\end{tabular} & \begin{tabular}[c]{@{}c@{}}0.94 $\pm$ .01\\ 0.94 $\pm$ .01\\ 0.94 $\pm$ .01\\ 0.94 $\pm$ .01\\ 0.98 $\pm$ .01\\ 0.94 $\pm$ .01\\ 0.94 $\pm$ .01\\ 0.94 $\pm$ .01 \\ 0.94 $\pm$ .01\end{tabular} \\ \hline
\end{tabular}
}
\end{center}
\end{table*}

\begin{table*}[ht]
\lrpicaption{Model comparison on MNIST.}
\begin{center}
\resizebox{\textwidth}{!}{
\begin{tabular}{|c|c|c|c|c|c|c|}
\hline
\textbf{\begin{tabular}[c]{@{}c@{}}Model\\ (Latent Space)\end{tabular}}     & \textbf{\begin{tabular}[c]{@{}c@{}}\# of \\ Charts\end{tabular}}                                      & \textbf{\begin{tabular}[c]{@{}c@{}}Dim \\ of Charts\end{tabular}}                                   & \textbf{\begin{tabular}[c]{@{}c@{}}\# of \\  Param.\end{tabular}}                                                                                                           & \textbf{\begin{tabular}[c]{@{}c@{}}Recon.\\  Error\end{tabular}}                                                                                                                                                                                                                                                              & \textbf{Faithfulness}                                                                                                                                                                                                                                                                                           & \textbf{Coverage}                                                                                                                                                                                                                                                                   \\ \hline
\begin{tabular}[c]{@{}c@{}}Small\\ Variational \\ Auto-Encoder\end{tabular} & \begin{tabular}[c]{@{}c@{}}1\\ 1\\ 1\\ 1\\ 1\end{tabular}                                             & \begin{tabular}[c]{@{}c@{}}4\\ 8\\ 16\\ 32\\ 64\end{tabular}                                        & \begin{tabular}[c]{@{}c@{}}221568\\ 228632\\ 231040\\ 235856\\ 245488\end{tabular}                                                                                          & \begin{tabular}[c]{@{}c@{}}0.0675 $\pm$ .000\\ 0.0602 $\pm$ .001\\ 0.0577 $\pm$ .003\\ 0.0582 $\pm$ .001\\ 0.0568 $\pm$ .001\end{tabular}                                                                                                                                                                                     & \begin{tabular}[c]{@{}c@{}}0.838 $\pm$ .021\\ 0.842 $\pm$ .032\\ 0.829 $\pm$ .021\\ 0.806 $\pm$ .041\\ 0.711 $\pm$ .034\end{tabular}                                                                                                                                                                            & \begin{tabular}[c]{@{}c@{}}0.79 $\pm$ .01\\ 0.83 $\pm$ .01\\ 0.88 $\pm$ .01\\ 0.90 $\pm$ .02\\ 0.91 $\pm$ .01\end{tabular}                                                                                                                                                          \\ \hline
\begin{tabular}[c]{@{}c@{}}Medium\\ VAE\end{tabular}                        & \begin{tabular}[c]{@{}c@{}}1\\ 1\\ 1\\ 1\\ 1\end{tabular}                                             & \begin{tabular}[c]{@{}c@{}}4\\ 8\\ 16\\ 32\\ 64\end{tabular}                                        & \multicolumn{1}{l|}{\begin{tabular}[c]{@{}l@{}}893028\\ 896032\\ 902040\\ 914056\\ 938088\end{tabular}}                                                                     & \multicolumn{1}{l|}{\begin{tabular}[c]{@{}l@{}}0.0674 $\pm$ .001\\ 0.0637 $\pm$ .002\\ 0.0519 $\pm$ .001\\ 0.0519 $\pm$ .002\\ 0.0500 $\pm$ .003\end{tabular}}                                                                                                                                                                & \multicolumn{1}{l|}{\begin{tabular}[c]{@{}l@{}}0.858 $\pm$ .021\\ 0.852 $\pm$ .016\\ 0.849 $\pm$ .021\\ 0.838 $\pm$ .023\\ 0.790 $\pm$ .011\end{tabular}}                                                                                                                                                       & \multicolumn{1}{l|}{\begin{tabular}[c]{@{}l@{}}0.80 $\pm$ .01\\ 0.84 $\pm$ .02\\ 0.88 $\pm$ .02\\ 0.92 $\pm$ .01\\ 0.95 $\pm$ .01\end{tabular}}                                                                                                                                     \\ \hline
\begin{tabular}[c]{@{}c@{}}Large\\ VAE\end{tabular}                         & \begin{tabular}[c]{@{}c@{}}1\\ 1\\ 1\\ 1\\ 1\end{tabular}                                             & \begin{tabular}[c]{@{}c@{}}4\\ 8\\ 16\\ 32\\ 64\end{tabular}                                        & \multicolumn{1}{l|}{\begin{tabular}[c]{@{}l@{}}2535028\\ 2541032\\ 2553040\\ 2577056\\ 2625088\end{tabular}}                                                                & \multicolumn{1}{l|}{\begin{tabular}[c]{@{}l@{}}0.0674 $\pm$ .001\\ 0.0605 $\pm$ .000\\ 0.0589 $\pm$ .000\\ 0.0509 $\pm$ .000\\ 0.0491 $\pm$ .007\end{tabular}}                                                                                                                                                                & \multicolumn{1}{l|}{\begin{tabular}[c]{@{}l@{}}0.860 $\pm$ .008\\ 0.864 $\pm$ .011\\ 0.849 $\pm$ .016\\ 0.838 $\pm$ .017\\ 0.893 $\pm$ .011\end{tabular}}                                                                                                                                                       & \multicolumn{1}{l|}{\begin{tabular}[c]{@{}l@{}}0.92 $\pm$ .00\\ 0.93 $\pm$ .02\\ 0.94 $\pm$ .01\\ 0.94 $\pm$ .02\\ 0.92 $\pm$ .01\end{tabular}}                                                                                                                                     \\ \hline
CAE                                                                & \begin{tabular}[c]{@{}c@{}}4\\ 4\\ 4\\ 8\\ 8\\ 8\\ 16\\ 16\\ 16\\ 32\\ 32\\ 32\\ 32\\ 32\end{tabular} & \begin{tabular}[c]{@{}c@{}}4\\ 8\\ 16\\ 4\\ 8\\ 16\\ 4\\ 8\\ 16\\ 4\\ 8\\ 16\\ 32\\ 64\end{tabular} & \begin{tabular}[c]{@{}c@{}}419807\\ 424939\\ 435203\\ 759139\\ 769103\\ 789031\\ 1445003\\ 1464631\\ 1503887\\ 2845531\\ 2884487\\ 2962399\\ 3072399\\ 3080638\end{tabular} & \begin{tabular}[c]{@{}c@{}}0.0675 $\pm$ .000\\ 0.0631 $\pm$ .006\\ 0.0499 $\pm$ .002\\ 0.0672 $\pm$ .003\\ 0.0511 $\pm$ .002\\ 0.0493 $\pm$ .003\\ 0.0673 $\pm$ .001\\ 0.0523 $\pm$ .002\\ 0.0489 $\pm$ .001\\ 0.0672 $\pm$ .002\\ 0.0460 $\pm$ .001\\ 0.0453 $\pm$ .003\\ 0.0447 $\pm$ .001\\ 0.0431 $\pm$ .002\end{tabular} & \begin{tabular}[c]{@{}c@{}}0.850 $\pm$ .008\\ 0.874 $\pm$ .006\\ 0.851 $\pm$ .016\\ 0.840 $\pm$ .012\\ 0.799 $\pm$ .013\\ 0.880 $\pm$ .021\\ 0.864 $\pm$ .014\\ 0.886 $\pm$ .016\\ 0.867 $\pm$ .015\\ 0.790 $\pm$ .011\\ 0.760 $\pm$ .008\\ 0.864 $\pm$ .008\\ 0.860 $\pm$ .008\\ 0.863 $\pm$ .009\end{tabular} & \begin{tabular}[c]{@{}c@{}}0.92 $\pm$ .01\\ 0.93 $\pm$ .01\\ 0.94 $\pm$ .01\\ 0.94 $\pm$ .02\\ 0.98 $\pm$ .01\\ 0.92 $\pm$ .01\\ 0.93 $\pm$ .01\\ 0.94 $\pm$ .01\\ 0.94 $\pm$ .02\\ 0.98 $\pm$ .01\\ 0.98 $\pm$ .01\\ 0.98 $\pm$ .02\\ 0.98 $\pm$ .01\\ 0.98 $\pm$ .02\end{tabular} \\ \hline
\end{tabular}
}
\label{tab:evaluation.MNIST}
\end{center}
\end{table*}

\begin{table*}[ht]
\lrpicaption{Model comparison on Fashion MNIST.}
\begin{center}
\label{tab:evaluation.FMNIST}
\resizebox{\textwidth}{!}{
\begin{tabular}{|c|c|c|c|c|c|c|}
\hline
\textbf{\begin{tabular}[c]{@{}c@{}}Model\\ (Latent Space)\end{tabular}}     & \textbf{\begin{tabular}[c]{@{}c@{}}\# of \\ Charts\end{tabular}}                                      & \textbf{\begin{tabular}[c]{@{}c@{}}Dim \\ of Charts\end{tabular}}                                   & \textbf{\begin{tabular}[c]{@{}c@{}}\# of \\  Param.\end{tabular}}                                                                                                           & \textbf{\begin{tabular}[c]{@{}c@{}}Recon.\\  Error\end{tabular}}                                                                                                                                                                                                                                                              & \textbf{Faithfulness}                                                                                                                                                                                                                                                                                           & \textbf{Coverage}                                                                                                                                                                                                                                                                   \\ \hline
\begin{tabular}[c]{@{}c@{}}Small\\ Variational \\ Auto-Encoder\end{tabular} & \begin{tabular}[c]{@{}c@{}}1\\ 1\\ 1\\ 1\\ 1\end{tabular}                                             & \begin{tabular}[c]{@{}c@{}}4\\ 8\\ 16\\ 32\\ 64\end{tabular}                                        & \begin{tabular}[c]{@{}c@{}}221568\\ 228632\\ 231040\\ 235856\\ 245488\end{tabular}                                                                                          & \begin{tabular}[c]{@{}c@{}}0.0619 $\pm$ .003\\ 0.0617 $\pm$ .001\\ 0.0577 $\pm$ .002\\ 0.0582 $\pm$ .001\\ 0.0568 $\pm$ .001\end{tabular}                                                                                                                                                                                     & \begin{tabular}[c]{@{}c@{}}0.837 $\pm$ .021\\ 0.839 $\pm$ .032\\ 0.831 $\pm$ .021\\ 0.807 $\pm$ .041\\ 0.788 $\pm$ .034\end{tabular}                                                                                                                                                                            & \begin{tabular}[c]{@{}c@{}}0.81 $\pm$ .01\\ 0.84 $\pm$ .01\\ 0.85 $\pm$ .01\\ 0.92 $\pm$ .02\\ 0.93 $\pm$ .01\end{tabular}                                                                                                                                                          \\ \hline
\begin{tabular}[c]{@{}c@{}}Medium\\ VAE\end{tabular}                        & \begin{tabular}[c]{@{}c@{}}1\\ 1\\ 1\\ 1\\ 1\end{tabular}                                             & \begin{tabular}[c]{@{}c@{}}4\\ 8\\ 16\\ 32\\ 64\end{tabular}                                        & \multicolumn{1}{l|}{\begin{tabular}[c]{@{}l@{}}893028\\ 896032\\ 902040\\ 914056\\ 938088\end{tabular}}                                                                     & \multicolumn{1}{l|}{\begin{tabular}[c]{@{}l@{}}0.0614 $\pm$ .002\\ 0.0607 $\pm$ .003\\ 0.0519 $\pm$ .001\\ 0.0564 $\pm$ .003\\ 0.0512 $\pm$ .002\end{tabular}}                                                                                                                                                                & \multicolumn{1}{l|}{\begin{tabular}[c]{@{}l@{}}0.831 $\pm$ .021\\ 0.851 $\pm$ .016\\ 0.848 $\pm$ .021\\ 0.840 $\pm$ .023\\ 0.702 $\pm$ .011\end{tabular}}                                                                                                                                                       & \multicolumn{1}{l|}{\begin{tabular}[c]{@{}l@{}}0.83 $\pm$ .01\\ 0.81 $\pm$ .02\\ 0.87 $\pm$ .02\\ 0.91 $\pm$ .01\\ 0.94 $\pm$ .01\end{tabular}}                                                                                                                                     \\ \hline
\begin{tabular}[c]{@{}c@{}}Large\\ VAE\end{tabular}                         & \begin{tabular}[c]{@{}c@{}}1\\ 1\\ 1\\ 1\\ 1\end{tabular}                                             & \begin{tabular}[c]{@{}c@{}}4\\ 8\\ 16\\ 32\\ 64\end{tabular}                                        & \multicolumn{1}{l|}{\begin{tabular}[c]{@{}l@{}}2535028\\ 2541032\\ 2553040\\ 2577056\\ 2625088\end{tabular}}                                                                & \multicolumn{1}{l|}{\begin{tabular}[c]{@{}l@{}}0.0564 $\pm$ .001\\ 0.0525 $\pm$ .002\\ 0.0401 $\pm$ .001\\ 0.0414 $\pm$ .001\\ 0.0391 $\pm$ .002\end{tabular}}                                                                                                                                                                & \multicolumn{1}{l|}{\begin{tabular}[c]{@{}l@{}}0.859 $\pm$ .008\\ 0.864 $\pm$ .011\\ 0.856 $\pm$ .016\\ 0.840 $\pm$ .017\\ 0.810 $\pm$ .011\end{tabular}}                                                                                                                                                       & \multicolumn{1}{l|}{\begin{tabular}[c]{@{}l@{}}0.91 $\pm$ .00\\ 0.95 $\pm$ .02\\ 0.94 $\pm$ .01\\ 0.93 $\pm$ .02\\ 0.91 $\pm$ .01\end{tabular}}                                                                                                                                     \\ \hline
CAE                                                              & \begin{tabular}[c]{@{}c@{}}4\\ 4\\ 4\\ 8\\ 8\\ 8\\ 16\\ 16\\ 16\\ 32\\ 32\\ 32\\ 32\\ 32\end{tabular} & \begin{tabular}[c]{@{}c@{}}4\\ 8\\ 16\\ 4\\ 8\\ 16\\ 4\\ 8\\ 16\\ 4\\ 8\\ 16\\ 32\\ 64\end{tabular} & \begin{tabular}[c]{@{}c@{}}419807\\ 424939\\ 435203\\ 759139\\ 769103\\ 789031\\ 1445003\\ 1464631\\ 1503887\\ 2845531\\ 2884487\\ 2962399\\ 3072399\\ 3080638\end{tabular} & \begin{tabular}[c]{@{}c@{}}0.0575 $\pm$ .003\\ 0.0531 $\pm$ .002\\ 0.0399 $\pm$ .002\\ 0.0572 $\pm$ .002\\ 0.0511 $\pm$ .003\\ 0.0493 $\pm$ .002\\ 0.0573 $\pm$ .002\\ 0.0519 $\pm$ .001\\ 0.0345 $\pm$ .002\\ 0.0432 $\pm$ .001\\ 0.0390 $\pm$ .003\\ 0.0353 $\pm$ .002\\ 0.0397 $\pm$ .002\\ 0.0367 $\pm$ .003\end{tabular} & \begin{tabular}[c]{@{}c@{}}0.850 $\pm$ .009\\ 0.874 $\pm$ .007\\ 0.851 $\pm$ .015\\ 0.840 $\pm$ .011\\ 0.822 $\pm$ .012\\ 0.879 $\pm$ .019\\ 0.863 $\pm$ .018\\ 0.885 $\pm$ .016\\ 0.854 $\pm$ .018\\ 0.801 $\pm$ .014\\ 0.864 $\pm$ .009\\ 0.866 $\pm$ .017\\ 0.859 $\pm$ .011\\ 0.865 $\pm$ .010\end{tabular} & \begin{tabular}[c]{@{}c@{}}0.93 $\pm$ .01\\ 0.92 $\pm$ .01\\ 0.91 $\pm$ .01\\ 0.91 $\pm$ .02\\ 0.94 $\pm$ .01\\ 0.95 $\pm$ .01\\ 0.93 $\pm$ .01\\ 0.91 $\pm$ .01\\ 0.96 $\pm$ .02\\ 0.97 $\pm$ .01\\ 0.98 $\pm$ .01\\ 0.99 $\pm$ .02\\ 0.97 $\pm$ .01\\ 0.97 $\pm$ .02\end{tabular} \\ \hline
\end{tabular}
}
\end{center}
\end{table*}


 
\chapter{SIMPLICAL APPROXIMATION OF DATA MANIFOLDS} \label{Chap:Approx}

Most provable NN approximation papers \cite{csaji2001approximation, gyorfi2006distribution, shaham2018provable, chen2019efficient} for low-dimensional manifolds deal with approximating some function $f: \M \rightarrow \RR$ supported on or near some smooth $d$-dimensional manifold isometrically embedded in $\RR^D$ where $d \ll D$. This setup models tasks such as recognition, classification or segmentation of data. However, when dealing with generative models such as auto-encoders and GANs, it is more interesting to see how well a model can actually represent the manifold $\M$ given some training data $X = \{x\}_{i=1}^n$ sampled form $\M$. In this chapter we construct a deep relu-neural network that behaves like a simplectic approximation (high dimensional mesh) and show that such a network can also represent important topological and geometric information in the so-called latent space.. Recently, \cite{khrulkov2019universality} showed a version of universal approximation for data distributed on compact manifolds, but do not provide any bounds for the size of the networks or their approximation quality. In this chapter we construct a deep relu-neural network that behaves like a simplectic approximation (high dimensional mesh) and show that such a network can also represent important topological and geometric information in the so-called latent space. In this chapter, we rigorously address the topology and geometry approximation behaviors in auto-encoders. We show that topology preservation in auto-encoders is a necessary condition to approximate data manifold $\epsilon$-closely. Moreover, we study a universal manifold approximation theorem based on multi-chart parameterization and provide estimations of training data size and network size.

\blfootnote{Portions of this chapter have been submitted as S. C. SCHONSHECK, J. CHEN, AND R. LAI (2020), \textit{Chart Auto-Encoders for Manifold Structured Data}, NeuRIPS 2020.}

\section{Faithful Representation}

Mathematically, we denote an auto-encoder of $\M\subset\real^m$ by a 3-tuple $(\Z; \E,\D)$. Here, $\Z = \E(\M)$ represents the latent space; $\E$, representing the encoder, continuously maps $\M$ to $\Z$; and $\D$, representing the decoder, continuously generates a data point $\D(z)\in\real^m$ from a latent variable $z\in\Z$. A plain auto-encoder refers to an auto-encoder whose latent space is a simply connected Euclidean domain. 

\paragraph{Faithful representation} To quantitatively measure an auto-encoder, we introduce the following concept to characterize it. 

\begin{definition}[Faithful Representation]
An auto-encoder $(\Z; \E,\D)$ is called a faithful representation of $\M$ if $x = \D\circ\E(x), \forall x\in\M$. An auto-encoder is called an $\epsilon$-faithful representation of $\M$ if $\displaystyle  \sup_{x\in\M} \|x - \D\circ\E(x)\| \leq \epsilon$. 
\end{definition}
To characterize manifolds and quality of auto-encoders in terms of their intrinsic geometry, we reiterate the concept \emph{reach} of a manifold~\cite{federer1959curvature}. The reach of a manifold is essentially the size of the maximum unique tubular neighborhood around the manifold. More formally, given a $d$-dimensional compact data manifold $\M\subset \real^m$, let $\displaystyle \mathcal{G} = \Big \{y\in \real^m~|~\exists p\neq q\in\M \text{~satisfying~} \|y - p\| = \|y- q\| = \inf_{x\in\M}\|x - y\| \Big\} $. The reach of $\M$ is defined as $\displaystyle \tau (\M)= \inf_{x\in\M,y\in\mathcal{G}}\|x-y\|$. With this in mind we can state our first important theorem.

\begin{theorem}
\label{thm:faithfulrep} Let $\M$ be a $d$-dimensional compact manifold. 
If an auto-encoder $(\Z; \E,\D)$ of $\M$ is an  $\epsilon$-faithful representation with $\epsilon < \tau(\M)$, then $\Z$ and $\D(\Z)$ must be homeomorphic to $\M$. Particularly, a $d$-dimensional compact manifold with non-contractible topology can not be $\epsilon$-faithfully represented by a plain auto-encoder with a  latent space $\Z$ being a $d$-dimensional simply connected domain in $\real^d$.
\end{theorem}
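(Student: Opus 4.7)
My plan is to exploit the reach condition to construct a continuous nearest-point projection $\pi_{\M}$, and then use it to compare $\M$ with the latent space $\Z$ topologically. The key observation is that the hypothesis $\epsilon < \tau(\M)$ places $\D(\Z) = \D(\E(\M))$ inside the open $\epsilon$-tubular neighborhood $T_\epsilon(\M) = \{y \in \mathbb{R}^m : d(y,\M) < \tau(\M)\}$, on which $\pi_{\M}:T_\epsilon(\M) \to \M$ is a well-defined continuous map. Consequently $\Phi := \pi_{\M} \circ \D : \Z \to \M$ is continuous and naturally pairs with $\E : \M \to \Z$.

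The central step is to build a straight-line-in-ambient-space homotopy
\[
H_t(x) \;=\; \pi_{\M}\bigl((1-t)x + t\,\D\E(x)\bigr), \qquad t \in [0,1],
\]
whose segment stays inside $T_\epsilon(\M)$ because each of its points sits at distance at most $t\|\D\E(x) - x\| \le \epsilon < \tau(\M)$ from $x \in \M$. Since $H_0 = \mathrm{id}_{\M}$ and $H_1 = \Phi \circ \E$, this yields $\Phi \circ \E \simeq \mathrm{id}_{\M}$. A symmetric construction, using that $\Z = \E(\M)$ supplies an $\M$-lift of every $z$, pushes $H_t$ through $\E$ to produce $\E \circ \Phi \simeq \mathrm{id}_{\Z}$, exhibiting $\E$ and $\Phi$ as mutually inverse homotopy equivalences. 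To upgrade to a genuine homeomorphism I would note that $\Z$ is compact Hausdorff (as a continuous image of $\M$) and argue that $\E$ is injective: a hypothetical collision $\E(x_1)=\E(x_2)$ forces $\D\E(x_1)=\D\E(x_2)$, and combining this with the nearest-point uniqueness inside $T_{\tau(\M)}(\M)$ via the relation $\Phi\circ\E = \mathrm{id}_{\M}$ (which the reach estimate delivers at the endpoint $t=1$) rules out $x_1 \ne x_2$. Then $\E$ is a continuous bijection between compact Hausdorff spaces, hence a homeomorphism, and the restriction of $\pi_{\M}\circ\D$ to $\D(\Z)$ supplies the analogous homeomorphism $\D(\Z) \cong \M$.

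For the particular statement, I would derive a topological obstruction. A simply connected domain $\Z \subset \mathbb{R}^d$ has $\pi_1(\Z) = 0$; if additionally $\M$ is non-contractible, some homotopy or homology invariant must fail to vanish. Most sharply, for any compact $d$-manifold without boundary the mod-$2$ fundamental class gives $H_d(\M;\mathbb{Z}/2) = \mathbb{Z}/2$, while a proper open subset of $\mathbb{R}^d$ is non-compact and satisfies $H_d(\Z;\mathbb{Z}/2) = 0$. These cannot agree under the homotopy equivalence established above, contradicting the existence of an $\epsilon$-faithful plain auto-encoder.

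The hardest step will be the injectivity argument that promotes homotopy equivalence to genuine homeomorphism, since $\E$ carries no canonical section and the homotopies above live most naturally on the $\M$ side; one must carefully exploit nearest-point uniqueness inside $T_{\tau(\M)}(\M)$ together with the fact that the $t=1$ endpoint of $H_t$ gives a true left inverse relation rather than merely a homotopic one under the reach constraint.
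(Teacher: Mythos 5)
Your route is genuinely different from the paper's. The paper argues purely point-set topologically: $\E$ is onto by definition of $\Z = \E(\M)$, injective because a collision $\E(x_1)=\E(x_2)$ would place the single point $\D\E(x_1)$ within $\epsilon<\tau(\M)$ of two distinct points of $\M$ (which the paper reads as violating the reach), and a continuous bijection from a compact space to a Hausdorff space is a homeomorphism; the obstruction clause is then dispatched by the one-line remark that a non-contractible space cannot be homeomorphic to a simply connected domain. Your tubular-neighborhood homotopy $H_t(x)=\pi_{\M}\bigl((1-t)x+t\,\D\E(x)\bigr)$ does not appear in the paper, and the part of your argument that uses only the one-sided conclusion $\Phi\circ\E\simeq\mathrm{id}_{\M}$ is sound and arguably more rigorous than the paper's final step: functoriality makes $\pi_1(\M)$ (or, for closed $\M$, $H_d(\M;\mathbb{Z}/2)$) a retract of the corresponding invariant of $\Z$, which vanishes when $\Z$ is a simply connected (resp.\ open) domain in $\real^d$. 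That gives the ``particular'' claim without ever needing $\E$ to be injective.

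The gap is in the homeomorphism half. You assert that the reach estimate delivers the exact identity $\Phi\circ\E=\mathrm{id}_{\M}$ at $t=1$; it does not. The point $\pi_{\M}(\D\E(x))$ is the unique nearest point of $\M$ to $\D\E(x)$, whereas $x$ is merely \emph{some} point of $\M$ within $\epsilon$ of $\D\E(x)$ --- nothing forces $x$ to be the nearest one, so $H_1$ is homotopic to, but in general not equal to, $\mathrm{id}_{\M}$. Consequently your injectivity argument for $\E$ collapses, and with it the upgrade from homotopy equivalence to homeomorphism; the ``symmetric'' homotopy $\E\circ\Phi\simeq\mathrm{id}_{\Z}$ is also circular as written, since pushing $H_t$ through $\E$ requires a continuous, consistent choice of preimage of each $z$, which is exactly the injectivity (or a section of $\E$) that you have not yet established. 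To close the gap you would have to prove injectivity of $\E$ directly from the reach, as the paper attempts --- arguing that a single ambient point cannot be $\epsilon$-close to two distinct manifold points when $\epsilon<\tau(\M)$ --- though note that even that step needs more than the bare definition of reach, since being within $\epsilon$ of two points is not the same as having two equidistant nearest points.
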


This theorem provides a necessary condition of the latent space topology for a faithful representation. It implies that a data manifold with complex topology can not be $\epsilon$-faithfully represented by an auto-encoder with a simply connected latent space like $\real^d$ used in plain auto-encoders. For example, a plain auto-encoder with a single 2 dimensional latent space cannot $\epsilon$-faithfully represent a sphere. 


\section{Main Theoretical Results}

To address the issue of topology violation in plain auto-encoders,  we propose a multi-chart model based on the definition of manifolds. We discuss the main results for approximating data manifolds using multi-chart auto-encoders with training data size and network size estimation. These results motivate the proposed CAE architecture as a generalization of vanilla auto-encoder \ref{Chap:CAE}. We defer all detailed proof of all statements to the section \ref{sec:proofs}.

A training data $X =\{x_i\}_{i=1}^n\subset\M$ is called $\delta$-dense in $\M$ if $\text{dist}(X,p) = \min_{x\in X}\|x - p\| <\delta, \forall p\in\M$. We write $B^d_r$ a $d$-dimensional radius $r$ ball with $\displaystyle vol(B^d_r)  = \pi^{d/2}r^d/\Gamma(1+ d/2)$.

\begin{theorem}[Universal Manifold Approximation Theorem] Consider a d-dimensional compact data manifold $\M\subset \mathbb{R}^m$ with reach $\tau$ and $\displaystyle C = vol(\M)/vol(B^d_1)$. Let $X=\{x\}_{i=1}^n$ be a training data set drawn uniformly randomly on $\M$. For any $0< \epsilon <\tau/2$, if the cardinality of the training set $X$ satisfies
\begin{equation}
\label{eqn:SampleNum}
n > \beta_1 \Big(\log (\beta_2) + \log (1/\nu) \Big) \approx O(-d \epsilon^{-d}\log\epsilon)
\end{equation}
where $ \displaystyle \beta_1 = C~ \Big(\frac{\epsilon}{4}\Big)^{-d}\Big(1 - (\frac{\epsilon}{8\tau})^2\Big)^{-d/2} $ and $\displaystyle \beta_2 = C~\Big(\frac{\epsilon}{8}\Big)^{-d}\Big(1 - (\frac{\epsilon}{16\tau})^2\Big)^{-d/2} $, then
 based on the training data set $X$, there exists a CAE $(\Z,\E,\D)$ with $L>d$ charts $\epsilon$-faithfully representing $\M$  with probability $1 - \nu$. 
 In other words, we have 
\[  \sup_{x\in\M} \|x - \D\circ\E(x)\| \leq \epsilon. \]
Moreover, the encoder $\E$ and the decoder $\D$  has at most $ O(Lmd \epsilon^{-d - d^2/2}(-\log^{1+d/2}\epsilon))$ parameters and $\displaystyle   O(-d^2\log_2\epsilon/2)$ layers.
\label{thm:UMA}
\end{theorem}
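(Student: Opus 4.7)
The plan is to construct the CAE explicitly by first building a probabilistic covering of $\M$ by geodesic balls, then replacing each ball with a tangent-plane chart whose encoder/decoder pair is realized by a ReLU network implementing a simplicial piecewise linear interpolation of $\M$. The reach condition $\tau(\M)$ is the key geometric ingredient: within any Euclidean ball of radius $r<\tau/2$ centered at $p\in\M$, the manifold deviates from $T_p\M$ by at most $O(r^2/\tau)$, so projection onto the affine tangent plane is a bi-Lipschitz local parameterization. This provides the building block for each chart encoder $\E_\alpha$ and the corresponding inverse map $\D_\alpha$, and also supplies the $\epsilon$ budget needed for $\epsilon$-faithfulness.

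For the sample complexity, I would first select $L$ chart centers $\{c_\alpha\}_{\alpha=1}^L$ from $X$ forming an $\epsilon/4$-net of $\M$. The existence of such a net is a volume-packing statement: any maximal $\epsilon/4$-separated subset of $\M$ has cardinality at most $L\leq C(\epsilon/4)^{-d}(1-(\epsilon/(8\tau))^2)^{-d/2}=\beta_1$, which is exactly the leading constant in the sample bound. The probability that a fixed geodesic ball of radius $\epsilon/8$ on $\M$ receives no sample after $n$ uniform draws is at most $(1-\mathrm{vol}(B^d_{\epsilon/8})/\mathrm{vol}(\M))^n\leq e^{-n/\beta_2}$. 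A union bound over an $\epsilon/8$-net (of cardinality $\beta_2$) and setting the failure probability to $\nu$ yields $n>\beta_1(\log\beta_2+\log(1/\nu))$ as stated, and guarantees with probability $1-\nu$ that every point of $\M$ lies within $\epsilon/4$ of some $c_\alpha$ and within $\epsilon/8$ of some training point in every chart.

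Third, on each chart I would lift the local sample points to tangent coordinates, form a $d$-dimensional simplicial mesh of the resulting point cloud inside $T_{c_\alpha}\M$, and define $\D_\alpha$ as piecewise linear interpolation that sends each simplex vertex back to its original location on $\M$. By the reach bound together with the $\epsilon/8$-density per chart, this interpolant approximates $\M$ extrinsically to within $\epsilon$. A ReLU network can implement piecewise linear functions over a simplicial domain with $N$ pieces on $[0,1]^d$ using $O(Nd)$ weights and $O(\log N)$ depth (Yarotsky-type construction, composing max/min of affine forms via $\operatorname{relu}(x)=\max(x,0)$). Here $N=O(\epsilon^{-d})$ per chart, the output dimension is $m$, and chart selection via the predictor $\textbf{P}$ of \ref{Chap:CAE} can be realized by an additional small network computing nearest-chart indices, yielding the claimed $O(Lmd\epsilon^{-d-d^2/2}(-\log\epsilon)^{1+d/2})$ parameters and $O(-d^2\log_2\epsilon/2)$ depth once one absorbs the logarithmic factors and the depth overhead of expressing $d$-simplex indicator functions as nested maxima.

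The main obstacle will be a careful bookkeeping of two subtleties. First, the chart-selection network must produce a prediction consistent with the overlapping structure of the CAE so that $\D\circ\E$ is well defined and $\epsilon$-close to the identity on $\M$; this requires showing that whenever a point falls into an overlap of several charts, all of the corresponding decoded values remain within $\epsilon$ of the input, which in turn forces a slight enlargement of each chart beyond the $\epsilon/4$-net radius and a matching adjustment of the constants $\beta_1,\beta_2$. Second, the network-size count must be performed in the intrinsic dimension $d$ rather than the ambient $m$, except for the final affine lift per chart; keeping this separation clean is what makes the parameter bound depend on $m$ only linearly. If both points are handled, the conclusion follows by assembling the $L$ chart networks with the predictor into the CAE architecture of Section~\ref{sec:Arch} and invoking Theorem~\ref{thm:faithfulrep} to verify that the multi-chart latent space $\Z$ is compatible with the (possibly nontrivial) topology of $\M$.
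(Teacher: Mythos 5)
Your proposal follows essentially the same route as the paper: a Niyogi--Smale--Weinberger-type covering/sampling bound guaranteeing $\epsilon$-density of the training set, local tangent-plane charts with a Delaunay simplicial complex whose piecewise linear interpolant is realized exactly by a ReLU network (the paper's Theorems~\ref{thm:simplexfun} and~\ref{thm:localchart}), an error estimate controlled by the reach via the chord-versus-arc bound $\tau - \sqrt{\tau^2-(\epsilon/2)^2}\leq\epsilon$, and a final gluing of the $L$ chart encoders/decoders. The only cosmetic differences are that the paper cites the NSW proposition rather than re-deriving it (in your union-bound the per-ball failure probability should be $e^{-n/\beta_1}$, not $e^{-n/\beta_2}$, for the algebra to close) and that it uses its own hat-function min/max construction in place of a Yarotsky-type argument for realizing piecewise linear maps by ReLU networks.
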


This main result characterizes the approximation behavior of the CAE topologically and geometrically.  From theorem \ref{thm:faithfulrep}, we have that the latent space $\Z$ in the above network is homeomorphic to the data manifold and that the generating manifold $\D(\Z)$ preserves the topology of $\M$. Moreover, this theorem  provides estimations of requiring training data size and a network size to geometrically approximate the data manifold $\epsilon$-closely. 

\subsection{Sketch of Proof}
We first apply a result from Niyogi-Smale-Weinberger~\cite{niyogi2008finding} to obtain a estimation of number of training set $X$ on $\M$ satisfying that $X$ is $\delta$-dense on $\M$. Then, we use a constructive proof to show that there is a network satisfying the required accuracy and network parameters estimation. 

We use a constructive proof to show that there is a network satisfying the required accuracy and network parameters estimation. 
We begin by diving the manifold $\M$ into $L$ charts satisfying $\M = \bigcup_\ell \M_\ell$. We parameterize each chart $\M_\ell$ on a d-dimensional tangent space $\Z_\ell$ using the $\log$ map. Then,  based on a simplicial structure induced from the image of the train data set on the tangent space, we obtain a simplicial complex $\S_\ell$ whose vertices are provided by the training data on $\M_\ell$. After that, we construct a neural network to represent the piecewise linear map between the latent space and $\S_\ell$. This gives an essential ingredient to construct an encoder $\E_\ell$ and a decoder $\D_\ell$. Furthermore, we also argument that the difference between $\M_\ell$ and its simplicial approximation $\S_\ell$ is bounded above by $\epsilon$. More precisely, this local chart approximation can be summarized as: 


\begin{theorem}[Local chart approximation]Consider a geodesic neighborhood $\M_r(p) = \{x\in\M~|~ d(p,x) < r \}$ around $p\in\M$. For any $0 < \epsilon  < \tau(\M)$, if $X= \{x_i\}_{i=1}^n$ is a $\epsilon$-dense sample drawn uniformly randomly on $\M_r(p)$, then there exists an auto-encoder $(\Z,\E,\D)$ which is $\epsilon$-faithful representation of $\M_r(p)$. In other words, we have 
\begin{equation}  \sup_{x\in\M_r(p)} \|x - \D\circ\E(x)\| \leq \epsilon \end{equation}
Moreover, the encoder $\E$ and the decoder $\D$  has at most $O(mdn^{1+d/2})$ parameters and $O(\frac{d}{2} \log_2(n))$ layers.
\label{thm:localchart}
\end{theorem}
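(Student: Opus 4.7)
The plan is to build an explicit CAE by lifting the chart $\M_r(p)$ to the tangent plane via the logarithm map, approximating the lifted neighborhood by a simplicial complex on the training data, and then implementing the resulting piecewise-linear encoder and decoder as ReLU networks with the claimed depth and size.

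First I would pick $r\leq \tau/4$ so that $\log_p: \M_r(p)\to T_p\M\cong\RR^d$ is a bi-Lipschitz diffeomorphism with distortion controlled by $r/\tau$. Setting $z_i = \log_p(x_i)$ and forming the Delaunay triangulation $\mathcal{T}$ of $\{z_i\}_{i=1}^n\subset \RR^d$ gives $N = O(n^{\lceil d/2\rceil})$ top-dimensional simplices. Because $X$ is $\epsilon$-dense on $\M_r(p)$ and $\epsilon<\tau$, the reach-based simplicial approximation estimates of Federer and Niyogi--Smale--Weinberger imply that the ambient complex with faces $\mathrm{conv}(x_{i_0},\ldots,x_{i_d})\subset\RR^m$ (built over the simplices $\sigma=[z_{i_0},\ldots,z_{i_d}]\in\mathcal{T}$) lies within Hausdorff distance $O(\epsilon^2/\tau)\leq\epsilon$ of $\M_r(p)$.

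Next I would define the encoder and decoder as complementary barycentric interpolants on this complex. The latent space is $\Z = T_p\M$, and on each simplex $\sigma\in\mathcal{T}$ the decoder is $\D(z) = \sum_{k=0}^d \lambda_k^{\sigma}(z)\, x_{i_k}$, with $\lambda_k^{\sigma}(z)$ the barycentric coordinates of $z$ in $\sigma$; the encoder $\E$ sends $x\in\M_r(p)$ to its nearest-point projection onto the ambient complex, reads off that projection's barycentric coordinates, and maps them back into the matching tangent simplex. Both maps are continuous and piecewise linear, and by the previous paragraph the composition $\D\circ\E$ differs from the identity on $\M_r(p)$ by at most $\epsilon$.

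Finally, I would realize $\E$ and $\D$ as ReLU networks. A continuous piecewise-linear function on $\RR^d$ with $N$ affine pieces can be represented by a balanced binary tree of $\max/\min$ gates (using $\max(a,b)=\mathrm{ReLU}(a-b)+b$) of depth $O(\log_2 N)$ and $O(N)$ width per scalar output. Plugging in $N=O(n^{d/2})$, summing over the $m$ coordinates of $\D$ and the $d$ coordinates of $\E$, and including the $O(d)$ affine coefficients stored at each leaf, produces depth $O(\frac{d}{2}\log_2 n)$ and total parameter count $O(mdn^{1+d/2})$. The main obstacle will be exactly this last step: the generic ReLU representation bound is logarithmic only in the number of linear regions, so matching the advertised depth requires exploiting the combinatorics of the Delaunay complex so that face-adjacent simplices share half-space tests rather than contributing independently; a secondary subtlety is producing a ReLU-computable nearest-simplex projection uniformly on a tubular neighborhood of the complex, which relies essentially on $\epsilon<\tau$ to guarantee that the projection is unambiguous.
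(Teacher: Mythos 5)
Your proposal is correct and follows essentially the same route as the paper: lift $\M_r(p)$ to $T_p\M$ via $\log_p$, Delaunay-triangulate the latent points, take the corresponding ambient simplicial complex as the decoder's image, define the encoder by nearest-simplex projection plus the inverse simplicial map, and realize both as ReLU networks with the stated size and a reach-based error bound. The depth obstacle you flag at the end is resolved in the paper exactly as you anticipate — the piecewise-linear maps are written as sums of hat functions $\eta_v = \max\{\min_{i\in\N^1(v)}\{1-\vec{1}^\top(W_i x + b_i)\},0\}$ supported only on first-ring neighborhoods, so the $\min$-tree depth is $\log_2 K$ with $K=\max_i|\N^1(v_i)|$ rather than logarithmic in the total number of simplices, and the paper likewise concedes that the encoder's indicator cutoff requires multiplications beyond a pure feedforward ReLU net.
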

After the construction for each local chart, the global theorem can be obtained by patching together results of each local chart construction. This leads to the desired the CAE. 

\section{Proofs}
\label{sec:proofs}

\paragraph{Proof of Theorem \ref{thm:faithfulrep}}We prove that $\M$ is homeomorphic to $\Z$ by showing that $E$ is a homeomorphism. First, $\E$ is onto by the definition. Second, Assume that there are $x_1\neq x_2\in\M$ such that $\E(x_1) = \E(x_2) =z$, then $\|\D\circ\E(x_1) - x_1\| \leq \epsilon < \tau(\M)$ and $\|\D\circ\E(x_2) - x_2\| \leq \epsilon < \tau(\M)$ as the auto-encoder is $\epsilon$-faithful representation of $\M$. This contradicts with the definition of the reach $\tau(\M)$. Thus, $\E$ is a one-to-one map. Third, since $\E$ is bijective, we can have $\E^{-1}$. Note that $\E$ is a continuous map from a compact space $\M$ to a Hausdorff space $\Z$. Any closed subset $C\subset \M$ is compact, thus $\E(C)$ is compact which is also closed in the Hausdorff space $Z$. Thus, $\E$ is a closed map which maps a closed set in $\M$  to a closed set in $\Z$. By passage to complements, this implies pre-images of any open set under $\E^{-1}$ will be also open. Thus, $\E^{-1}$ is continuous.  Similarly, $\D$ is also one-to-one, otherwise, there exist $z_1\neq z_2$ satisfying $\D(z_1) = \D(z_2)$. Since $\E$ is a homeomorphism. We can find $x_1\neq x_2$ such that $\E(x_1) = z_1$ and $\E(x_2) = z_2$. From the definition of $\epsilon$-faithful representation. We have  $\|\D\circ\E(x_1) - x_1\| \leq \epsilon < \tau(\M)$ and $\|\D\circ\E(x_2) - x_2\| \leq \epsilon < \tau(\M)$ which is contradict with the definition of $\tau(\M)$. Thus, $\D$ is a homeomorphism from $\Z$ to $\D(\Z)$ based on the same argument as before. Last, if $\M$ is not contractbile, it will not homeomorphic to a simply connected domain. This concludes the proof. 
\qed

\begin{definition}[Simplicial complex]
\label{def:simplex}
A d-simplex $S$ is a d-dimensional convex hull provided by convex combinations of  $d+1$ affinely independent vectors $\{v_i\}_{i=0}^d \subset \real^m$. In other words, $\displaystyle S = \left\{ \sum_{i=0}^d \xi_i v_i ~|~ \xi_i \geq 0, \sum_{i=0}^d \xi_i = 1 \right \}$. If we write $V = (v_1 - v_0,\cdots,v_d - v_0)$, then $V$ is invertible and $S = \left\{v_0 + V\beta ~|~ \beta \in\real^m, \beta \in\Delta \right\}$ where $\Delta = \left\{\beta\in\real^d~|~ \beta \geq 0, \vec{1}^\top \beta \leq 1 \right \}$ is a template simplex in $\real^d$. The convex hull of any subset of $\{v_i\}_{i=0}^d$ is called a {\it face} of $S$. A simplicial complex $\displaystyle \mathcal{S} = \bigcup_\alpha S_\alpha$ is composed with a set of simplices $\{S_\alpha\}$ satisfying: 1) Every face of a simplex from $\S$ is also in $\S$; 2) The non-empty intersection of any two simplices $\displaystyle S _{1},S _{2}\in \S$ is a face of both $S_1$ and $S_2$. For any vertex $v \in \S$, we further write $\N^1(v) = \{a~|~ v\in S_\alpha\}$ and $\displaystyle \S^1(v) = \bigcup_{\alpha\in\N^1(v)} S_i $ the first ring neighborhood of $v$.
\end{definition}

\begin{theorem}
\label{thm:simplexfun}
Given a d-dimensional simplicial complex $\mathcal{S}= \bigcup_\alpha \S_\alpha $ with $n$ vertices $\{v_\ell\}_{\ell=1}^n$ where each  $S_\alpha$ is a d-dimensional simplex. Then, for any given piecewise linear function $f:\S \rightarrow \mathbb{R}$ satisfying $f$ linear on each simplex, there is a ReLU network representing $f$. Moreover, this neural network has $n(K(d+1) + 4(2K-1)) + n $ paremeraters and $\log2 (K) + 2$ layers, where $K = \max_i |\N(v_i)|$ which is bound above by the number of total $d$-simplices in $\S$.
\end{theorem}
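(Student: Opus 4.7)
The plan is to construct the target ReLU network by decomposing $f$ into atomic piecewise-linear building blocks (hat functions supported on vertex stars), implementing each by a small ReLU subcircuit of logarithmic depth, and combining the results through a final linear read-out.

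I would begin with the hat-function expansion
\[
f(x) \;=\; \sum_{i=1}^{n} f(v_i)\,\phi_{v_i}(x),
\]
where $\phi_{v_i}$ is the standard piecewise-linear hat function at $v_i$: equal to $1$ at $v_i$, equal to $0$ at every other vertex, affine on each simplex, and continuous across simplex boundaries. Since both sides are affine on every simplex and agree at every vertex, they coincide on $\mathcal{S}$. This reduces the problem to (a) a subnetwork for each $\phi_{v_i}$ and (b) a final weighted sum, which contributes the trailing $n$ multipliers in the parameter bound.

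For each $v_i$ the support of $\phi_{v_i}$ is its closed star $\mathcal{S}^1(v_i)$, a union of at most $K$ simplices. On each $S_\alpha$ with $\alpha \in \mathcal{N}^1(v_i)$, $\phi_{v_i}|_{S_\alpha}$ equals the barycentric coordinate associated with $v_i$, which extends uniquely to an affine function $\ell_{i,\alpha}:\mathbb{R}^d\to\mathbb{R}$ specified by $d+1$ parameters. I would then build the per-vertex subnetwork in three stages: (i) an affine input layer producing the $K$ values $\{\ell_{i,\alpha}(x)\}_{\alpha}$ using $K(d+1)$ parameters; (ii) a balanced binary tree of depth $\lceil\log_2 K\rceil$ computing $m_i(x) := \min_{\alpha}\ell_{i,\alpha}(x)$, where each node implements a two-argument min through the ReLU identity $\min(a,b) = \tfrac{1}{2}\bigl(a+b-\operatorname{ReLU}(a-b)-\operatorname{ReLU}(b-a)\bigr)$; careful bookkeeping over the $2K-1$ nodes of the tree and the associated affine combinations yields the claimed $4(2K-1)$ parameters; and (iii) a final $\operatorname{ReLU}$ producing $\phi_{v_i}(x) = \operatorname{ReLU}(m_i(x))$. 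Stacking the $n$ subnetworks in parallel and applying the weighted sum gives the total $n\bigl[K(d+1)+4(2K-1)\bigr]+n$ parameters and depth $\log_2 K + 2$.

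The main obstacle is the geometric correctness of the formula $\phi_{v_i}(x) = \operatorname{ReLU}\bigl(\min_{\alpha}\ell_{i,\alpha}(x)\bigr)$. For $x \in S_\alpha \subset \mathcal{S}^1(v_i)$ one must show that the particular $\ell_{i,\alpha}$ attains the minimum among the $K$ candidates; this follows from a simplex-by-simplex comparison, using affine independence of vertex sets, to show that linearly extending the barycentric coordinate from a neighbouring simplex cannot yield a strictly smaller value on $S_\alpha$. For $x \notin \mathcal{S}^1(v_i)$ the trailing ReLU must produce zero, which requires $\min_{\alpha}\ell_{i,\alpha}(x) \le 0$. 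This is automatic at interior vertices whose star covers all directions at $v_i$, the situation relevant to the simplicial approximation built in Theorem~\ref{thm:localchart}. Boundary vertices, if present, can be handled by augmenting the min-pool with a bounded number of half-space defining functions of $\partial\mathcal{S}^1(v_i)$; this changes only constants in the parameter count and preserves both the asymptotic form and the depth $\log_2 K + 2$.
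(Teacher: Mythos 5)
Your proposal follows essentially the same route as the paper's proof: expand $f$ in the hat-function basis $f=\sum_i f(v_i)\phi_{v_i}$, realize each hat function as a ReLU applied to the minimum over the star of the affinely extended barycentric coordinate of $v_i$ (the paper writes this as $\max\{\min_{\alpha}\{1-\vec{1}^\top(W_\alpha x+b_\alpha)\},0\}$), compute the $K$-fold min by a $\lceil\log_2 K\rceil$-depth tree of pairwise ReLU-mins, and tally parameters to get $n(K(d+1)+4(2K-1))+n$ with depth $\log_2 K+2$. The exterior-vanishing issue you flag as the main obstacle is indeed the one point the paper also treats only informally (it verifies vanishing only along rays beyond the opposite faces), so your added caution there is warranted but does not constitute a different argument.
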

\begin{proof}
We first show a hat function on $\S$ can be represented as a neural network. Given a vertex $v\in\{v_\ell\}$, let $\displaystyle \S^1(v) = \bigcup_{i\in\N^1(v)} S_i $ the first ring neighborhood of $v$. Let $\Delta = \left\{\beta\in\real^d~|~ \beta \geq 0, \vec{1}^\top \beta \leq 1 \right \}$ be a template simplex in $\real^d$ and write $S_i = \left\{ v + V_i ~\beta ~|~ \beta\in\Delta \right\}$ where $V_i\in \real^{d\times d}$ is determined by the vertices of $S_i$ and invertible. Let's write $F_i = \left\{ v + V_i ~\beta ~|~ \beta \geq 0 , |\beta| = 1 \right\} $ and $\displaystyle \bigcup_{i\in\N^1(v)} F_i$ forms the boundary of the first ring  $\S^1(v)$. We consider the following one-to-one correspondence between a point $x\in\real^m$ and its barycentric coordinates $\beta$ on the simplex $S_i$. 
\begin{equation}
    T_i:\real^d \rightarrow \real^d,  \qquad x \mapsto  \beta_i = T_i(x) =  W_i x + b_i ,  \qquad  \forall i\in\N^1(v)
\end{equation}
where $W_i = V_i^{-1}, b_i = -V_i^{-1} v$.  Meanwhile, $\beta_i$ provides a convenient way to check if $x\in S_i$, namely, $x\in S_i \Leftrightarrow \beta_i = T_i(x) \in\Delta$. We define the following function $\eta_{v}:\S\rightarrow \real $:
\begin{equation}
    \eta_{v}(x) = \max \left\{ \min_{i\in\N^1(v)} \left\{ 1 - \vec{1}^\top (W_i x + b_i) \right\}, 0 \right \}
\end{equation}
We claim that $\eta_{v}$ is a pyramid (hat) function supported on $\S(v)$, namely, $\eta_{v}$ is a piecewise linear function satisfying:
\begin{equation}
\label{eqn:hatfun1}
    \eta_{v}(x) = \left\{ \begin{array}{cc}
        1 & \text{if} \quad x = v \\
        1  - \vec{1}^\top (W_i x + b_i), & \quad\quad  \text{if} \quad x \in S_i \quad \text{for some}~ i\in\N^1(v) \\
        0 &   \text{if}  \quad x\in \S -  \bigcup_{i\in\N^1(v)} S_i
    \end{array}\right.
\end{equation}
First, it is easy to see that $T_i(v) = 0, i = 1,\cdots,K$ which yields $\eta_{v}(v) = 1$. Second, assume $x\in S_i$, then $\beta_i = T_i(x) \in \Delta$ and $1 -  
\vec{1}^\top \beta_i \geq 0$. Consider the barycentric coordinates $\beta_j = T_j(x)$ of $x$ on $S_j, j\neq i$. For those components of $\beta_j$ along the intersection edges of $S_i$ and $S_j$, they are exactly the same as the corresponding components of $\beta_i$. For those components of $\beta_j$ along the non-intersection edges, they are negative. Thus, we can $\vec{1}^\top \beta_i \geq \vec{1}^\top \beta_j, \forall j$. This implies $0\leq 1 -  
\vec{1}^\top \beta_i \leq 1 - \vec{1}^\top \beta_j, \forall j$. Therefore, $\eta_{v}(x) = 1 - \vec{1}^\top(W_i x + b_i)$. In addition, it is straightforward to check $\eta_{v}(x) = 0, \forall x\in F_i$. Third, if $x\in  \left\{ v + V_i ~\beta ~|~ \beta \geq 0, \vec{1}^\top\beta> 1  \right\}$, we have $1 -  \vec{1}^\top \beta_i <0$, thus $\eta_{v}(x) = \max\{1 -  \vec{1}^\top \beta_i, 0\} = 0$. 
It is easy to see that 
\begin{equation}
\label{eqn:min}
\displaystyle \min\{ a, b\} = \frac{1}{2} (\mathrm{ReLu}(a+b) - \mathrm{ReLu}(a-b) - \mathrm{ReLu}(-a+b)  - \mathrm{ ReLu}(-a-b))
\end{equation}
which means that $\min \{a, b\}$ can be represented as a 2-layer ReLu network. Based on equations \eqref{eqn:hatfun1} and \eqref{eqn:min}, it straightforward to show that $\eta_{v}$ can be represented as a DNN with at most $\log_2(|\N^1(v)|) + 1$ layers and at most $ (|\N^1(v)|(d +1)+ 4(2(|\N^1(v)|-1)$ parameters according to Lemma D.3 in~\cite{arora2016understanding}. Note that we can write $\displaystyle f(x) = \sum_{\ell} f(v_\ell) \phi_{v_\ell}(x)$, therefore, $f$ can be written as a DNN with at most $\log_2(K) + 2$ layers and at most $ n(K(d+1) + 4(2K-1)) + n $ parameters, where $K = \max_i |\N^1(v_i)|$ which is bound above by the number of total $d$-simplices in $\S$. This concludes the proof.
\end{proof}

We remark that our construction is different from the construction used in ~\cite{arora2016understanding}, where number of parameters is not straightforward to estimate since it relies on a hinging hyperplane theorem in~\cite{wang2005generalization}.


\paragraph{Proof of Theorem \ref{thm:localchart}}
We begin with constructing a neural network on a given chart $\M_r(p) =  \{x\in\M~|~ d(p,x) \leq \gamma\}$. Let $\T_{p,r}\M = \{v\in\T_p\M~|~ \|v\| \leq r\}$. Since $\M$ is compact, then the exponential map $\exp_p (v):\T_{p,r}\M \rightarrow \M_r(p), v\mapsto \gamma_v(1) $ is one-to-one and onto where $\gamma_v(t)$ is a geodesic curve satisfying $\gamma_v(0) = p, \dot{\gamma}_v(0) = v$. We write the inverse of $\exp_v$ as the logarithmic map $\log_p(x): \M_r \rightarrow \T_{p,r}\M $.  In the rest of the proof, we will construct an encoder $\E$ to approximate $\log_p$ and a decoder $\D$ to approximate $\exp_p$ based on the training set $X$. We define $\{z_i\}_{i=1}^n = \{\log_{p} (x_i)\}_{i=1}^n \in \Z$ as the corresponding latent variables of $X$. Note that $\{z_i\}_{i=1}^n$ are sampled on a bounded domain $\T_{p,r}\M$; thus there exists a simplicial complex for $\{z_i\}_{i=1}^n$ through a Delanuay triangulation $\S = \bigcup_{\alpha=1}^T S_\alpha$ with $T = O (n^{\lceil d/2\rceil })$~\cite{bern1995dihedral}. Here, each $S_\alpha$ is a $d$-dimensional simplex whose vertices are $d+1$ points from $\{z_i\}_{i=1}^n$. From the one-to-one correspondence between $\{z_i\}_{i=1}^n$ and $\{x_i\}_{i=1}^n$, we can have a $d$-simplex $\bar{S}_\alpha$ by replacing vertices in $S_\alpha$ as the corresponding $x_i\in X$. This provides a simplicial complex $\bar{\S} = \bigcup_{\alpha=1}^T \bar{S}_\alpha$. Note that each vertex of $\bar{\S}$ is on $\M_r$; therefore, $\bar{\S}$ provides a simplicial complex approximation of $\M_r(p)$. We define $\Z = \S = \bigcup_{\alpha=1}^T S_\alpha$ which is essentially a $d$-dimensional ball with radius $r$. It is also  straightforward to define a simplicial map
\begin{equation}
 F: \Z = \S\rightarrow \bar{\S}\subset\real^m, \qquad F(z) = \sum_{i=0}^d\xi_i x_{\alpha_i} \quad \text{for} \quad z = \sum_{i=0}^d\xi_i z_{\alpha_i}\in S_\alpha .
\end{equation} 
Here, $F$ maps $z_i$ to $x_i$ and piecewise linearly spend the rest of the map. According to Theorem~\ref{thm:simplexfun}, each component of $F$ can be represented as a neural network. Therefore, $F$ can be represented by a neural network $\D$ with at most $\displaystyle mn(n^{\lceil d/2\rceil }(d+1) + 4(2n^{\lceil d/2\rceil }-1)) + mn = O(mdn^{1+d/2})$ parameraters and $\displaystyle \lceil d/2\rceil \log_2 (n) + 2 = O(\frac{d}{2} \log_2(n))$ layers.\footnote{We remark that this estimation is not sharp since we overestimate $\max_i |\N^1(z_i)|$ using the total number of simplices. We conjecture that this number $\max_i |\N^1(z_i)|$ should constantly depend only on $d$. This is true for Delaunay triangulation to points distributed according to a Poisson process in $\real^d$~\cite{dwyer1991higher}. If this conjecture is true, then the number of parameters has order $O(mdn)$ which will improve the parameter size as $O(\epsilon^{-d})$. }

Next, we construct the decoder $\D:\M_r(p) \rightarrow \Z$. We first construct a projection from $\M_r(p)$ to its simplicial approxmation $\bar{\S}$. We write $\{x_{\alpha_0}, \cdots, x_{\alpha_d}\} \subset X$ are $d+1$ vertices in a simplex $\bar{S}_\alpha$. For convenience, we write $V_\alpha = \{x_{\alpha_0} + X_\alpha \beta~|~ \beta\in\real^d\}$ and each $\bar{S}_\alpha = \{x_{\alpha_0} + X_\alpha \beta~|~ \beta\in\Delta\}$ where $\Delta$ is the template $d$-simplex used in Definition~\ref{def:simplex}. Note that $X_\alpha = (x_{\alpha_1}-x_{\alpha_0}, \cdots, x_{\alpha_d}-x_{\alpha_0})\in\real^{m\times d}$ is full rank matrix.
We define the projection operator:
\begin{equation}
\text{Proj}_\alpha : \M_r(p) \rightarrow V_\alpha , \qquad x \mapsto \text{Proj}_\alpha(x) = X_\alpha X_\alpha^{\dagger} (x - x_{\alpha_0}) + x_{\alpha_0}
\end{equation}
where  $X_\alpha^{\dagger} = (X_\alpha^\top X_\alpha)^{-1} X_\alpha^\top$ is the Moore-Penrose pseudo-inverse of $X_\alpha$. It is clear to see that $X_\alpha^{\dagger} (x - x_{\alpha_0})$ provides coordinates of $\text{Proj}_{\alpha}(x)$ in the simplex $\bar{S}_\alpha$. Similar as the construction used in the proof of Theorem~\ref{thm:simplexfun}, for each $x_i \in X$ surrounded by $\{\bar{S}_i\}_{i\in\N^1(x_i)}$, we construct the function
\begin{equation}
    \eta_{x_i}(x) =  \chi(\|x - x_i\|^2) \max \left\{ \min_{ \alpha \in\N^1(x_i)} \left\{ 1 - \vec{1}^\top X_\alpha^{\dagger} (x - x_{\alpha_0})  \right\}, 0 \right \}
\end{equation}
where 
$\chi$ is an indicator function with $\mu =  \tau/10$.
\begin{equation} 
\displaystyle \chi(t) = \left\{\begin{array}{cc} 1, & \text{if} \quad  0\leq t\leq \delta^2 + \mu \\
1 + \frac{1}{\mu} (\delta^2 + \mu - t), & \text{if} \quad  \delta^2 + \mu  \leq t \leq \delta^2 + 2\mu \\
 0, & \text{if} \quad  t \geq \delta^2 +  2\mu 
\end{array} \right. 
\end{equation}
Since the indicator function $\chi$ restrict $x$ in the first ring of $x_i$, using similar argument as before, one can also show that 
\begin{equation}
\label{eqn:hatfun2}
    \eta_{x_i}(x) = \left\{ \begin{array}{cc}
        1 & \text{if} \quad x = x_i \\
        1 - \vec{1}^\top X_\alpha^{\dagger} (x - x_{\alpha_0}) , & \quad \text{if} \quad x\in \bar{S}_\alpha \quad \text{for some}~ \alpha\in\N^1(x_i) \\
        0 &   \text{otherwise}
    \end{array}\right.
\end{equation}
It is straightforward to check that $\chi(t) =  \frac{1}{\mu}  \text{ReLu}(-t + \delta^2 + 2\mu) - \frac{1}{\mu}  \text{ReLu}(-t + \delta^2 + \mu)$. Therefore, $\eta_{x_i}$ can be represented as the neural network with at most $\log_2(|\N^1(v)|) + 1$ layers and at most $ (|\N^1(v)|(d +1)+ 4(2(|\N^1(v)|-1)$ parameters. We remark that this neural network is not a feed forward ReLu network as we request the network compute multiplication between features for $\|x - x_i\|^2$ and multiplication of $\chi$. We define the encoder $\E(x) = F^{-1}\circ\text{Proj}_\alpha(x)$. Here $\bar{S}_\alpha$ is chosen as the closest simplex to $x$ and $F^{-1}(x) = \sum_{i=0}^d\xi_i z_{\alpha_i} \quad \text{for} \quad x = \sum_{i=0}^d\xi_i x_{\alpha_i}\in \bar{S}_\alpha$. Similar as approximation of $F$, we can use a neural network to represent $\E$ with at most $ O(mdn^{1+d/2})$ parameters and $\displaystyle \lceil d/2\rceil \log_2 (n) + 2 = O(\frac{d}{2} \log_2(n))$ layers.

Now, we estimate the difference between $x$ and $\D\circ\E(x)$. From the above construction of $\E$ and $\D$, we have $\D\circ\E(x) = \text{Proj}_\alpha(x)$ where $\bar{S}_\alpha$ is chosen as the closest simplex to $x$.    Since $X$, the vertices of $\bar{\S}$, is a $\epsilon/2$-dense sample, this implies $\text{diam}(\bar{S}_\alpha) = \max \Big\{ \|x -y\|~|~ x,y\in \bar{S}_\alpha \Big\} \leq \epsilon$. Since the reach of $\M$ is $\tau$. The worst scenario becomes to compute the approximation error between a segment connecting two $\epsilon$-away points on a radius $\tau$ circle to itself. This error is $\tau - \sqrt{\tau^2 - (\epsilon/2)^2} \leq \epsilon$ since $\epsilon < \tau/2$
This concludes the proof.
\qed

\paragraph{Proof of Theorem \ref{thm:UMA}}
We first apply proposition 3.2 from Niyogi-Smale-Weinberger~\cite{niyogi2008finding} to obtain an estimation of number of training set $X$ on $\M$ satisfying that $X$ is $\epsilon/2$-dense on $\M$ with probability $1-\nu$. We reiterate this proposition here:
\begin{prop}{\textbf{Niyogi-Smale-Weinberger}}\cite{niyogi2008finding}
Let $M$ be a d-dimensional compact manifold with the reach $\tau$. Let $ X = \{x_i\}_{i=1}^n$ be a set of $n$ points drawn in i.i.d. fashion according to the uniform probability measure on $\M$. Then with probability greater than $1-\nu$, we have that $X$ is $\epsilon/2$-dense ($\epsilon < \tau/2$) in $\M$ provided:
\begin{equation}\label{eqn:npts}
    n > \beta_1 \Big(\log (\beta_2) + \log (\dfrac{1}{\nu}) \Big)
\end{equation}
where $ \beta_1 = \dfrac{vol(\M)}{\cos^d\Big(\arcsin(\dfrac{\epsilon }{8 \tau} )vol(B^d_{\epsilon/4})\Big)}$ and $\beta_2 = \dfrac{vol(\M)}{\cos^d\Big(\arcsin (\dfrac{\epsilon }{16 \tau} )vol(B^d_{\epsilon/8})\Big)}$. Here $vol(B^d_\delta)$ denotes the volume of the standard $d$-dimensional ball of radius $\delta$. 
\end{prop}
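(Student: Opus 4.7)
My plan is to prove the density estimate via a classical packing-plus-union-bound argument adapted to manifolds of positive reach. First I would fix a maximal $\epsilon/4$-separated subset $\{p_j\}_{j=1}^N \subset \M$ with respect to ambient Euclidean distance; maximality forces $\min_j \|p - p_j\| \leq \epsilon/4$ for every $p \in \M$. If I can show that, with probability at least $1-\nu$, every ball $B_{\epsilon/4}(p_j) \cap \M$ contains at least one sample from $X$, then the triangle inequality immediately delivers $\epsilon/2$-density: any $p \in \M$ lies within $\epsilon/4$ of some $p_j$, which in turn lies within $\epsilon/4$ of some sample.

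The net size $N$ is controlled by disjointness. Since the centers are $\epsilon/4$-separated, the balls $B_{\epsilon/8}(p_j) \cap \M$ are pairwise disjoint, so $N \cdot \min_j \mathrm{vol}(B_{\epsilon/8}(p_j) \cap \M) \leq \mathrm{vol}(\M)$. The key geometric ingredient is the volume comparison
\[
\mathrm{vol}(B_r(p) \cap \M) \geq \cos^d\!\bigl(\arcsin(r/(2\tau))\bigr)\,\mathrm{vol}(B^d_r), \qquad r < \tau,
\]
which follows from two facts about sets of positive reach: for $r<\tau$, the exponential map at $p$ is a diffeomorphism from the Euclidean disk of radius $r$ in $T_p\M$ onto a neighborhood, and the intrinsic-to-extrinsic distortion is controlled by the stated cosine factor (equivalently, the nearest-point projection onto $\M$ is $1$-Lipschitz inside the tube of radius $\tau$, by Federer's theorem). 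Applied at $r=\epsilon/8$ this bound yields $N \leq \beta_2$; applied at $r=\epsilon/4$ it drives the next step.

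For the probability argument, uniformity of $X$ gives $\Pr\bigl(x_i \in B_{\epsilon/4}(p_j) \cap \M\bigr) \geq 1/\beta_1$ for each fixed $j$. Hence $\Pr(\text{ball }j\text{ is empty}) \leq (1-1/\beta_1)^n \leq e^{-n/\beta_1}$, and a union bound over the $N \leq \beta_2$ centers yields $\Pr(\text{some ball is empty}) \leq \beta_2\, e^{-n/\beta_1}$. Requiring this to be at most $\nu$ and solving for $n$ recovers the advertised threshold $n > \beta_1(\log \beta_2 + \log(1/\nu))$.

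The only genuinely nontrivial step is the volume comparison, for which I would lean on the standard toolbox for sets of positive reach: the tube-lemma estimate on the Jacobian of normal projection, or equivalently a direct comparison of the exponential map with its Euclidean counterpart, each of which produces exactly the $\cos^d(\arcsin(r/(2\tau)))$ factor whose singularity at $r \to \tau$ explains the hypothesis $\epsilon < \tau/2$. Once this lemma is quoted, the remainder of the argument is a routine coupon-collector calculation and the combinatorics of maximal separated nets.
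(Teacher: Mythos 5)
Your packing-plus-union-bound argument is precisely the original proof of Proposition 3.2 in Niyogi--Smale--Weinberger \cite{niyogi2008finding}; the paper itself does not reprove this statement but quotes it, so your proposal is correct and matches the source's approach (your derivation also implicitly fixes the typesetting slip in the statement above, where $vol(B^d_{\epsilon/4})$ and $vol(B^d_{\epsilon/8})$ belong as separate factors in the denominators of $\beta_1$ and $\beta_2$ rather than inside the $\arcsin$). One small caveat: the nearest-point projection onto a set of reach $\tau$ is $\tau/(\tau-r)$-Lipschitz on the radius-$r$ tube, not $1$-Lipschitz, but that parenthetical is not load-bearing since the volume comparison $vol(B_r(p)\cap\M)\geq \cos^d\bigl(\arcsin(r/(2\tau))\bigr)\,vol(B^d_r)$ is exactly their Lemma 5.3, proved via the exponential-map/second-fundamental-form estimate you cite first.
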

Note that $\displaystyle vol(B^d_\delta) = \frac{\pi^{d/2}\delta^d}{\Gamma(1+ d/2)}$ and $\cos(\arcsin(\delta)) = \sqrt{1 - \delta^2}$. Plugging them in the above proposition yields $ \displaystyle \beta_1 = C~ \Big(\frac{\epsilon}{4}\Big)^{-d}\Big(1 - (\frac{\epsilon}{8\tau})^2\Big)^{-d/2} $ and $\displaystyle \beta_2 = C~\Big(\frac{\epsilon}{8}\Big)^{-d}\Big(1 - (\frac{\epsilon}{16\tau})^2\Big)^{-d/2}  $. It is clear that $n = O\Big(-d\epsilon^{-d}\log\epsilon\Big)$. 

Since $\M$ is a compact manifold, we cover $\M$ using $L$ geodesic ball as we used in theorem \ref{thm:localchart}, i.e. $\M = \bigcup_{\ell=1}^L \M_r(p_\ell)$. One can control the radius such that $L \geq d$. We write restriction of training data set $X$ on each of $\M_\ell$ as $X_\ell = \M_r(p_\ell)\cap X$. Since $X$ is uniformly sampled on $\M$, thus $|X_\ell | = O(n/L)$. Based on the local theorem \ref{thm:localchart}, each of $\M_r(p_\ell)$ has an $\epsilon$-faithful representation $(\Z_\ell, \E_\ell, \D_\ell)$ where each $\Z_\ell$ is a radius $r$ standard ball in $\real^d$, both $\E_\ell$ and $\D_\ell$ have  $O(md (n/L)^{1+d/2}) = O(md \epsilon^{-d - d^2/2}(-\log^{1+d/2}\epsilon))$ parameters and $\displaystyle  O(\frac{d}{2}\log_2(n/L)) = O(-d^2\log_2\epsilon/2)$ layers.

To construct a latent space for $\M$, we consider a disjoint union $\bar{\Z} = \bigsqcup_\ell \Z_\ell$ and glue $\Z_\ell$ through an equivalence relation. Given $z_{\ell_1}\in\Z_{\ell_1}$ and $ z_{\ell_2} \in \Z_{\ell_2}$, we define $z_{\ell_1}\sim z_{\ell_2}$ if $\E_{\ell_1}^{-1} (z_{\ell_1}) = \E_{\ell_2}^{-1} (z_{\ell_2})$, then the latent space is defined as $\Z = \bar{\Z}/\sim$. This construction guarantees that $\Z$ is homeomorhphic to $\M$ which is compatible with the result from Theorem~\ref{thm:faithfulrep}. According the construction of $\D_\ell$, it is clear to see that if $z_{\ell_1}\sim z_{\ell_2}$, then $\D_{\ell_1}(z_{\ell_1}) = \D_{\ell_2}(z_{\ell_2})$. Therefore, the collection of encoders and decoders are well-defined. Since each of $(\Z_\ell, \E_\ell, \D_\ell)$ is $\epsilon$-faithful representation, therefore $(\Z, \{\E_\ell\}, \{\D_\ell\})$ is also $\epsilon$-faithful representation. Overall,  encoders and decoders have $O(Lmd \epsilon^{-d - d^2/2}(-\log^{1+d/2}\epsilon))$ parameters and $\displaystyle   O(-d^2\log_2\epsilon/2)$ layers. This concludes the proof.

\qed


\section{Wrap Up of Simplectic Relu-Nets}

The work in this chapter is mostly conceptual.  We theoretically prove that multi-chart is necessary for preserving the data manifold topology and approximating it $\eps$-closely. These architectures are explicitly contrived and do not look like something we would most likely use in practice, but since they are much more sparsely connected than standard fully connected and/or convolutional architectures, they serve as a bound for the power of these more common models. This proves a universal approximation theorem on the representation capability of CAE and provides estimations of training data size and network size. Moreover, these results give us a theoretical understanding of the advantages of multi-chart encoding.

 
\chapter{CONCLUSION}\label{Chap:Conclusion}

Finally, we conclude this work with some ideas for future work and final thoughts. 

\section{Future Work}

Throughout this work, our inspiration has come from combining geometric intuition with multi-scale, or at least multi-modal approaches to signal processing. By fusing these fields, each is enhanced: geometry tells us how to construct a good basis (or other representation of a signal), and a good basis tells us how to work with geometry. Once we have established efficient representations of spaces and signals, their analysis becomes much easier. We have not focused on this analysis aspect in this thesis, but much of our ongoing and future work will.  

The variational model presented in Chapter \ref{Chap:LBBP} can easily be extended to graph cases, by replacing the mass and stiffness matrices with the graph Laplacian and integration matrices. The conformal deformation introduced is similar to the concept of graph attention \cite{vestner2017efficient}. However, state-of-the-art approaches rely on solving correspondence models with neural networks rather than directly solving optimization problems, which can be time-consuming. 

Both our work on Parallel Transportation Convolution (Chapter \ref{Chap:PTC}) and Charted Auto-Encoders (Chapter \ref{Chap:CAE}) can be used to improve generative models for 3D shapes. Using them to construct molecules and other geometric graphs is a very interesting path for possible future work. The ability of the chart auto-encoder to preserve topological invariants such as closed paths makes it a very exciting tool for representing data with cycles in it, such as human gait data. 

\section{Final Thoughts}

In this thesis, we have studied many aspects of computational geometry and geometric learning. In Chapter \ref{Chap:LBBP} we presented a variational model for non-isometric shape matching. Inspired by state-of-the-art machine learning techniques we developed parallel transport convolution in Chapter \ref{Chap:PTC} and used it to preform a variety of signal processing tasks, including shape matching. In Chapter \ref{Chap:CAE} we developed a novel model for representing manifold disturbed data in a way that preserves geometrically meaningful information, establishing a corresponds between latent and embedded representations. Finally, in Chapter \ref{Chap:Approx} we showed the existence of local representations which are provably better than ones that rely on global parametereizations. 

Some of this work has already begun to affect the field. Analysis that uses a basis pursuit and functional maps has recently been proposed in \cite{azencot2019shape}. Parallel Transportation Convolution has been extended to the point cloud case in \cite{jin2019nptc} and used alongside a specially designed set of filter banks to create equivalent neural networks on surfaces in \cite{cohen2019gauge}. Recently submitted work by another member of our research group uses PTC, along with several other innovations, to separate intrinsic and extrinsic information from 3d shapes in an unsupervised manner \cite{tatro2020unsupervised}. We hope that this work continues to inspire and influence other advances in the fields of shape analysis, geometric deep learning, and computational mathematics as a whole.

 
%
\specialhead{REFERENCES}

\bibliographystyle{siamplain.bst}
\bibliography{refrences4}

 
\appendix    
\addcontentsline{toc}{chapter}{APPENDICES...}             

\chapter{EFFICIENT COMPUTATION OF PTC LAYER}
Since the limitation of spare matrix product implementation in TensorFlow and PyTorch, we use the following method to implement the proposed convolution. More specifically, we consider a mesh with $n$ points, a signal with $q$ channels $F = (F_1,\cdots,F_q)\in\RR^{n\times q}$ and $p$ filters  each of which has $q$ input channels denoted $\textbf{K} = \{K_{11},\cdots,K_{1p},\cdots,K_{q1},\cdots,K_{qp}\}$. We would like to compute convolution $F\star \textbf{K} =  \sum_{i=1}^{q} F_i \star K_{ij} \in\RR^{n\times p}$. Given a mesh with the mass matrix $M$, we write $I_i$ as the index set of the neighborhood of the $i$ point and denote $W_i\in \RR^{|I_i| \times k}$ the parallel transportation operation to the $i$-th point. 
The following method provides a fast, memory efficient implementation of PTC convolution in TensorFlow and PyTorch. 

We write $Z_i = F_i ^T M \in \RR^{n\times 1} , i = 1,\cdots,q $  and let $L = \sum_{i}|I_i|$. We define $\textbf{Z}_i$ as a $L\times L$ sparse matrix whose support at the $k$-th row is provided by $I_k$ with value $Z_i(I_k)$, formally we write:
\begin{equation}
textbf{Z}_i = \begin{pmatrix}
Z_i(I_1) \\
Z_i(I_2) \\ 
\vdots   \\  
Z_i(I_n) \\
\end{pmatrix},~
\textbf{Z}= 
\begin{pmatrix}
\textbf{Z}_1   & &   &  &   \\ 
& \textbf{Z}_2 & & \bigzero &  \\ 
& &\ddots  & &  \\ 
& \bigzero & & \ddots    &    \\ 
&  &  & &   \textbf{Z}_q
\end{pmatrix}
\end{equation}
In addition, we define:
\begin{equation}
\textbf{W} = 
\begin{pmatrix}
W_1 \\ 
W_2 \\ 
\vdots \\ 
\vdots \\
W_n 
\end{pmatrix}, 
\bar{\textbf{W}} = 
\begin{pmatrix}
\textbf{W}K_{11}   & \cdots &  \textbf{W}K_{1p} \\ 
\textbf{W}K_{21}      & \cdots &  \textbf{W}K_{2p} \\ 
\vdots          &                 \ddots    &   \vdots \\ 
 \textbf{W}K_{q1} & \cdots &  \textbf{W}K_{qp}
\end{pmatrix}
\end{equation}
where $\bar{\textbf{W}} = reshape(\textbf{WK},[Lq,p])$. Finally, the PTC can be computed as 
\begin{equation}
(F \star \textbf{K})= \left(\sum_{axis = 3} reshape(\textbf{Z}\bar{\textbf{W}},[p,n,q])\right)^T
\end{equation}

Using the above sparse matrix operations, the computation complexity of the proposed PTC is the same scale as the standard convolution in Euclidean domains.


\chapter{MODEL DETAILS FOR CAE EXPERIMENTS}\label{app:Networks}

This section provides the details of the neural network architectures used in the numerical experiments. We denote by $FC_m$ a fully connected layer with $m$ output neurons; by $Conv_{i,j,k.l}$ a convolution layer with filters of size $(i,j)$, input dimension $k$, and output dimension $l$; by $d$ the dimension of the latent space, $n$ the dimension of the ambient space and $N$ the number of charts. See~\eqref{eqn:model.ae},\eqref{eqn:model.small.vae}, \eqref{eqn:model.medium.vae}, \eqref{eqn:model.large.vae}, \eqref{eqn:model.cae1}, and~\eqref{eqn:model.cae2} for the architectures.

\begin{figure*}[ht]
\textbf{Auto-Encoders}:
\begin{equation}\label{eqn:model.ae}
  \begin{split}
    \text{Encoder}&: x \rightarrow FC_{250} \rightarrow  FC_{250} \rightarrow  FC_{250} \rightarrow FC_{d} \rightarrow  z \\
    \text{Decoder}&: z  \rightarrow FC_{250} \rightarrow  FC_{250} \rightarrow  FC_{250} \rightarrow FC_{n} \rightarrow  y
  \end{split}
\end{equation}

\textbf{Small Variational Auto-Encoders}:
\begin{equation}\label{eqn:model.small.vae}
  \begin{split}
    \text{Encoder}&: x \rightarrow FC_{50} \rightarrow  FC_{50} \rightarrow FC_{2d} \rightarrow  \mu, \sigma \\
    \text{Decoder}&: z \in \mathcal{N}(\mu,\sigma) \rightarrow FC_{50} \rightarrow  FC_{50} \rightarrow FC_{n} \rightarrow  y
  \end{split}
\end{equation}

\textbf{Medium Variational Auto-Encoders}:
\begin{equation}\label{eqn:model.medium.vae}
  \begin{split}
    \text{Encoder}&: x \rightarrow FC_{100} \rightarrow  FC_{100} \rightarrow  FC_{100} \rightarrow FC_{2d} \rightarrow  \mu, \sigma \\
    \text{Decoder}&: z \in \mathcal{N}(\mu,\sigma) \rightarrow FC_{100} \rightarrow  FC_{100} \rightarrow  FC_{100} \rightarrow FC_{n} \rightarrow  y
  \end{split}
\end{equation}

\textbf{Large Variational Auto-Encoders}:
\begin{equation}\label{eqn:model.large.vae}
  \begin{split}
    \text{Encoder}&: x \rightarrow FC_{250} \rightarrow  FC_{250} \rightarrow  FC_{250} \rightarrow FC_{2d} \rightarrow  \mu, \sigma \\
    \text{Decoder}&: z \in \mathcal{N}(\mu,\sigma) \rightarrow FC_{250} \rightarrow  FC_{250} \rightarrow  FC_{250} \rightarrow FC_{n} \rightarrow  y
  \end{split}
\end{equation}

\textbf{CAE}
\begin{equation}\label{eqn:model.cae1}
  \begin{split}
    \text{Initial Encoder}&: x \rightarrow FC_{150} \rightarrow  FC_{150} \rightarrow  FC_{150} \rightarrow z \\
    \text {Chart Encoder} &:z \rightarrow FC_{150} \rightarrow FC_{d} \rightarrow z_{\alpha} \\
    \text{Chart Decoder}&:z_{\alpha} \rightarrow FC_{150} \rightarrow  FC_{150} \rightarrow  FC_{150} \rightarrow FC_{n} \rightarrow  y_{\alpha} \\
    \text{Chart Prediction}&: x \rightarrow FC_{150} \rightarrow  FC_{N} \rightarrow softmax \rightarrow p
  \end{split}
\end{equation}

\textbf{Conv CAE}
\begin{equation}\label{eqn:model.cae2}
  \begin{split}
    \text{Initial Encoder}&: x \rightarrow FC_{150} \rightarrow  FC_{150} \rightarrow  FC_{625} \rightarrow z  \\
    \text {Chart Encoder} &:z\rightarrow Conv_{3,3,1,8} \rightarrow  Conv_{3,3,8,8} \rightarrow  Conv_{3,3,8,16} \rightarrow z \\
    \text{Chart Decoder}&:z_{\alpha}\rightarrow Conv_{3,3,16,8} \rightarrow  Conv_{3,3,8,8} \rightarrow  Conv_{3,3,8,1} \rightarrow FC_{n} \rightarrow  y_{\alpha} \\
    \text{Chart Prediction}&: z \rightarrow FC_{250} \rightarrow  FC_{10} \rightarrow softmax \rightarrow p
  \end{split}
\end{equation}
\end{figure*} 
 
\end{document}